\documentclass{article}
\usepackage[nonatbib, preprint, logo]{cig}
\usepackage[utf8]{inputenc} % allow utf-8 input
\usepackage[T1]{fontenc}    % use 8-bit T1 fonts
\usepackage{hyperref}
\usepackage{xcolor}
\usepackage{booktabs}
\usepackage{arydshln}
\usepackage{balance}
\usepackage{multirow}
\usepackage{multicol}
\usepackage{algorithm}
\usepackage{algpseudocode}
\usepackage{enumitem}
\definecolor{linkblue}{RGB}{0,0,128} 
\hypersetup{
    colorlinks=true,
    linkcolor=black,   % 
    citecolor=linkblue,   % citations
    urlcolor=linkblue,    % URLs
    filecolor=linkblue
}

\usepackage{wrapfig}
\usepackage{soul}
\definecolor{mellowgreen}{RGB}{225, 242, 225}
\definecolor{mellowblue}{RGB}{225, 255, 248}
\definecolor{mellowyellow}{RGB}{255, 248, 210}
\definecolor{mellowred}{RGB}{250, 225, 225}
\newcommand{\bluehl}[1]{{\sethlcolor{mellowblue}\hl{#1}}}
\newcommand{\redhl}[1]{{\sethlcolor{mellowred}\hl{#1}}}
\algtext*{EndFor}
\usepackage{amsmath,amsfonts,bm, xcolor,mathtools}
\usepackage{amsthm}
\usepackage{soul}
\DeclareMathOperator{\tr}{tr}

\def\eqref#1{equation~\ref{#1}}
\def\1{\bm{1}}

\def\zerobm{{\bm{0}}}

\def\vbm{{\bm{v}}}
\def\ebm{{\bm{e}}}

\def\\px{{\bm{x}}}
\def\ubm{{\bm{u}}}
\def\zbm{{\bm{z}}}
\def\ybm{{\bm{y}}}

\def\zbm{{\bm{z}}}
\def\sbm{{\bm{s}}}

\def\nbm{{\bm{n}}}
\def\mbm{{\bm{m}}}

\def\xbm{{\bm{x}}}

\def\kbm{{\bm{k}}}
\def\epsilonbm{{\bm{\epsilon}}}

\def\Mbm{{\bm{M}}}
\def\Ubm{{\bm{U}}}
\def\Vbm{{\bm{V}}}

\def\Abm{{\bm{A}}}
\def\Sbm{{\bm{S}}}

\def\Hbm{{\mathbf{H}}}

\def\Pbm{{\bm{P}}}
\def\Fbm{{\bm{F}}}
\def\Ibm{{\bm{I}}}

\def\\px{{\bm{X}}}

\def\Vbm{{\bm{V}}}
\def\Wbm{{\bm{W}}}

\def\Sigmabm{{\bm{\Sigma}}}

\def\Ncal{{\mathcal{N}}}

\def\Lcal{{\mathcal{L}}}

\def\Ucal{{\mathcal{U}}}

\def\xbmhat{{\widehat{\bm{x}}}}

\theoremstyle{plain}

\newtheorem{proposition}{Proposition}
\newtheorem{lemma}{Lemma}

\theoremstyle{definition}

\theoremstyle{remark}

\def\thetabm{{\bm{\theta}}}

\DeclareMathAlphabet{\mathsfit}{\encodingdefault}{\sfdefault}{m}{sl}
\SetMathAlphabet{\mathsfit}{bold}{\encodingdefault}{\sfdefault}{bx}{n}

\newcommand{\xworkprior}{\epsbm}
\newcommand{\E}{\mathbb{E}}

\newcommand{\R}{\mathbb{R}}

\usepackage{booktabs}
\usepackage{multirow}
\usepackage{graphicx}
\usepackage{enumitem}
\usepackage{algorithm}
\usepackage{algpseudocode}
\IfFileExists{xspace.sty}{\usepackage{xspace}}{\newcommand{\xspace}{}}

\definecolor{lightgreen}{rgb}{.9,1,.9}
\definecolor{trolleygrey}{rgb}{0.5, 0.5, 0.5}
\definecolor{BrickRed}{rgb}{0.6,0,0}
\definecolor{RoyalBlue}{rgb}{0,0,0.8}
\definecolor{Tdgreen}{rgb}{0,0.4,0.7}
\definecolor{pinegreen}{rgb}{0.0, 0.47, 0.44}
\definecolor{cornellred}{rgb}{0.7, 0.11, 0.11}
\definecolor{cadmiumgreen}{rgb}{0.0, 0.42, 0.24}
\definecolor{spirodiscoball}{rgb}{0.06, 0.75, 0.99}
\definecolor{mylightblue}{rgb}{0.85, 0.90, 0.94}
\definecolor{maroon}{cmyk}{0,0.87,0.68,0.32}
\definecolor{mydarkblue}{RGB}{0,0,128}  % navy‑ish

\def\defn{\,:=\,}
\def\Kbm{{\bm{K}}}

\def\epsbm{{\bm{\epsilon}}}

\DeclareMathOperator{\Cov}{Cov}
\DeclareMathOperator{\Var}{Var}
\DeclareMathOperator{\diag}{diag}
\DeclareMathOperator{\rank}{rank}
\DeclareMathOperator*{\nullsp}{null}

\DeclareMathOperator{\range}{range}

\newcommand{\sg}[1]{\mathrm{sg}\!\left[#1\right]}
\newcommand{\pinv}[1]{#1^{\dagger}}

\newcommand{\norm}[1]{\left\lVert #1 \right\rVert}
\newcommand{\abs}[1]{\left\lvert #1 \right\rvert}

\newcommand{\given}{\,\vert\,}

\newcommand{\myv}{\mbm_{\ybm}}                % posterior mean of the source
\newcommand{\sn}{\sigma_{\mathrm{n}}}         % measurement noise level
\newcommand{\utheta}[2]{\ubm^{\thetabm}_{#1,#2}}

\newif\ifshowgaps
\showgapstrue

  {\ifshowgaps
     \par\medskip\noindent
     \begin{minipage}{\linewidth}%
       \color{cornellred}%
       \hrule height 0.6pt\relax
       \par\vspace{3pt}\noindent
       \textbf{MISSING --- #1}\par\vspace{2pt}\noindent
       \small\itshape
   \else
     \setbox0\vbox\bgroup
   \fi}%
  {\ifshowgaps
       \par\vspace{3pt}\hrule height 0.6pt\relax
     \end{minipage}\par\medskip
   \else
     \egroup
   \fi}

\newcommand{\figorbox}[3]{% {file}{width}{caption-if-missing}
  \IfFileExists{#1}{\includegraphics[width=#2]{#1}}%
  {\fbox{\begin{minipage}[c][0.22\textheight][c]{#2}\centering
     \color{trolleygrey}\small\texttt{\detokenize{#1}}\\[4pt]\itshape #3
   \end{minipage}}}}

\def\dd{{\mathrm{d}}}

\usepackage{xspace}

\newcommand{\methodabbrev}{\ensuremath{\text{SNaP}}\xspace}
\title{\methodabbrev: One-Step Posterior Sampling for\newline Noisy Inverse Problems}

\vspace{5em}
\author{%
\normalsize Shirin Shoushtari$^{1*}$ \quad  
Edward P.~Chandler$^{2*}$ \quad Xiao Shi$^2$ \quad Ulugbek S.~Kamilov$^2$\\[0.7em]
\small \textnormal{$^1$WashU, $^2$UW-Madison}\\[0.5em]
\footnotesize \texttt{s.shirin@wustl.edu, epchandler@wisc.edu, xiao.shi@wisc.edu, kamilov@wisc.edu}\\
\footnotesize \texttt{$^*$Equal contribution}}

\begin{document}
\maketitle

\begin{abstract}
Diffusion and flow-matching models can produce high-quality posterior samples for inverse problems, but typically require tens to thousands of network evaluations per draw. MeanFlow enables one-step generation, yet applying it to inverse problems leaves no intermediate steps at which to enforce measurement consistency. 
We introduce \methodabbrev, a \emph{one-step} MeanFlow posterior sampler for linear inverse problems with Gaussian noise.
Its central innovation is a measurement-adapted source: a Gaussian distribution whose mean and anisotropic covariance are determined by the measurement operator, observation, and noise level.
The source anchors well-measured directions while preserving variation where the measurements are weak or uninformative. 
We show that the exact conditional flow transports this source to the true posterior.
Across natural-image restoration and multi-coil MRI, \methodabbrev produces diverse, high-quality samples with one network evaluation per draw, 30 to 2250$\times$ faster than iterative samplers.
\end{abstract}

\section{Introduction}
\label{sec:intro}

Many imaging tasks require recovering an unknown image $\xbm_1$ from noisy linear measurements $\ybm=\Abm\xbm_1+\nbm$, where $\Abm$ models the imaging system and $\nbm\sim\Ncal(\zerobm,\sn^2\Ibm)$ denotes additive Gaussian noise. 
This formulation encompasses magnetic resonance imaging (MRI), computed tomography (CT), microscopy, astronomical imaging, and remote sensing~\cite{bertero2021introduction, mccann2017convolutional, ongie2020deep}.
Because $\Abm$ is often ill-conditioned or rank-deficient, the measurements do not uniquely determine $\xbm_1$. Therefore, we need prior information on images for a reliable recovery. 
Classical methods impose handcrafted regularizers such as total variation \cite{rudin1992nonlinear}. 
More recently, deep generative models, including diffusion models \cite{ho2020denoising,song2020score} and flow-matching models \cite{lipman2023flow,liu2023flow,albergo2025stochastic}, have learned rich image distributions that can serve as priors for inverse problems \cite{daras2024survey}.

Most generative inverse solvers use an unconditional generative model as an image prior and impose measurement consistency repeatedly during an iterative sampling trajectory. One family modifies the generative dynamics using data-fidelity gradients or pseudoinverse corrections \cite{chung2023diffusion,song2023pseudoinverseguided,kawar2022denoising,wang2023zeroshot,pokle2024training,benhamu2024dflow}. Another alternates generative updates with explicit data-consistency steps, following the plug-and-play paradigm \cite{zhu2023diffpir,chung2024decomposed,zhang2025improving,martin2025pnpflow,pourya2026flower}. Although these corrections improve agreement with the measurement $\ybm$, they require many sequential network evaluations and therefore make posterior sampling slow. 
Depending on the method, generating a single sample may require tens to thousands of network evaluations (NFEs).
In our CelebA Gaussian-deblurring experiment, for example, DPS~\cite{chung2023diffusion} requires $1000$ NFEs and $27$ seconds to produce one ($128 \times 128$) reconstruction (Figure~\ref{fig:teaser}). 

Consistency models \cite{song2023consistency,boffi2026build}, Shortcut Models \cite{frans2025shortcut}, and MeanFlow\cite{geng2025meanflow} enable generation with one network evaluation, avoiding the cost of integrating an iterative sampling trajectory. This efficiency is particularly attractive for posterior sampling, where generating a single reconstruction can require tens to thousands of network evaluations. However, collapsing the trajectory into one step also eliminates the intermediate states for enforcing measurement consistency. A direct extension is to condition MeanFlow on the measurement while retaining the standard isotropic source. Recent studies of one-step generation have reported mean-seeking bias in MeanFlow and averaging-induced diversity degradation in class-conditional flow models~\cite{he2026stabilizing,lin2026subflow}. We observe the same failure mode in inverse problems: despite conditioning on the measurement, a MeanFlow with an isotropic source produces nearly identical reconstructions across draws, so averaging offers no improvement (Table~\ref{tab:ablsource}). This motivates shaping the one-step transport to reflect the geometry of the forward model.

\begin{figure}[t!]
\centering
\includegraphics[width=\textwidth]{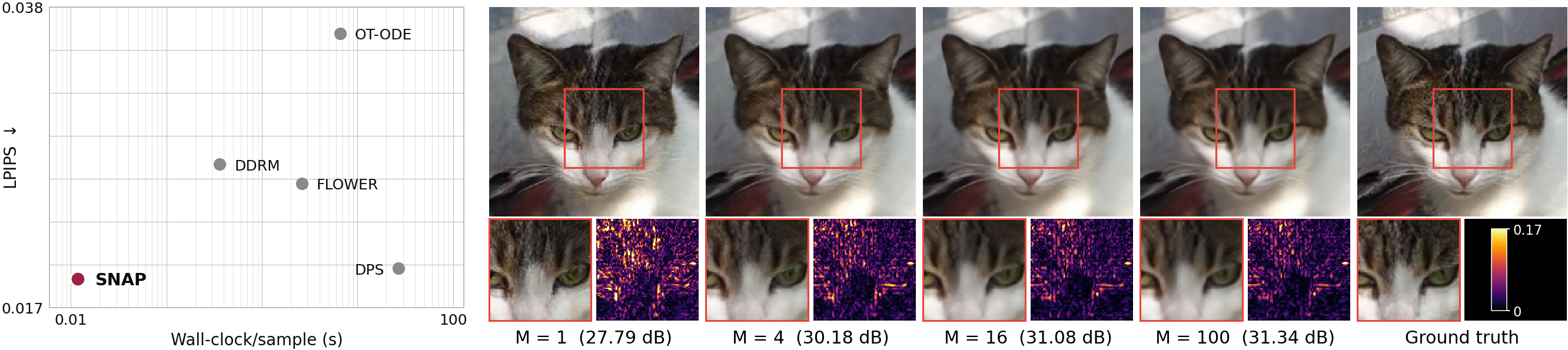}
\caption{\emph{Left:} LPIPS versus wall-clock time per sample for CelebA Gaussian deblurring. \methodabbrev achieves lowest LPIPS at a fraction of the cost of iterative samplers.
\emph{Right:} $4\times$ super-resolution on AFHQ-Cat. Columns show one \methodabbrev draw, averages of $M=4$, $16$, and $100$ draws, and the ground truth; zoomed crops, error maps, and PSNR are shown below. Each draw requires one network evaluation. Averaging reduces pixel error and improves PSNR while smoothing fine detail.}
\label{fig:teaser}
\end{figure}

We propose \methodabbrev, a one-step sampler for \emph{noisy} linear inverse problems. Its key innovation is a \emph{measurement-adapted} Gaussian source that builds the forward model and noise level into the starting point of the transport. The source anchors well-measured directions while retaining stochastic variation in weakly measured and null-space directions. Samples from this source can be drawn efficiently, and a single network evaluation maps each draw to a reconstruction.

Our contributions are: \textbf{(a)} We extend one-step MeanFlow sampling to noisy linear inverse problems by constructing a measurement-adapted transport aligned with the geometry of the likelihood. \textbf{(b)} We derive the source center by minimizing its expected squared displacement to posterior samples among linear estimators. The resulting construction uses a Tikhonov estimate to anchor the transport. \textbf{(c)} Across natural-image restoration and multi-coil MRI, \methodabbrev produces diverse, high-quality reconstructions with one network evaluation per draw. Its samples achieve near-unit calibration ratios on the tested CelebA tasks, while sampling $30$ to $2250\times$ faster than iterative posterior samplers.

\section{Background}
\label{sec:background}
  
\textbf{Inverse Problems.} We consider a linear measurement operator $\Abm\in\R^{m\times n}$ with additive white Gaussian noise,
\begin{equation}
\ybm \;=\; \Abm\xbm_1 + \nbm, \qquad \nbm\sim\Ncal(\zerobm,\sn^{2}\Ibm).
\label{eq:fwd}
\end{equation}
The goal is to infer the unknown image $\xbm_1\in\R^n$ from the measurement $\ybm\in\R^m$; in posterior sampling, this means drawing samples from $p(\xbm_1\mid\ybm)$. 
We write the singular value decomposition of $\Abm$ as $\Abm=\Ubm\Sigmabm\Vbm^{\top}$ and let $\rho\defn\rank(\Abm)$. 
We order the singular values so that $s_1\ge\cdots\ge s_\rho>0$ and set $s_i=0$ for $\rho<i\le n$.
The corresponding right singular vectors are the columns of $\Vbm$.
We denote the pseudoinverse  by $\pinv{\Abm}$ and the orthogonal projection onto $\nullsp(\Abm)$ by
$\Pbm\defn\Ibm-\pinv{\Abm}\Abm$.

\textbf{Flow Matching.} Flow matching \cite{lipman2023flow,liu2023flow,albergo2025stochastic} learns a velocity field that transports a source distribution $p_0$ to a data distribution $p_1$. We index time so that $t=0$ corresponds to the source and $t=1$ to the data. The flow $\psi_t$ is defined by the ODE $\tfrac{\dd}{\dd t}\psi_t(\xbm)=\vbm_t(\psi_t(\xbm))$ for $t\in(0,1)$, and it maps samples from $p_0$ to samples from $p_1$. 
For the exact marginal velocity field, the terminal map satisfies $(\psi_1)_{\#}p_0=p_1$.
This marginal velocity is generally not available in closed form, so flow matching instead regresses a sample-conditional velocity.
Given a coupling of endpoints $(\xbm_0,\xbm_1)$ with $\xbm_0\sim p_0$ and $\xbm_1\sim p_1$, the straight-line path
\begin{equation}
\zbm_t=(1-t)\,\xbm_0+t\,\xbm_1,
\qquad
\vbm=\xbm_1-\xbm_0,
\label{eq:path}
\end{equation}
has constant conditional velocity $\vbm$.
A network $\vbm^{\thetabm}$ is trained by minimizing $\E\,\|\vbm^{\thetabm}(\zbm_t,t)-(\xbm_1-\xbm_0)\|_2^{2}$. The expected parameter gradient of this conditional regression objective coincides with that of the marginal flow-matching objective \cite{lipman2023flow}.
The source is commonly chosen as $\Ncal(\zerobm,\Ibm)$ because it is easy to sample; its random initial conditions provide the variability propagated by the deterministic ODE.

\textbf{Mean Flows.} Sampling a flow-matching model requires integrating the velocity $\vbm$, which takes many network evaluations.
MeanFlow \cite{geng2025meanflow,boffi2026build} avoids this integration by modeling the average velocity $\ubm$ over an interval $[r,t]$, defined as
\begin{equation}
\ubm(\zbm_r,r,t) \;\defn\; \frac{1}{t-r}\int_r^{t}\vbm(\zbm_s,s)\,\dd s.
\label{eq:avgvel}
\end{equation}
Once $\ubm$ is learned, the entire flow reduces to a single evaluation, $\zbm_1=\zbm_0+\ubm(\zbm_0,0,1)$.
Learning $\ubm$ directly from \eqref{eq:avgvel} is intractable, since it requires the integral of $\vbm$.
Instead, rewriting \eqref{eq:avgvel} as $(t-r)\,\ubm=\int_r^t\vbm\,\dd s$ and differentiating with respect to $r$, with $t$ fixed, gives the MeanFlow identity
\begin{equation}
\ubm(\zbm_r,r,t) \;=\; \vbm(\zbm_r,r) \;+\; (t-r)\,\frac{\dd}{\dd r}\ubm(\zbm_r,r,t).
\label{eq:mfidentity}
\end{equation}
This differential identity provides the training relation used by MeanFlow.
It also connects MeanFlow to consistency and shortcut models, which learn finite-time flow maps through related consistency relations \cite{song2023consistency,boffi2026build,frans2025shortcut}.

\textbf{Measurement-dependent sources.} Standard MeanFlow formulations use a fixed, measurement-independent source. Applying MeanFlow to inverse problems therefore requires incorporating the measurement model into the one-step transport.
NullFlow \cite{shi2026nullflow} is a one-step posterior sampler for noiseless linear inverse problems that uses a measurement-dependent source:
\begin{equation}
\xbm_0 \;=\; \pinv{\Abm}\ybm + \Pbm\xworkprior,
\qquad \xworkprior\sim\Ncal(\zerobm,\Ibm).
\label{eq:nf-source}
\end{equation}
For a noiseless measurement, the source satisfies $\Abm\xbm_0=\ybm$, and restricting the velocity to $\nullsp(\Abm)$ keeps the entire trajectory in the measurement-consistent affine set
$\{\xbm:\Abm\xbm=\ybm\}$;
however, with noisy measurements, the posterior is no longer supported on this set. Moreover, the anchor \(\pinv{\Abm}\ybm\) contains the row-space noise term \(\pinv{\Abm}\nbm\), which a velocity restricted to \(\nullsp(\Abm)\) cannot alter.
\section{Method}
\label{sec:method}

\methodabbrev builds the geometry of the forward model into a one-step posterior transport. Its measurement-dependent source is designed to keep the path to the posterior short while preserving the randomness needed for sampling. It concentrates draws in well-measured directions and retains variation in weakly measured and null-space directions. A conditional MeanFlow then learns to transport these draws toward $p(\xbm_1\given\ybm)$, producing an approximate posterior sample from one source draw and one network evaluation. We now derive the source, characterize the exact transport, and describe how the model is trained and sampled.

%shapes the one-step transport according to the geometry of the forward model. We design a measurement-dependent source with two goals: (i) reducing the expected squared displacement to posterior samples and (ii) preserving the stochastic variation required for posterior sampling. The source assigns lower variance to directions that are strongly constrained by the measurement and higher variance to weakly measured and null-space directions. We then train a conditional MeanFlow to approximate the resulting transport to $p(\xbm_1\given\ybm)$. At inference, one source draw and one network evaluation produce an approximate posterior sample. We next derive the source construction, characterize the target of the exact transport, and describe training and sampling.

Along the straight-line path between a source sample $\xbm_0$ and a posterior sample $\xbm_1$, the sample-conditional velocity is $\vbm=\xbm_1-\xbm_0$, and therefore
$\|\vbm\|_2^2=\|\xbm_1-\xbm_0\|_2^2$ \cite{liu2023flow}. We keep the path and the conditional-independence structure of the endpoint coupling fixed and use the source distribution as our design variable. This differs from coupling-based methods that modify the endpoint pairing \cite{pooladian2023multisample,tong2024improving,albergostochastic}. Unlike approaches that prescribe or learn data-dependent sources \cite{lee2022priorgrad,mammadov2026variational}, we derive the source from the measurement model using an explicit design criterion.

Although the posterior is unavailable in closed form, the likelihood provides a tractable geometry for images compatible with the measurement. Under additive Gaussian noise,
\[
p(\ybm\given\xbm)\propto
\exp\left(-\frac{\|\Abm\xbm-\ybm\|_2^2}{2\sn^2}\right),
\]
and is maximized on the least-squares solution set
\[
\mathcal S_{\ybm}\defn\arg\min_{\xbm}\|\Abm\xbm-\ybm\|_2^2 .
\]
Its superlevel sets are convex tubes around $\mathcal S_{\ybm}$. Along a measured right singular vector $\vbm_i$, their width scales as $\sn/s_i$: well-measured directions are tightly constrained, weakly measured directions are less constrained, and null-space directions are unconstrained (Fig.~\ref{fig:tube}a). The noise level sets the scale because $\Abm\xbm_1-\ybm=-\nbm$. Since these tubes are convex, a straight path between two points in the same tube stays inside it. This geometry motivates a source with little variation in well-measured directions and greater variation in weakly measured and null-space directions (Fig.~\ref{fig:tube}c), in contrast to an isotropic source (Fig.~\ref{fig:tube}b). We construct this source from the observed $\ybm$, known $\Abm$ and $\sn$, and auxiliary randomness. We consider sources with an explicit, linear measurement-dependent center:
\begin{equation}\label{eq:source-family}
\xbm_0=\bm K\ybm+\bm\xi,
\end{equation}
where $\bm K$ is fixed for a given measurement model and $\bm\xi$ has mean $\zerobm$ and covariance $\bm\Sigma$, independently of $(\xbm_1,\nbm)$. This choice admits an exact analysis and avoids training a separate source model \cite{mammadov2026variational}. The family contains the standard isotropic source when $\bm K=\zerobm$ and $\bm\Sigma=\Ibm$, and sources centered at the least-squares solution when $\bm K=\pinv{\Abm}$.

Because the one-step map is deterministic given $(\xbm_0,\ybm)$, all sampling randomness originates from the source.
For the family in \eqref{eq:source-family}, its expected squared displacement decomposes as
\begin{equation}
\E\!\left[\|\xbm_1-\xbm_0\|_2^2\right]
=
\tr\!\left(\bm R(\bm K)\right)
+
\tr(\bm\Sigma),
\qquad
\bm R(\bm K)
\defn
\E\!\left[
(\xbm_1-\bm K\ybm)
(\xbm_1-\bm K\ybm)^\top
\right].
\label{eq:displacement-decomposition}
\end{equation}
For fixed $\bm\Sigma$, minimizing the displacement reduces to minimizing the center error $\tr(\bm R(\bm K))$ over $\bm K$.
Minimizing over $\bm\Sigma$ would instead give $\bm\Sigma=\zerobm$ and eliminate source variability, so we optimize the center and choose the spread separately.

\begin{figure}[t]
\centering
\includegraphics[width=\textwidth]{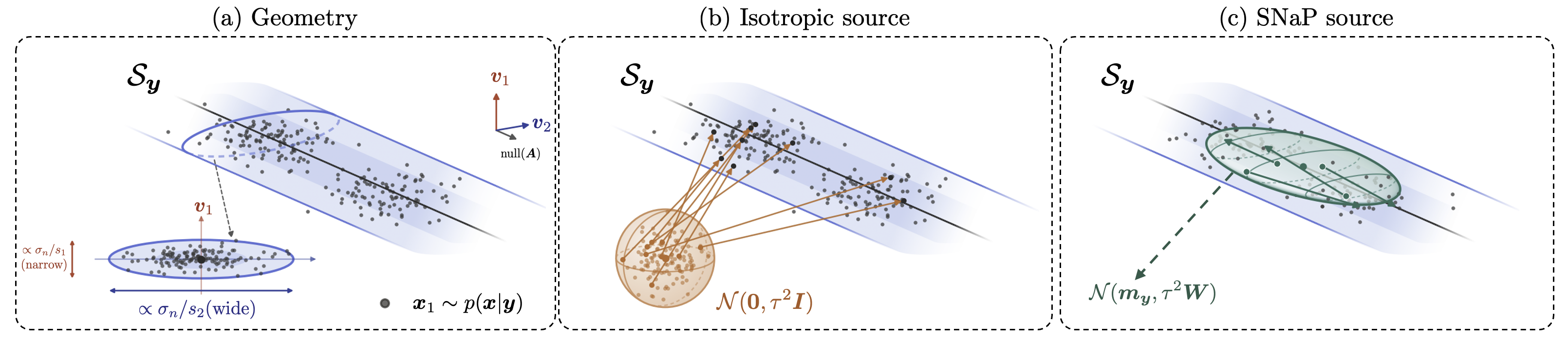}
\vspace{-1em}
\caption{Geometry-aware one-step transport (schematic).
\textbf{Left:} Likelihood superlevel sets form tubes around $\mathcal S_{\ybm}$: narrow in well-measured directions, wider in weakly measured directions, and unbounded along $\nullsp(\Abm)$. Gray points represent posterior samples.
\textbf{Middle:} An isotropic source spreads samples across directions regardless of measurement strength, leading to large source-to-target displacements.
\textbf{Right:} The \methodabbrev source concentrates samples in well-measured directions while retaining variation in weakly measured and null-space directions, shortening displacements.}
\vspace{-1.5em}
\label{fig:tube}
\end{figure}

\begin{proposition}
\label{prop:optimal-source}
Let $\xbm_1$ have mean $\zerobm$ and covariance $\bm C$, not necessarily Gaussian, and let $\ybm=\Abm\xbm_1+\nbm$ with $\nbm\sim\Ncal(\zerobm,\sn^2\Ibm)$  and $\sn>0$.
For any fixed $\bm\Sigma\succeq\zerobm$, the expected squared displacement of the source family in \eqref{eq:source-family} is uniquely minimized over $\bm K$ by
\begin{equation}
\bm K_{\bm C}
\defn
\bm C\Abm^\top
\bigl(\Abm\bm C\Abm^\top+\sn^2\Ibm\bigr)^{-1}.
\label{eq:optimal-source}
\end{equation}
$\bm K_{\bm C}\ybm$ is thus the linear minimum-mean-squared-error estimator of $\xbm_1$ from $\ybm$,
with error covariance
\[
\bm\Sigma_{\bm C}
\defn
\bm R(\bm K_{\bm C})
=
\bm C-\bm K_{\bm C}\Abm\bm C.
\]

For the isotropic working covariance $\bm C=\tau^2\Ibm$, with $\tau>0$, the optimal center is the Tikhonov estimate 
$\myv
=
\Abm^\top
\bigl(\Abm\Abm^\top+\lambda\Ibm\bigr)^{-1}\ybm$, with $\lambda=\sn^2/\tau^2$,
and its error covariance is $\tau^2\Wbm$, where
$
\Wbm
=
\Ibm-
\Abm^\top
\bigl(\Abm\Abm^\top+\lambda\Ibm\bigr)^{-1}\Abm$.
\end{proposition}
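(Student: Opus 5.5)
The plan is to reduce everything to a strictly convex quadratic in $\bm K$, using only second moments. First, justify the decomposition in \eqref{eq:displacement-decomposition}. Write $\xbm_1-\xbm_0=(\xbm_1-\bm K\ybm)-\bm\xi$. Because $\bm\xi$ has mean zero and is independent of $(\xbm_1,\nbm)$, and $\ybm$ is a function of $(\xbm_1,\nbm)$, the cross term has zero expectation. This gives $\tr(\bm R(\bm K))+\tr(\bm\Sigma)$. Since $\tr(\bm\Sigma)$ does not depend on $\bm K$, it suffices to minimize $J(\bm K)\defn\tr(\bm R(\bm K))$, and the claim holds for every fixed $\bm\Sigma\succeq\zerobm$.

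Next, expand $\bm R(\bm K)$ using the second moments. We have $\E[\xbm_1\xbm_1^\top]=\bm C$. We also have $\E[\xbm_1\ybm^\top]=\bm C\Abm^\top$, since $\nbm$ is zero-mean and independent of $\xbm_1$. Finally, $\E[\ybm\ybm^\top]=\Abm\bm C\Abm^\top+\sn^2\Ibm\defn\bm G$. Only these moments enter, which is why non-Gaussianity of $\xbm_1$ does not matter. This gives
\[
J(\bm K)=\tr(\bm C)-2\tr(\bm K\Abm\bm C)+\tr(\bm K\bm G\bm K^\top).
\]
Because $\sn>0$, $\bm G\succ\zerobm$. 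Completing the square then gives
\[
J(\bm K)=\tr\bigl((\bm K-\bm K_{\bm C})\bm G(\bm K-\bm K_{\bm C})^\top\bigr)+\tr(\bm C-\bm K_{\bm C}\Abm\bm C),
\qquad \bm K_{\bm C}=\bm C\Abm^\top\bm G^{-1}.
\]
The first term is nonnegative and vanishes only when $\bm K=\bm K_{\bm C}$, again because $\bm G\succ\zerobm$. This establishes uniqueness and identifies $\bm K_{\bm C}\ybm$ as the LMMSE estimator.

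To obtain the error covariance at the matrix level, and not just its trace, expand
\[
\bm R(\bm K_{\bm C})=\bm C-\bm K_{\bm C}\Abm\bm C-\bm C\Abm^\top\bm K_{\bm C}^\top+\bm K_{\bm C}\bm G\bm K_{\bm C}^\top.
\]
Since $\bm K_{\bm C}\bm G=\bm C\Abm^\top$, the last term equals $\bm C\Abm^\top\bm K_{\bm C}^\top$ and cancels the third. This leaves $\bm\Sigma_{\bm C}=\bm C-\bm K_{\bm C}\Abm\bm C$.

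For the isotropic case, substitute $\bm C=\tau^2\Ibm$. Factoring $\tau^2$ out of $\bm G=\tau^2(\Abm\Abm^\top+\lambda\Ibm)$ with $\lambda=\sn^2/\tau^2$ gives $\bm K_{\bm C}=\Abm^\top(\Abm\Abm^\top+\lambda\Ibm)^{-1}$. Hence $\bm K_{\bm C}\ybm=\myv$, and the error covariance is $\tau^2\Ibm-\tau^2\Abm^\top(\Abm\Abm^\top+\lambda\Ibm)^{-1}\Abm=\tau^2\Wbm$. The step needing the most care is the completing-the-square and uniqueness argument, and in particular invoking $\sn>0$ to get $\bm G$ invertible even when $\Abm\bm C\Abm^\top$ is singular. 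The rest is routine algebra.
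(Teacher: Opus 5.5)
Your proposal is correct and follows essentially the same route as the paper's proof: the same decomposition of the displacement with a vanishing cross term, the same expansion of $\bm R(\bm K)$ through the moments $\bm C$, $\bm C\Abm^\top$ and $\bm G=\Abm\bm C\Abm^\top+\sn^2\Ibm\succ\zerobm$, and completing the square, with $\sn>0$ giving uniqueness. The only cosmetic difference is that the paper completes the square at the matrix level, $\bm R(\bm K)=\bm R(\bm K_{\bm C})+(\bm K-\bm K_{\bm C})\bm G(\bm K-\bm K_{\bm C})^\top$, which gives $\bm\Sigma_{\bm C}$ and direction-wise optimality at once; you instead complete the square under the trace and then recover $\bm\Sigma_{\bm C}$ by a separate expansion.
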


The proof is given in App.~\ref{app:proof-optimal-source}. Minimizing the displacement over $\bm\Sigma$ would give $\bm\Sigma=\zerobm$ and collapse the source.
For a chosen working covariance $\bm C$, we instead set $\bm\Sigma=\bm\Sigma_{\bm C}$, matching the error covariance of the optimal linear estimator. Under the corresponding Gaussian working model, this is also the posterior covariance.

Because the true image covariance is unknown, we instantiate Proposition~\ref{prop:optimal-source} with the isotropic working covariance $\bm C=\tau^2\Ibm$, where $\tau$ controls the source scale. Under this model, $\myv$ is the displacement-minimizing linear center and $\tau^2\Wbm$ is its error covariance. The resulting Tikhonov center regularizes weakly measured directions and avoids amplifying their measurement noise, unlike the least-squares estimate
$\pinv{\Abm}\ybm$. The working model is used only to construct the source; it does not assume that the true image distribution is isotropic or Gaussian. Using this mean and covariance, we define
\begin{equation}
q_\tau(\xbm\given\ybm)
=
\Ncal(\xbm;\myv,\tau^2\Wbm).
\label{eq:working-posterior}
\end{equation}

In the right singular basis of $\Abm$,
\begin{equation}
\Wbm
=
\Vbm\diag(w_i)\Vbm^\top,
\qquad
w_i
=
\frac{\sn^2}{\sn^2+\tau^2s_i^2}.
\label{eq:source-spectrum}
\end{equation}
The source variance along $\vbm_i$ is therefore $\tau^2w_i$.
It is small in well-measured directions, approaches $\sn^2/s_i^2$ when $\tau^2s_i^2\gg\sn^2$, and equals $\tau^2$ on $\nullsp(\Abm)$.
Thus, the source suppresses variation in well-measured directions while retaining it where the measurement is weak or uninformative.
As $\sn\to0$, the isotropic working source converges to $q_\tau(\cdot\given\ybm) \to \Ncal(\pinv{\Abm}\ybm,\tau^2\Pbm)$,
which recovers the NullFlow source when $\tau=1$.

Replacing the isotropic source by the working posterior changes the transport, but not its target.
\begin{proposition}
\label{prop:invariance}
Let $\sn>0$, and let $\xbm_0\sim q_\tau(\cdot\given\ybm)$ and $\xbm_1\sim p(\cdot\given\ybm)$ be conditionally independent given $\ybm$ and joined by the straight-line path in \eqref{eq:path}.
Assume that $p(\cdot\given\ybm)$ has finite second moments, and that the flow of the marginal velocity field has a terminal limit: writing $\psi_{0\to t}$ for its flow map from time $0$ to time $t$, the limit $\psi_{0\to1}\defn\lim_{t\to1}\psi_{0\to t}$ exists $q_\tau(\cdot\given\ybm)$-almost everywhere.
Then the one-step map
\begin{equation}
T_{\ybm}(\xbm)\defn\xbm+\ubm(\xbm,0,1\given\ybm)
\label{eq:onestep-map}
\end{equation}
satisfies
\begin{equation}
(T_{\ybm})_{\#}\,q_{\tau}(\cdot\given\ybm)=p(\cdot\given\ybm)
\qquad\text{for every }\tau>0.
\label{eq:invariance}
\end{equation}
\end{proposition}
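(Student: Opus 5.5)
We work throughout with $\ybm$ and $\tau>0$ fixed and treat everything as a problem in $\R^n$. The source is $p_0=q_\tau(\cdot\given\ybm)$, the target is $p_1=p(\cdot\given\ybm)$, and the coupling is the independent product $p_0\otimes p_1$.

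\textbf{Marginal path and velocity.} Let $p_t$ be the law of $\zbm_t=(1-t)\xbm_0+t\xbm_1$. Define the marginal velocity by conditional expectation,
\[
\vbm_t(\zbm)\defn\E[\xbm_1-\xbm_0\given\zbm_t=\zbm,\ybm].
\]
For $t<1$, $\zbm_t$ is the sum of $t\xbm_1$ and an independent Gaussian $(1-t)\xbm_0$. Its covariance $(1-t)^2\tau^2\Wbm$ is positive definite, since $w_i\ge \sn^2/(\sn^2+\tau^2s_1^2)>0$ when $\sn>0$, and it is equal to $\tau^2$ on $\nullsp(\Abm)$. So $p_t$ has a smooth, strictly positive density, namely a Gaussian convolution of $p_1$.

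The standard flow-matching argument (take the time derivative of $\E[\varphi(\zbm_t)]$ and condition on $\zbm_t$) shows that $(p_t,\vbm_t)$ solves the continuity equation $\partial_t p_t+\nabla\cdot(p_t\vbm_t)=0$ weakly on $[0,1)$. Finite second moments of $p_1$ guarantee that $\E\|\xbm_1-\xbm_0\|^2<\infty$, so $\vbm_t\in L^2(p_t)$ and the derivative can be exchanged with the expectation.

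Smoothness and positivity of the Gaussian-convolved density also make $\vbm_t$ locally Lipschitz on $[0,1-\delta]$ for every $\delta>0$. We then invoke uniqueness for the continuity equation with such fields (the Ambrosio/DiPerna--Lions superposition principle, or plain Cauchy--Lipschitz plus the transport of densities). This gives $(\psi_{0\to t})_\#p_0=p_t$ for all $t<1$.

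\textbf{Passing to $t=1$.} This is the main obstacle: $\vbm_t$ may blow up as $t\to1$, and this is exactly why the terminal-limit assumption is included. We argue in three steps.
\begin{itemize}
\item Since $\zbm_t\to\xbm_1$ in $L^2$, we have $p_t\to p_1$ weakly.
\item By the assumption, $\psi_{0\to t}\to\psi_{0\to1}$ $p_0$-a.e. Almost-sure convergence implies convergence in law, so $(\psi_{0\to t})_\#p_0\to(\psi_{0\to1})_\#p_0$ weakly.
\item Weak limits are unique, so $(\psi_{0\to1})_\#p_0=p_1$.
\end{itemize}

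\textbf{Identifying the one-step map.} By the definition \eqref{eq:avgvel} of $\ubm$ along the marginal flow,
\[
\xbm+\ubm(\xbm,0,t\given\ybm)\cdot t=\psi_{0\to t}(\xbm),
\]
since $\int_0^t\vbm_s(\psi_{0\to s}(\xbm))\,\dd s=\psi_{0\to t}(\xbm)-\xbm$. We interpret $\ubm(\xbm,0,1\given\ybm)$ as the limit as $t\to1$, which exists a.e. by the assumption. This gives $T_{\ybm}=\psi_{0\to1}$ $p_0$-a.e., and therefore $(T_{\ybm})_\#q_\tau=p(\cdot\given\ybm)$.

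\textbf{Role of $\tau$.} The constant $\tau$ entered only through the nondegeneracy of the Gaussian source used above, so the conclusion holds for every $\tau>0$.
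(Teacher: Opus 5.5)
Your proof is correct and follows essentially the same route as the paper's: positive definiteness of $\Wbm$ gives a smooth, positive $p_t$ for $t<1$; local Lipschitz continuity of the marginal velocity gives $(\psi_{0\to t})_{\#}q_\tau(\cdot\given\ybm)=p_t$; $L^2$ convergence of $\zbm_t$ combined with the terminal-limit assumption gives the $t=1$ pushforward; and $T_{\ybm}$ is identified with $\psi_{0\to1}$ through the average-velocity integral. Where you go slightly further, it is only in making explicit two points the paper treats briefly: the integrability $\vbm_t\in L^2(p_t)$ that lets the flow transport $p_0$ for almost every starting point, and reading $\ubm(\xbm,0,1\given\ybm)$ as the limit of $(\psi_{0\to t}(\xbm)-\xbm)/t$.
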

Here $\ubm$ is the exact average velocity of \eqref{eq:avgvel}; the trained network $\ubm^{\thetabm}$ approximates it, so the guarantee holds for \methodabbrev samples only up to this approximation error.
The exact-map result identifies the posterior target of SNaP; the effect
of the source choice on the learned one-step approximation is evaluated
through the controlled comparison with isotropic sources in Table~\ref{tab:ablsource}.
% We have some results that compare transport on sampels for our source vs. Gaussian, Maybe we should add those (I removed those when removing the preivous propostions on expected displacement and energy of intermediate samples) 

\subsection{Training and sampling}
\label{sec:training}
A direct draw from $q_\tau(\cdot\given\ybm)$ would require applying the matrix square root $\Wbm^{1/2}$, which is generally impractical for large imaging operators.
We instead use perturb-and-solve
\cite{papandreou2010gaussian,bardsley2014tro}:
\begin{equation}
\xbm_0=
\epsilonbm
+
\Abm^\top
\bigl(\Abm\Abm^\top+\lambda\Ibm\bigr)^{-1}
\bigl(\ybm-\Abm\epsilonbm-\nbm'\bigr),\quad
\epsilonbm\sim
\Ncal(\zerobm,\tau^2\Ibm),
\quad
\nbm'
\sim
\Ncal(\zerobm,\sn^2\Ibm).
\label{eq:rto}
\end{equation}
Here, $\epsilonbm$ and $\nbm'$ are independent of each other and of $(\xbm_1,\nbm)$.
Lemma~\ref{lem:rto} shows that
$\xbm_0\given\ybm\sim q_\tau(\cdot\given\ybm)$ exactly and that
$\xbm_0$ and $\xbm_1$ are conditionally independent given $\ybm$.
Each source draw requires one solve with
$\Abm\Abm^\top+\lambda\Ibm$; operator-specific implementations are given in App.~\ref{app:sourcesolve}.

\textbf{Training.}
The network parameterizes the conditional average velocity
\[
\utheta{r}{t}
=
\ubm^{\thetabm}(\zbm_r,r,t\given\ybm,\sn).
\]
Because $\ybm$ and $\sn$ remain fixed along the path, the conditional MeanFlow identity gives the target
\begin{equation}
\ubm_{\mathrm{tgt}}
=
\vbm
+
(t-r)\,
\sg{
\bigl(\partial_{\zbm}\utheta{r}{t}\bigr)\vbm
+
\partial_r\utheta{r}{t}
},
\qquad
\vbm=\xbm_1-\xbm_0.
\label{eq:mftarget}
\end{equation}
The derivative term is evaluated using one Jacobian--vector product, and $\sg{\cdot}$ prevents gradients from propagating through the target.
We minimize
\begin{equation}
\Lcal(\thetabm)
=
\E\!\left[
\sg{\omega(\ell)}\,\ell
\right],
\qquad
\ell
\defn
\frac{1}{n}
\left\|
\utheta{r}{t}-\ubm_{\mathrm{tgt}}
\right\|_2^2,
\label{eq:spring-loss}
\end{equation}
using the adaptive weight
$\omega(\ell)=(\ell+c)^{-p}$
from \cite{geng2025meanflow}.
Implementation details and Algorithm~\ref{alg:train} are given in App.~\ref{app:train_imp}.

\textbf{Sampling.}
\label{sec:inference}
At test time, \methodabbrev draws $\xbm_0$ using \eqref{eq:rto} and returns
$\widehat{\xbm}_1
=
\xbm_0
+
\ubm^{\thetabm}
(\xbm_0,0,1\given\ybm,\sn)$.
Each sample requires one source solve and one network evaluation.
Redrawing $(\epsilonbm,\nbm')$ with $\ybm$ fixed produces conditionally independent samples from the learned model.

\section{Experiments}
\label{sec:exp}
 
\textbf{Datasets.} We evaluate on CelebA ($128{\times}128$), AFHQ-Cat ($256{\times}256$), and multi-coil fastMRI brain AXT2 ($320{\times}320$), averaging all metrics over $100$ test measurements. Details are in App.~\ref{app:datasets}.

\textbf{Forward operators.} We adopt the degradation settings of \cite{martin2025pnpflow} and \cite{pourya2026flower} for natural images, with $\sn=0.05$ everywhere except random inpainting, where $\sn=0.01$. Deblurring uses a $61{\times}61$ Gaussian kernel with $\sigma_b=1.0$ on CelebA and $3.0$ on AFHQ-Cat; super-resolution uses stride subsampling at $2\times$ on CelebA and $4\times$ on AFHQ-Cat; random inpainting masks $70\%$ of pixels; box inpainting uses a centered $40{\times}40$ mask on CelebA and $80{\times}80$ on AFHQ-Cat. For MRI, $\Abm=\Mbm\Fbm\Sbm$ with sampling mask $\Mbm$, Fourier transform $\Fbm$ and per-slice ESPIRiT sensitivities $\Sbm$, at $\times4$ and $\times8$ Cartesian acceleration, with noise scaled to $20$ and $30$\,dB input SNR. Closed forms of \eqref{eq:rto} for each operator are given in App.~\ref{app:sourcesolve}.

\textbf{Baselines.} For natural images, we compare against a supervised MSE regressor trained with the \methodabbrev backbone, the plug-and-play solvers PnP-GS \cite{hurault2022gradient} and PnP-Flow \cite{martin2025pnpflow}, and the iterative posterior samplers DDRM \cite{kawar2022denoising}, DiffPIR \cite{zhu2023diffpir}, OT-ODE \cite{pokle2024training}, Flower \cite{pourya2026flower} and DPS \cite{chung2023diffusion}. Flower and PnP-Flow are run at the  recommended setting, where the exact budget of every method is listed in App.~\ref{app:train_imp}.
For MRI, we compare against zero-filled, wavelet-$\ell_1$ and TV reconstructions, the MSE regressor, and the posterior samplers CSGM \cite{jalal2021robust}, DiffPIR, DPS, DAPS \cite{zhang2025improving} and PnP-DM \cite{wu2024pnpdm, zheng2025inversebench}.
Variational Flow Maps (VFM) \cite{mammadov2026variational} also place the measurement in the source, but learn it with an adapter rather than constructing it in closed form, and report results for latent-space flow models, whereas \methodabbrev operates in pixel space; we compare the two source constructions in App.~\ref{app:vfm}.
NullFlow \cite{shi2026nullflow} is not defined for $\sn>0$ (Sec.~\ref{sec:background}), so we compare against it from $\sn=0$ to $0.05$ in App.~\ref{app:nullflow}.

\textbf{Metrics.} We report PSNR, SSIM \cite{wang2004image}, LPIPS \cite{zhang2018unreasonable} and FID \cite{heusel2017gans}.
None of them says whether the spread of the draws is of the right size, so we also report the calibration ratio $\Delta_{\mathrm{cal}}=V/B$, which divides the variance $V$ of the draws by the squared error $B$ of their mean.
For an exact posterior sampler the two are equal, so $\Delta_{\mathrm{cal}}=1$ is ideal: values below $1$ indicate draws collapsed toward a point estimate, and values above $1$ indicate over-dispersed draws.
It is a second-moment summary and does not certify conditional coverage. We give the definition and estimators in App.~\ref{app:metrics}.

\subsection{Main Results}
\label{sec:res_main}
%%%%%%%%%%%%%%% TABLE CelebA %%%%%%%%%%%%%%%
\begin{table}[t]
\centering
\caption{CelebA, $100$ test images. We report single draw ($M{=}1$) and the average of $M$ draws for \methodabbrev. Note that a single \methodabbrev draw is competitive in LPIPS at one network evaluation, while averaging draws attains the best SSIM on every task and the best PSNR on three of the four. 
\textbf{Best} and \textbf{second best} are shown in \bluehl{blue} and \redhl{red}.}
\label{tab:repro_celeba}
\setlength{\tabcolsep}{3.5pt}
\renewcommand{\arraystretch}{1.12}
\resizebox{0.95\textwidth}{!}{
\begin{tabular}{l c *{12}{c}}
\toprule
\multirow{2}{*}{\textbf{Method}} & \multirow{2}{*}{\textbf{NFE}}
& \multicolumn{3}{c}{\textbf{Deblurring}}
& \multicolumn{3}{c}{\textbf{Super-resolution}}
& \multicolumn{3}{c}{\textbf{Random inpainting}}
& \multicolumn{3}{c}{\textbf{Box inpainting}} \\
\cmidrule(lr){3-5}
\cmidrule(lr){6-8}
\cmidrule(lr){9-11}
\cmidrule(lr){12-14}
& & PSNR$\uparrow$ & SSIM$\uparrow$ & LPIPS$\downarrow$
& PSNR$\uparrow$ & SSIM$\uparrow$ & LPIPS$\downarrow$
& PSNR$\uparrow$ & SSIM$\uparrow$ & LPIPS$\downarrow$
& PSNR$\uparrow$ & SSIM$\uparrow$ & LPIPS$\downarrow$ \\
\midrule \\[-3.5ex] \midrule
Degraded & --
& 27.26 & 0.838 & 0.214
& 11.67 & 0.182 & 0.859
& 11.98 & 0.198 & 1.070 
& 22.33 & 0.754 & 0.217\\
MSE regressor & 1
& \redhl{35.74}& 0.953 & 0.029
& \bluehl{34.17}& \redhl{0.945}& 0.033
& 34.42	& 0.955 & 0.022
& -- & -- & -- \\
PnP-GS & 23     
& 33.97 & 0.924 & 0.041 
& 31.23 & 0.890 & 0.065 
& 29.17 & 0.874 & 0.066
& -- & -- & -- \\
DDRM & 20
& 35.02 & 0.946 & {0.027}
& 32.24	& 0.927 & \redhl{0.027}
& 32.40 & 0.942 & 0.029
& -- & -- & -- \\
DiffPIR & 100
& 34.83 & 0.938 & 0.027
& 31.87 &0.892 & 0.030
&  32.45 & 0.924 & 0.024
&30.48 & 0.918 & 0.028 \\
OT-ODE & 180
& 32.96 & 0.920 & 0.029
& 31.34 & 0.903 & \redhl{0.027}
& 28.69 & 0.871 & 0.049 
& 29.37 & 0.919 & 0.038 \\
PnP-Flow5 & 500
& 34.80 & 0.941 & 0.046
& 31.44 & 0.905 & 0.056
& 33.98 & 0.953 & 0.021 
& 31.06 & 0.939 & 0.043 \\
Flower5-OT & 500
& 35.65 & \redhl{0.954}& 0.031
& 33.03 & 0.931 & 0.039
& 33.97 & 0.952 & 0.019
& 31.85 & 0.952 & 0.023 \\
DPS & 1000
& 34.27 & 0.936 & \redhl{0.019}& 32.23	& 0.917 & \bluehl{0.023}& 33.43 & 0.951 & \bluehl{0.013}& 30.26& 0.942& 0.022 \\
\midrule \\[-3.6ex] \midrule
\textsc{\methodabbrev} ($M{=}1$) & 1
& 32.90  & 0.919	& \bluehl{0.018}&  31.27 & 0.903 & \bluehl{0.023}& 31.97 & 0.927 & 0.018
& 30.73 & 0.934 & 0.021 \\
\textsc{\methodabbrev} ($M{=}4$) & 4
& 34.98 & 0.947	& 0.022
&  33.26 &  0.934& \bluehl{0.023}
& 33.97 & 0.951 & \redhl{0.015}& 32.78 & 0.954 & \bluehl{0.018}\\
\textsc{\methodabbrev} ($M{=}16$) & 16
& 35.68  &  \redhl{0.954} & 	0.028
& 33.95 &  0.943& 0.029
& \redhl{34.64}& \redhl{0.957}& 0.018
& \redhl{33.43}& 	\redhl{0.959}& 	\redhl{0.020}\\
\textsc{\methodabbrev} ($M{=}100$) & 100
& \bluehl{35.91}&  \bluehl{0.956}& 	0.030
& \redhl{34.16}& \bluehl{0.946}& 0.031
& \bluehl{34.84}& \bluehl{0.959}& 0.019
& \bluehl{33.69}& \bluehl{0.961}& 0.022\\
\bottomrule
\end{tabular}}\vspace{-1em}
\end{table}
%%%%%%%%%%%%%%% TABLE CelebA %%%%%%%%%%%%%%%
%%%%%%%%%%%%%%% TABLE FID / calibration %%%%%%%%%%%%%%%
\begin{table}[t]
\centering
\caption{Distributional and calibration metrics for CelebA. $\Delta_\mathrm{cal}$ is the calibration ratio of App.~\ref{app:metrics}, for which $1$ is ideal. Note that \methodabbrev attains the best or second-best FID among the posterior-sampling baselines in one network evaluation.  \textbf{Best} and \textbf{second best} are shown in \bluehl{blue} and \redhl{red}.}
\label{tab:fid_celeba}
\setlength{\tabcolsep}{5pt}
\renewcommand{\arraystretch}{0.9}
\resizebox{0.7\textwidth}{!}{
\begin{tabular}{l c *{6}{c}}
\toprule
\multirow{2}{*}{\textbf{Method}} & \multirow{2}{*}{\textbf{NFE}}
& \multicolumn{2}{c}{\textbf{Gaussian deblurring}}
& \multicolumn{2}{c}{\textbf{Super-resolution}} & \multicolumn{2}{c}{\textbf{Box inpainting}} \\
\cmidrule(lr){3-4}
\cmidrule(lr){5-6}
\cmidrule(lr){7-8}
% & & FID$\downarrow$ & $|\Delta_\mathrm{cal}-3.01|\downarrow$
& & FID$\downarrow$ & $\Delta_\mathrm{cal}\,(\to1)$
& FID$\downarrow$ & $\Delta_\mathrm{cal}\,(\to1)$
& FID$\downarrow$ & $\Delta_\mathrm{cal}\,(\to1)$ \\
\midrule
MSE regressor & 1
& 22.06 & --
& 24.76 & --
& -- & -- \\
DDRM & 20
& 20.26 & 0.31 & \redhl{18.94}& 0.19 & -- & -- \\
Flower1-OT & 100
& \redhl{17.86}& 0.23
& 23.79 & 0.24
& 18.72 & \redhl{0.24} \\
DPS & 1000
& 18.25 & \redhl{0.64} & 19.94 & \redhl{0.67} & \bluehl{15.42}& 2.15 \\
\midrule[\heavyrulewidth]
\textsc{\methodabbrev} (ours) & 1
& \bluehl{15.80}& \bluehl{1.03}& \bluehl{16.80}& \bluehl{0.99}& \redhl{15.53}& \bluehl{1.08} \\
\bottomrule
\end{tabular}}\vspace{-1em}
\end{table}
%%%%%%%%%%%%%%% TABLE FID / calibration %%%%%%%%%%%%%%%

On CelebA (Tab.~\ref{tab:repro_celeba}), a single \methodabbrev draw attains the best LPIPS on Gaussian deblurring, and averaging draws attains the best SSIM on all four operators and the best PSNR on three of the four. Results on AFHQ-Cat are reported in App.~\ref{app:afhq}.
\methodabbrev also attains the best FID on two of the three CelebA operators and the calibration ratio closest to $1$ on all three (Tab.~\ref{tab:fid_celeba}), so the spread of its draws matches the error they make.
On multi-coil fastMRI (Tab.~\ref{tab:mri}) it is competitive with posterior sampler at both accelerations and both noise levels with only \textbf{one} NFE. 
Figures~\ref{fig:qualitative} and~\ref{fig:mri_qual} show the corresponding reconstructions on natural images and on brain MRI.

%%%%%%%%%%%%%%% TABLE fastMRI %%%%%%%%%%%%%%%
\begin{table}[t]
\centering
\caption{fastMRI brain, multi-coil. PSNR and SSIM over $100$ test slices, at $\times4$ and $\times8$ acceleration and two input SNRs.  For \methodabbrev, we report results for a single draw $M=1$ and average of $M$ draws. Note that among posterior samplers \methodabbrev is very competitive even using one NFE. \textbf{Best} and \textbf{second best} are shown in \bluehl{blue} and \redhl{red}.}
\label{tab:mri}
\setlength{\tabcolsep}{3pt}
\renewcommand{\arraystretch}{1.0}
\resizebox{0.8\textwidth}{!}{%
\begin{tabular}{lccccccccc}
\toprule
& & \multicolumn{4}{c}{$\times4$} & \multicolumn{4}{c}{$\times8$} \\
\cmidrule(lr){3-6} \cmidrule(lr){7-10}
& & \multicolumn{2}{c}{$20$\,dB} & \multicolumn{2}{c}{$30$\,dB}
  & \multicolumn{2}{c}{$20$\,dB} & \multicolumn{2}{c}{$30$\,dB} \\
\cmidrule(lr){3-4} \cmidrule(lr){5-6} \cmidrule(lr){7-8} \cmidrule(lr){9-10}
Method & NFE
  & PSNR$\uparrow$ & SSIM$\uparrow$ & PSNR$\uparrow$ & SSIM$\uparrow$
  & PSNR$\uparrow$ & SSIM$\uparrow$ & PSNR$\uparrow$ & SSIM$\uparrow$ \\
\midrule
Zero-filled & ---
&25.35 & 	0.791& 25.37 & 	0.793
& 21.98& 	0.694 & 21.99 & 0.694 \\
Wavelet$+\ell_1$ & ---
& 26.57 & 0.672 & 27.75 & 0.768
& 23.02 & 0.551 & 23.81 & 0.643 \\
TV & ---
& 26.19 & 0.798 & 26.27 & 0.799
& 22.62 & 0.689 & 22.66 & 0.691 \\
MSE regressor & 1
& \bluehl{33.80}& \bluehl{0.909}& \redhl{33.85}& 0.912
& \bluehl{30.71}& \bluehl{0.875}& \bluehl{30.73}& \bluehl{0.878}\\
DiffPIR & 100
& 29.84 & 0.709 & 30.01 & 0.719
& 26.14 & 0.642 & 26.31 & 0.655 \\
CSGM & 1000
& 27.74 & 0.734 & 32.39 & 0.802
& 27.74 & 0.733 & 28.33 & 0.754 \\
DPS & 1000
& 28.81 & 0.646 & 29.83 & 0.669
& 24.89 & 0.575 & 25.17 & 0.574 \\
DAPS & 1000
& 31.55 & 0.818 & 32.44 & 0.782
& 27.31 & 0.749 & 28.01 & 0.714 \\
PnP-DM & 1000
& 32.16 & 0.778 & 33.39 & 0.786
& 27.18 & 0.701 & 29.00 & 0.712 \\
\midrule[\heavyrulewidth]
\textsc{\methodabbrev} ($M{=}1$) & 1
& 32.06 & 0.889 & 32.89 & 0.899
& 28.10 & 0.818 & 29.14 & 0.845 \\
\textsc{\methodabbrev} ($M{=}4$) & 4
& 32.68 & 0.904 & 33.80 & \redhl{0.919}& 29.06 & 0.843 & 29.88 & 0.864 \\
\textsc{\methodabbrev} ($M{=}16$) & 16
& \redhl{32.85}& \redhl{0.908}& \bluehl{34.05}& \bluehl{0.924}& \redhl{29.34}& \redhl{0.850}& \redhl{30.09}& \redhl{0.869}\\
\bottomrule
\end{tabular}%
}\vspace{-1em}
\end{table}
%%%%%%%%%%%%%%% TABLE fastMRI %%%%%%%%%%%%%%%

\begin{figure}[t]
\centering
\includegraphics[width=0.92\textwidth]{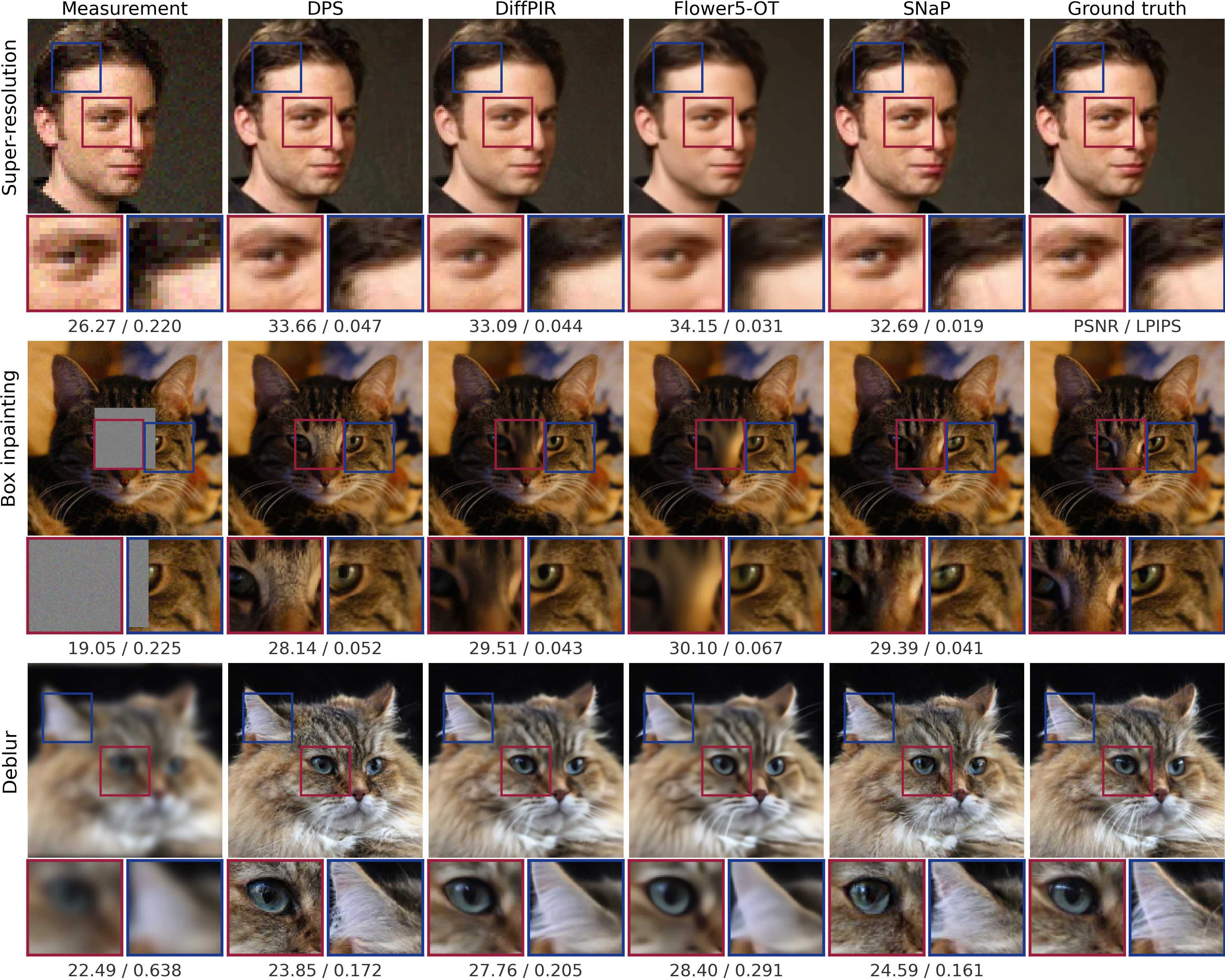}
\vspace{-1em}
\caption{Visual comparison of samples for three imaging tasks, CelebA super-resolution and AFHQ-Cat box inpainting and deblurring, with PSNR and LPIPS for the displayed image. Note that \methodabbrev requires one NFE, while DiffPIR requires $100$ and DPS and Flower require $1000$. \methodabbrev attains the best LPIPS on all three.}
\vspace{-1em}
\label{fig:qualitative}
\end{figure}

\begin{figure}[t]
\centering
\includegraphics[width=\textwidth]{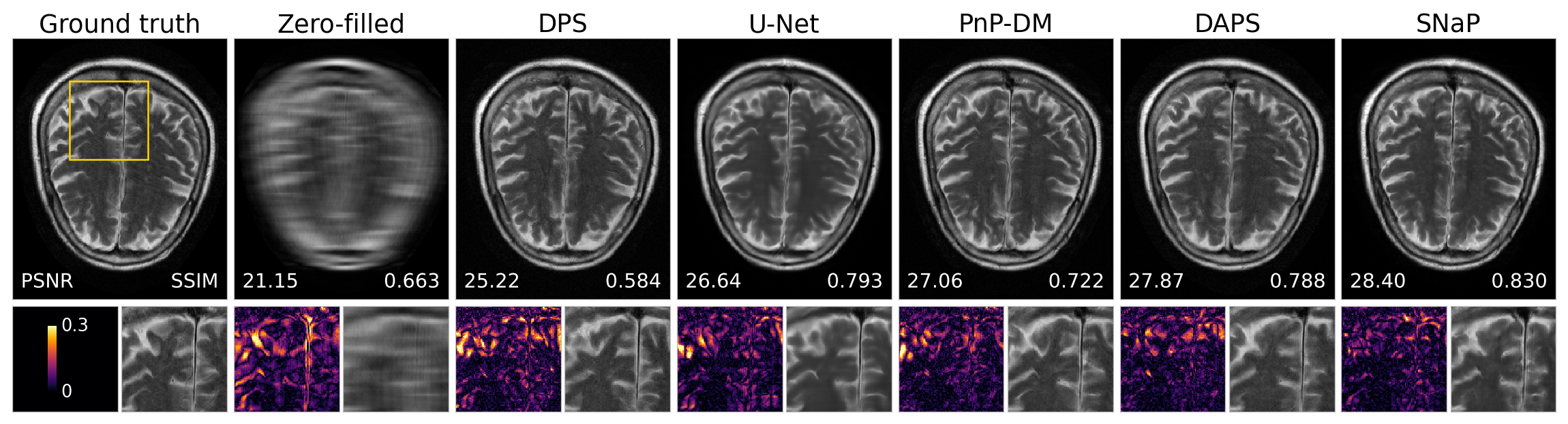}
\vspace{-1em}
\caption{fastMRI brain, multi-coil, $\times 8$ acceleration at $20$\,dB input SNR. Reconstructions with PSNR and SSIM, and error magnitude and a zoomed in region is displayed. Note that \methodabbrev achieves competitive reconstruction using 1 NFE, while baselines use $1000$.}
\vspace{-1.5em}
\label{fig:mri_qual}
\end{figure}

\subsection{Computational cost}
\label{sec:costmain}

\begin{table}[t]
\centering\footnotesize
\setlength{\abovecaptionskip}{4pt}
\setlength{\belowcaptionskip}{2pt}
\caption{\textbf{Computational cost.} NFE and 1-batch wall-clock time per posterior sample. \textbf{Best} in each row is in bold.}
\label{tab:cost_comput}
\setlength{\tabcolsep}{4pt}         % horizontal padding between columns
\renewcommand{\arraystretch}{0.8}   % row height
\resizebox{0.6\textwidth}{!}{%      % overall width; the font scales with it
\begin{tabular}{lccccc}
\toprule
\multicolumn{6}{c}{\emph{CelebA, Gaussian deblurring}} \\
\midrule
 & DPS & OT-ODE & Flower & DDRM & \textsc{\methodabbrev} \\
\midrule
NFE $\downarrow$              & $1000$ & $180$ & $100$ & $20$ & $\mathbf{1}$ \\
Time / sample (s) $\downarrow$ & 27.014 & 6.575 & 2.614 & 0.360 & \textbf{0.012} \\
Slowdown vs.\ ours $\downarrow$ & $2251\times$ & $548\times$ & $218\times$ & $30\times$ & -- \\
\midrule
\multicolumn{6}{c}{\emph{fastMRI brain, multi-coil $\times 8$}} \\
\midrule
 & CSGM & DAPS & PnP-DM & DPS & \textsc{\methodabbrev} \\
\midrule
NFE $\downarrow$              & $1000$ & $1000$ & $1000$ & $1000$ & $\mathbf{1}$ \\
Time / sample (s) $\downarrow$ & 127.78 & 96.28 & 92.02 & 76.07 & \textbf{0.227} \\
Slowdown vs.\ ours $\downarrow$ & $563\times$ & $424\times$ & $405\times$ & $335\times$ & -- \\
\bottomrule
\end{tabular}%
}
\vspace{-1.5em}
\end{table}

Each \methodabbrev sample costs one network evaluation and one source draw by \eqref{eq:rto}, whose solve with $(\Abm\Abm^\top+\lambda\Ibm) ^{-1}$ is diagonal (or Fourier diagonal) for the image operators and reduces to a short conjugate-gradient solve for multi-coil MRI (App.~\ref{app:sourcesolve}).
As reported in Tab.~\ref{tab:cost_comput}, this makes \methodabbrev $30$ to $2250\times$ faster per sample than the iterative samplers on CelebA deblurring and $335$ to $563\times$ faster on $\times8$ MRI.

\subsection{Ablations}
\label{sec:ablations}

\textbf{Source.} 
Table~\ref{tab:ablsource} isolates the effect of the source by comparing our proposed source to two isotropic sources: the first is the usual flow matching source $\Ncal(\zerobm, \Ibm)$ and the other is $\Ncal(\zerobm, \tau^2\Ibm)$ to control for the influence of $\tau$.
We hold the other design choices constant and $\ybm$ and $\sn$ are retained in the conditioning.
Under the isotropic sources, the calibration ratio falls to $0.00$: the draws are nearly identical, yielding a \emph{point estimator} with higher single-draw PSNR and worse LPIPS.
The measurement-adapted source keeps the draws dispersed, and its single-draw advantage is perceptual rather than distortion-based.
Figures~\ref{fig:source_draws} and~\ref{fig:source_vs_m} in App.~\ref{app:ablsource} show this collapse directly---the draws and their per-pixel spread---and trace both metrics against the averaging budget $M$.

\begin{table}[t]
\centering\footnotesize
\setlength{\abovecaptionskip}{4pt}
\setlength{\belowcaptionskip}{2pt}
\caption{\textbf{Source distribution.} CelebA, Gaussian deblurring at $\sn=0.05$; only the source differs across rows. $M$ is the number of averaged draws, and $\Delta_{\mathrm{cal}}$ is the calibration ratio (App.~\ref{app:metrics}), for which $1$ is ideal. An isotropic source collapses onto the posterior mean, where draws are near-identical and averaging brings no gain.}
\label{tab:ablsource}
\setlength{\tabcolsep}{4pt}         % horizontal padding between columns
\renewcommand{\arraystretch}{0.9}   % row height
\resizebox{0.7\textwidth}{!}{%      % overall width; the font scales with it
\begin{tabular}{lccccc}
\toprule
 & \multicolumn{2}{c}{$M=1$} & \multicolumn{2}{c}{$M=100$} & \\
\cmidrule(lr){2-3}\cmidrule(lr){4-5}
Source & PSNR$\uparrow$ & LPIPS$\downarrow$ & PSNR$\uparrow$ & LPIPS$\downarrow$ & $\Delta_\mathrm{cal}\,(\to1)$ \\
\midrule
$\Ncal(\zerobm,\Ibm)$              & 35.85 & 0.030 & 35.86 & 0.030 & 0.00 \\
$\Ncal(\zerobm,\tau^{2}\Ibm)$      & 35.83 & 0.030 & 35.84 & 0.030 & 0.00 \\
$\Ncal(\myv,\tau^{2}\Wbm)$ (ours)  & 32.90 & \textbf{0.018} & \textbf{35.91} & 0.030 & \textbf{1.03} \\
\bottomrule
\end{tabular}%
}
\vspace{-1.5em}
\end{table}

\textbf{Additional results.} App.~\ref{app:afhq} reports the full AFHQ-Cat tables and App.~\ref{app:visual} shows further reconstructions on natural images and brain MRI.
App.~\ref{app:nullflow} compares \methodabbrev with NullFlow across a range of measurement noise levels, and App.~\ref{app:vfm} compares the two source constructions on the two-dimensional benchmark of \cite{mammadov2026variational}.
App.~\ref{app:toy} verifies the method against a closed-form posterior, and App.~\ref{app:ablations} reports the remaining ablations, on the working-prior scale $\tau$, the $(r,t)$ sampling ratio and few-step composition.

\section{Related work}
\label{sec:related}

\textbf{Iterative posterior samplers.}
Most generative solvers enforce consistency with the measurement repeatedly along the sampling trajectory, either by correcting the generative dynamics with a data-fidelity gradient or a pseudoinverse update \cite{chung2023diffusion,song2023pseudoinverseguided,kawar2022denoising,wang2023zeroshot,pokle2024training,benhamu2024dflow,kim2025flowdps}, or by alternating a generative step with an explicit data-consistency step \cite{zhu2023diffpir,chung2024decomposed,zhang2025improving,martin2025pnpflow,pourya2026flower}.
These methods reach high sample quality, but a single posterior sample costs tens to thousands of network evaluations.

\textbf{One-step generative models.}
Consistency models \cite{song2023consistency,boffi2026build}, shortcut models \cite{frans2025shortcut} and MeanFlow \cite{geng2025meanflow} eliminate the trajectory and map a source sample to the data in a single network evaluation.
Using them for inverse problems requires the measurement to enter through the network conditioning or through the source, since no intermediate state is available at which to enforce data consistency.
For real-world super-resolution, MFSR \cite{wang2026mfsr} distills a text-to-image flow model into a MeanFlow in which the low-resolution image enters only through conditioning, without an explicit forward model or posterior sampling.

\textbf{Measurement-dependent sources.}
Bridge models transport from a measurement-dependent source by starting from the degraded observation itself \cite{liu2023i2sb,zhou2024ddbm,delbracio2023inversion}, but that source carries no per-direction covariance.
NullFlow \cite{shi2026nullflow} restricts a MeanFlow to $\nullsp(\Abm)$ and  VFM \cite{mammadov2026variational} learns a measurement-dependent adapter jointly with the flow map, which covers a broader class of problems but yields a source that is trained. MeanFlow has also been applied to PDE-based Bayesian inverse problems with a prior-aligned base measure that does not depend on the measurement \cite{li2026functional}.

\section{Conclusion}
\label{sec:conclusion}
We introduced \methodabbrev, a one-step posterior sampler for linear inverse problems with Gaussian noise.
Instead of correcting toward the measurement at every step, \methodabbrev builds it into the transport.
The transport starts from a distribution centered at the linear estimate that minimizes the expected transport distance, with a spread that matches the remaining uncertainty.
We proved that the exact one-step map carries this distribution to the target posterior.
On natural images and multi-coil MRI, \methodabbrev draws a posterior sample in a single network evaluation, $30$ to $2250\times$ faster than iterative samplers.
On CelebA, averaging its draws gives the best SSIM on all four operators and the best PSNR on three.
Its draws also reach the best FID on two of the three operators tested and the calibration ratio closest to $1$ on all three.

\paragraph{Limitations.}
The construction  is tied to linear operators, since the working posterior is available in closed form only in that setting, and nonlinear forward models require further work.
Sampling the source requires one solve with $\Abm\Abm^\top+\lambda\Ibm$, which is inexpensive for the operators considered here but may be demanding for operators without exploitable structure.
Finally, \methodabbrev operates in pixel space, and extending the source construction to latent-space flow models remains open.

\section*{Acknowledgments}
This work was supported in part by the National Science Foundation under CAREER award \mbox{CCF-2625643} and award \mbox{CCF-2622128}.

\bibliography{utils/refs}
\bibliographystyle{utils/IEEEbib}
\appendix
\newpage

\section{Proofs}\label{app:proofs}

\subsection{Proof of Proposition~\ref{prop:optimal-source}}
\label{app:proof-optimal-source}

\begin{proof}
Let $\xbm_0=\bm K\ybm+\bm\xi$ as in \eqref{eq:source-family}.

Since $\xbm_1-\xbm_0=(\xbm_1-\bm K\ybm)-\bm\xi$ and $\bm\xi$ has mean $\zerobm$ and is independent of $\xbm_1-\bm K\ybm$, for every unit vector $\bm e$,
\begin{equation*}
\E\big[(\bm e^\top(\xbm_1-\xbm_0))^2\big]=\bm e^\top\bm R(\bm K)\,\bm e+\bm e^\top\bm\Sigma\,\bm e,
\qquad
\E[\|\xbm_1-\xbm_0\|_2^2]=\tr\bm R(\bm K)+\tr\bm\Sigma .
\end{equation*}
For fixed $\bm\Sigma$, only $\bm R(\bm K)$ depends on $\bm K$.

Let $\bm C_{\xbm\ybm}\defn\E[\xbm_1\ybm^\top]=\bm C\Abm^\top$ and $\bm C_{\ybm}\defn\E[\ybm\ybm^\top]=\Abm\bm C\Abm^\top+\sn^2\Ibm$, which is positive definite since $\sn>0$, so that $\bm K_{\bm C}=\bm C_{\xbm\ybm}\bm C_{\ybm}^{-1}$.
Expanding $\bm R(\bm K)=\bm C-\bm K\bm C_{\xbm\ybm}^\top-\bm C_{\xbm\ybm}\bm K^\top+\bm K\bm C_{\ybm}\bm K^\top$ and using $\bm K_{\bm C}\bm C_{\ybm}=\bm C_{\xbm\ybm}$,
\begin{equation}
\bm R(\bm K)=\bm R(\bm K_{\bm C})+(\bm K-\bm K_{\bm C})\,\bm C_{\ybm}\,(\bm K-\bm K_{\bm C})^\top,
\qquad
\bm R(\bm K_{\bm C})=\bm C-\bm C_{\xbm\ybm}\bm C_{\ybm}^{-1}\bm C_{\xbm\ybm}^\top=\bm C-\bm K_{\bm C}\Abm\bm C .
\label{eq:lmmse-square}
\end{equation}

By \eqref{eq:lmmse-square}, for every unit vector $\bm e$,
\begin{equation*}
\bm e^\top\bm R(\bm K)\,\bm e=\bm e^\top\bm R(\bm K_{\bm C})\,\bm e+\big\|\bm C_{\ybm}^{1/2}(\bm K-\bm K_{\bm C})^\top\bm e\big\|_2^2\;\ge\;\bm e^\top\bm R(\bm K_{\bm C})\,\bm e,
\end{equation*}
so $\bm K_{\bm C}$ minimizes the displacement along every direction.
Taking the trace,
\begin{equation*}
\tr\bm R(\bm K)=\tr\bm R(\bm K_{\bm C})+\big\|\bm C_{\ybm}^{1/2}(\bm K-\bm K_{\bm C})^\top\big\|_F^2 ,
\end{equation*}
and since $\bm C_{\ybm}^{1/2}$ is invertible, the second term vanishes if and only if $\bm K=\bm K_{\bm C}$.
The total displacement is therefore minimized if and only if $\bm K=\bm K_{\bm C}$, for every $\bm\Sigma$.

\emph{Isotropic case.}
For $\bm C=\tau^2\Ibm$ and $\lambda=\sn^2/\tau^2$, dividing by $\tau^2$ gives $\bm K_{\bm C}=\Abm^\top(\Abm\Abm^\top+\lambda\Ibm)^{-1}$, so $\bm K_{\bm C}\ybm=\myv$, and
$\bm R(\bm K_{\bm C})=\tau^2\big(\Ibm-\Abm^\top(\Abm\Abm^\top+\lambda\Ibm)^{-1}\Abm\big)=\tau^2\Wbm$.
\end{proof}

\paragraph{Remarks.}
(i) The proof uses only the first two moments of $\xbm_1$, $\nbm$ and $\bm\xi$; neither the images, the noise, nor the perturbation need to be Gaussian.
(ii) If $\xbm_1\sim\Ncal(\zerobm,\bm C)$, Gaussian conditioning gives $\xbm_1\given\ybm\sim\Ncal(\bm K_{\bm C}\ybm,\bm\Sigma_{\bm C})$, so the source with center $\bm K_{\bm C}\ybm$ and Gaussian perturbation of covariance $\bm\Sigma_{\bm C}$ is the exact posterior; for $\bm C=\tau^2\Ibm$ it is the working posterior \eqref{eq:working-posterior}.
(iii) For $\bm C=\tau^2\Ibm$, as $\sn\to0$ with $\tau$ fixed, $\lambda\to0$, and the standard limit of Tikhonov regularization gives $\bm K_{\bm C}\to\pinv{\Abm}$ and $\Wbm\to\Ibm-\pinv{\Abm}\Abm=\Pbm$, so the source tends to $\Ncal(\pinv{\Abm}\ybm,\tau^2\Pbm)$.

\subsection{Proof of Proposition~\ref{prop:invariance}}
\label{app:proof-invariance}

\begin{proof}
Let $p_t(\cdot\given\ybm)$ denote the distribution of $\zbm_t=(1-t)\xbm_0+t\xbm_1$, so that $p_0(\cdot\given\ybm)=q_\tau(\cdot\given\ybm)$ and $p_1(\cdot\given\ybm)=p(\cdot\given\ybm)$.
The marginal velocity
\begin{equation}
\vbm_t(\zbm,\ybm)=\E\!\left[\xbm_1-\xbm_0\given\zbm_t=\zbm,\ybm\right]
\label{eq:marginal-velocity}
\end{equation}
generates $\{p_t(\cdot\given\ybm)\}_{t\in[0,1]}$ through the continuity equation \cite{lipman2023flow,albergo2025stochastic}.

Since $\sn>0$, we have $\lambda>0$ and $w_i\in(0,1]$ for all $i$, so $\Wbm$ is positive definite.
For $t<1$, the interpolant $\zbm_t$ is therefore the sum of $t\xbm_1$ and an independent Gaussian with nondegenerate covariance $(1-t)^2\tau^2\Wbm$.
Its density $p_t(\cdot\given\ybm)$ is thus smooth and strictly positive.
Since $\xbm_1$ has finite second moments, standard arguments show that $\vbm_t(\cdot,\ybm)$ is locally Lipschitz, uniformly for $t\in[0,T]$ and every $T<1$ \cite{albergo2025stochastic}.
The ODE $\tfrac{\dd}{\dd t}\zbm_t=\vbm_t(\zbm_t,\ybm)$ then has a unique solution on $[0,1)$, and its flow map satisfies $(\psi_{0\to t})_{\#}\,q_\tau(\cdot\given\ybm)=p_t(\cdot\given\ybm)$ for every $t<1$.

Since $\E[\|\zbm_t-\xbm_1\|_2^2]=(1-t)^2\,\E[\|\xbm_1-\xbm_0\|_2^2]\to0$, the distribution $p_t(\cdot\given\ybm)$ converges to $p(\cdot\given\ybm)$ as $t\to1$.
By the terminal-limit assumption of Proposition~\ref{prop:invariance}, $\psi_{0\to1}(\xbm)\defn\lim_{t\to1}\psi_{0\to t}(\xbm)$ exists for $q_\tau$-almost every $\xbm$.
Almost-sure convergence implies convergence in distribution, so
\begin{equation}
(\psi_{0\to1})_{\#}\,q_\tau(\cdot\given\ybm)=p(\cdot\given\ybm).
\label{eq:flow-endpoint}
\end{equation}

Integrating the ODE from $0$ to $t$ and letting $t\to1$ gives
\begin{equation}
\psi_{0\to1}(\xbm_0)-\xbm_0=\int_0^1\vbm_s\big(\psi_{0\to s}(\xbm_0),\ybm\big)\,\dd s=\ubm(\xbm_0,0,1\given\ybm),
\label{eq:avgvel-anchor}
\end{equation}
where the last equality is the average velocity in \eqref{eq:avgvel}.
Hence $\psi_{0\to1}=T_{\ybm}$, and \eqref{eq:flow-endpoint} proves \eqref{eq:invariance}.
The target $p(\cdot\given\ybm)$ does not depend on $\tau$, so the conclusion holds for every $\tau>0$.
\end{proof}

\subsection{Perturb-and-solve sampling}
\label{app:rto}

Direct sampling would require a square root of $\Wbm$. The perturb-and-solve
sampler avoids this square root while preserving the same Gaussian law.

\begin{lemma}
\label{lem:rto}
Let $\epsilonbm\sim\Ncal(\zerobm,\tau^{2}\Ibm)$ and $\nbm'\sim\Ncal(\zerobm,\sn^{2}\Ibm)$ be independent of each other and of $(\xbm_1,\nbm)$, and let $\xbm_0$ be given by \eqref{eq:rto}.
Then $\xbm_0\given\ybm\sim\Ncal(\myv,\tau^{2}\Wbm)$ for every $\Abm\in\R^{m\times n}$ and every $\sn>0$.
\end{lemma}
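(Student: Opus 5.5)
Since $\ybm$ is held fixed and $(\epsilonbm,\nbm')$ are independent of $(\xbm_1,\nbm)$, hence of $\ybm$, conditioning on $\ybm$ leaves the law of $(\epsilonbm,\nbm')$ unchanged. Write $\bm G\defn\Abm^\top(\Abm\Abm^\top+\lambda\Ibm)^{-1}$, with $\lambda=\sn^2/\tau^2>0$ so that the inverse exists for every $\Abm$. Then \eqref{eq:rto} reads
\[
\xbm_0=\bm G\ybm+(\Ibm-\bm G\Abm)\epsilonbm-\bm G\nbm' .
\]
Conditionally on $\ybm$, this is an affine function of the jointly Gaussian vector $(\epsilonbm,\nbm')$, so $\xbm_0\given\ybm$ is Gaussian. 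It therefore suffices to compute its conditional mean and covariance.

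The mean is $\bm G\ybm=\myv$, because $\epsilonbm$ and $\nbm'$ have mean $\zerobm$. Since $\epsilonbm$ and $\nbm'$ are independent, the covariance is
\[
\tau^2(\Ibm-\bm G\Abm)(\Ibm-\bm G\Abm)^\top+\sn^2\bm G\bm G^\top .
\]
Note that $\Ibm-\bm G\Abm=\Wbm$, and $\Wbm$ is symmetric.

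The key algebraic step is to show that this covariance equals $\tau^2\Wbm$. Factor out $\tau^2$ and use $\sn^2=\lambda\tau^2$, which reduces the claim to
\[
\Wbm^2+\lambda\bm G\bm G^\top=\Wbm .
\]
I would verify this in the SVD basis $\Abm=\Ubm\Sigmabm\Vbm^\top$. Here $\bm G=\Vbm\diag\bigl(s_i/(s_i^2+\lambda)\bigr)\Ubm^\top$, using the rectangular-diagonal convention with zeros for $i>\rho$ and on the null space. Also $\Wbm=\Vbm\diag(w_i)\Vbm^\top$ with $w_i=\lambda/(s_i^2+\lambda)$, which agrees with \eqref{eq:source-spectrum}. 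In this basis the identity reduces entrywise to
\[
\frac{\lambda^2}{(s_i^2+\lambda)^2}+\frac{\lambda s_i^2}{(s_i^2+\lambda)^2}=\frac{\lambda}{s_i^2+\lambda},
\]
which holds for every $i$. For null-space directions ($s_i=0$) it reads $1+0=1$.

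A basis-free alternative uses the push-through identity $\bm G=(\Abm^\top\Abm+\lambda\Ibm)^{-1}\Abm^\top$, which gives $\Wbm=\lambda(\Abm^\top\Abm+\lambda\Ibm)^{-1}$. Then
\[
\Wbm^2+\lambda\bm G\bm G^\top=\lambda^2 M^{-1}(\lambda\Ibm+\Abm^\top\Abm)M^{-1}/\lambda=\Wbm,
\qquad M\defn\Abm^\top\Abm+\lambda\Ibm .
\]

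The main care point is bookkeeping rather than any real obstacle. The SVD computation must cover rank-deficient and rectangular $\Abm$, with $m\ne n$ and null-space directions included, and it must use $\lambda>0$, which is guaranteed by $\sn>0$. The push-through identity handles both cases uniformly. Finally, the conditional independence of $\xbm_0$ and $\xbm_1$ given $\ybm$, mentioned in the text, follows because $\xbm_0$ is a function of $\ybm$ and of noise independent of $(\xbm_1,\nbm)$.
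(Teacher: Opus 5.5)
Your proof is correct and follows the paper's argument essentially step for step. You rewrite the draw as $\myv+\Wbm\epsilonbm-\Kbm\nbm'$, note it is Gaussian given $\ybm$ with covariance $\tau^2\Wbm^2+\sn^2\Kbm\Kbm^\top$, and verify entrywise in the right singular basis, using $\sn^2=\tau^2\lambda$, that this equals $\tau^2\Wbm$. Your push-through variant $\Wbm=\lambda(\Abm^\top\Abm+\lambda\Ibm)^{-1}$ is a valid basis-free shortcut for the same identity, and your remark that $(\epsilonbm,\nbm')$ is independent of $\ybm$ makes explicit a step the paper leaves implicit.
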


\begin{proof}
Write $\Kbm\defn\Abm^\top(\Abm\Abm^\top+\lambda\Ibm)^{-1}$, so that $\myv=\Kbm\ybm$ and $\Wbm=\Ibm-\Kbm\Abm$ by Proposition~\ref{prop:optimal-source}.
Substituting these into \eqref{eq:rto} gives
\begin{equation}
    \xbm_0=\myv+\Wbm\epsilonbm-\Kbm\nbm'.
\label{eq:rto-affine}
\end{equation}
Given $\ybm$, this is an affine function of two independent Gaussian vectors, so $\xbm_0$ is Gaussian with mean $\myv$ and covariance $\tau^2\Wbm^2+\sn^2\Kbm\Kbm^\top$.
Both $\Wbm$ and $\Kbm\Kbm^\top$ are diagonal in the right singular basis of $\Abm$, with entries $w_i$ and $s_i^2/(s_i^2+\lambda)^2$.
Using $\sn^2=\tau^2\lambda$, the $i$-th diagonal entry of the covariance is
\[
    \tau^2w_i^2+\tau^2\lambda\frac{s_i^2}{(s_i^2+\lambda)^2}
    =\tau^2\frac{\lambda^2+\lambda s_i^2}{(s_i^2+\lambda)^2}
    =\tau^2\frac{\lambda}{s_i^2+\lambda}
    =\tau^2w_i ,
\]
so the covariance equals $\tau^2\Wbm$.
\end{proof}

Each source draw requires applications of $\Abm$ and $\Abm^\top$ and one solve
with $\Abm\Abm^\top+\lambda\Ibm$. No square root of $\Wbm$ is required.
Operator-specific implementations are given in App.~\ref{app:sourcesolve}.

\section{Additional Results}
\label{app:ablations}

\subsection{Full results on AFHQ-Cat}
\label{app:afhq}
%%%%%%%%%%%%%%% TABLE AFHQ-Cat %%%%%%%%%%%%%%%
\begin{table}[!ht]
\centering
\caption{AFHQ-Cat, $100$ test images.  We report  single draw ($M{=}1$) and the average of $M$ draws for \methodabbrev. Note that a single \methodabbrev draw is competitive in LPIPS at one network evaluation, and averaging a few draws recovers distortion at a cost still far below the iterative samplers. On AFHQ-Cat the number of time steps used by PnP-Flow and Flower depends on the operator, so their NFE does too; the per-operator budgets are listed in App.~\ref{app:train_imp}. The MSE regressor is omitted on box inpainting, where its single output is the blurred conditional mean of the masked region and is not comparable with the other methods. \textbf{Best} and \textbf{second best} are shown in \bluehl{blue} and \redhl{red}.}
\label{tab:repro_afhq}
\setlength{\tabcolsep}{3.5pt}
\renewcommand{\arraystretch}{1.12}
\resizebox{\textwidth}{!}{
\begin{tabular}{l c *{12}{c}}
\toprule
\multirow{2}{*}{\textbf{Method}} & \multirow{2}{*}{\textbf{NFE}}
& \multicolumn{3}{c}{\textbf{Deblurring}}
& \multicolumn{3}{c}{\textbf{Super-resolution}}
& \multicolumn{3}{c}{\textbf{Random inpainting}}
& \multicolumn{3}{c}{\textbf{Box inpainting}} \\
\cmidrule(lr){3-5}
\cmidrule(lr){6-8}
\cmidrule(lr){9-11}
\cmidrule(lr){12-14}
& & PSNR$\uparrow$ & SSIM$\uparrow$ & LPIPS$\downarrow$
& PSNR$\uparrow$ & SSIM$\uparrow$ & LPIPS$\downarrow$
& PSNR$\uparrow$ & SSIM$\uparrow$ & LPIPS$\downarrow$
& PSNR$\uparrow$ & SSIM$\uparrow$ & LPIPS$\downarrow$ \\
\midrule \\[-3.5ex] \midrule
Degraded & --
& 24.39 & 0.532 &  0.543 
& 11.98 & 0.212 & 0.900
& 13.25 & 0.214 & 1.091
& 21.57 & 0.736 & 0.214 \\
MSE regressor & 1
&\bluehl{29.82}& 	\bluehl{0.806}& 0.289
& \bluehl{29.16}& 	\bluehl{0.817}& 0.201 
& 33.23	 & 0.915 & 0.066
& -- & -- & -- \\
PnP-GS & 23     
& 28.39 & 0.787 & 0.387 
& 24.44 & 0.639 & 0.411 
& 29.89 & 0.841 & 0.123
& -- & -- & --\\
PnP-Flow5 & 500--2500
& 29.01& 0.785 & 0.312 
& 28.01 & 0.791 & 0.167 
& \bluehl{34.47}& \bluehl{0.933}& \redhl{0.044}& 27.14 & 0.900 & 0.127 \\
DDRM & 20
& 29.01 & 0.784 & 0.194
& 27.09 & 0.781 & 0.186
& 32.20 & 0.907 & 0.064
& -- & -- & --\\
DiffPIR & 100
& 28.89 & 0.773 & 0.187
& 23.16 & 0.635 & 0.291
& 31.76 & 0.881 & 0.053
& 27.71 & 0.880 & 0.059\\
OT-ODE & 180
&  27.82 & 0.735 & \bluehl{0.126}& 26.71 & 0.737 & \bluehl{0.108}& 29.98 & 0.849 & 0.086
& 24.47 & 0.873 & 0.095 \\
Flower5-OT & 500--2500
& \redhl{29.73} & \redhl{0.801} &0.264 
& 27.09 & 0.767& 0.272 
&  \redhl{34.41}& \redhl{0.932}& 0.047
& 27.32 & \redhl{0.922} & 0.067\\
DPS & 1000
&25.90 & 0.661 &  \redhl{0.148}& 25.98 & 0.694 & 0.144
& 32.06 & 0.896 &  \bluehl{0.036}& 26.78 & 0.900 &  \redhl{0.054}\\
\midrule \\[-3.6ex] \midrule
\textsc{\methodabbrev} ($M{=}1$) & 1
& 26.25 & 0.674 & 0.159
&26.07 & 0.703 & 0.130
&30.48 & 0.853 & 0.066
&26.46 & 0.891 & \redhl{0.054}
\\
\textsc{\methodabbrev} ($M{=}4$) & 4
& 28.33 & 0.750 & 0.157
& 28.07 & 	0.777 & \redhl{0.120}& 32.38 &	0.896 &	0.052
& 28.49 & 0.914 & \bluehl{0.047}\\
\textsc{\methodabbrev} ($M{=}16$) & 16
& 29.09 & 0.779 & 0.231
& 28.77 &  	0.802 & 	0.157
& 33.04 & 0.909 & 	0.056
& \redhl{29.32}& 	\redhl{0.922}& 	0.056\\
\textsc{\methodabbrev} ($M{=}100$) & 100
& 29.32 & 0.788 & 0.320
& \redhl{28.99}& \redhl{0.810}& 0.187
& 33.25 & 0.913 & 0.058
& \bluehl{29.53}& \bluehl{0.925}& 0.069\\
\bottomrule
\end{tabular}}
\end{table}
%%%%%%%%%%%%%%% TABLE AFHQ-Cat %%%%%%%%%%%%%%%
Table~\ref{tab:repro_afhq} reports the same protocol as Tab.~\ref{tab:repro_celeba} on AFHQ-Cat.

\subsection{Additional visual results}
\label{app:visual}
Figure~\ref{fig:panel_celeba_box} shows box inpainting on CelebA.

\begin{figure}[ht]
\centering
\includegraphics[width=0.8\textwidth]{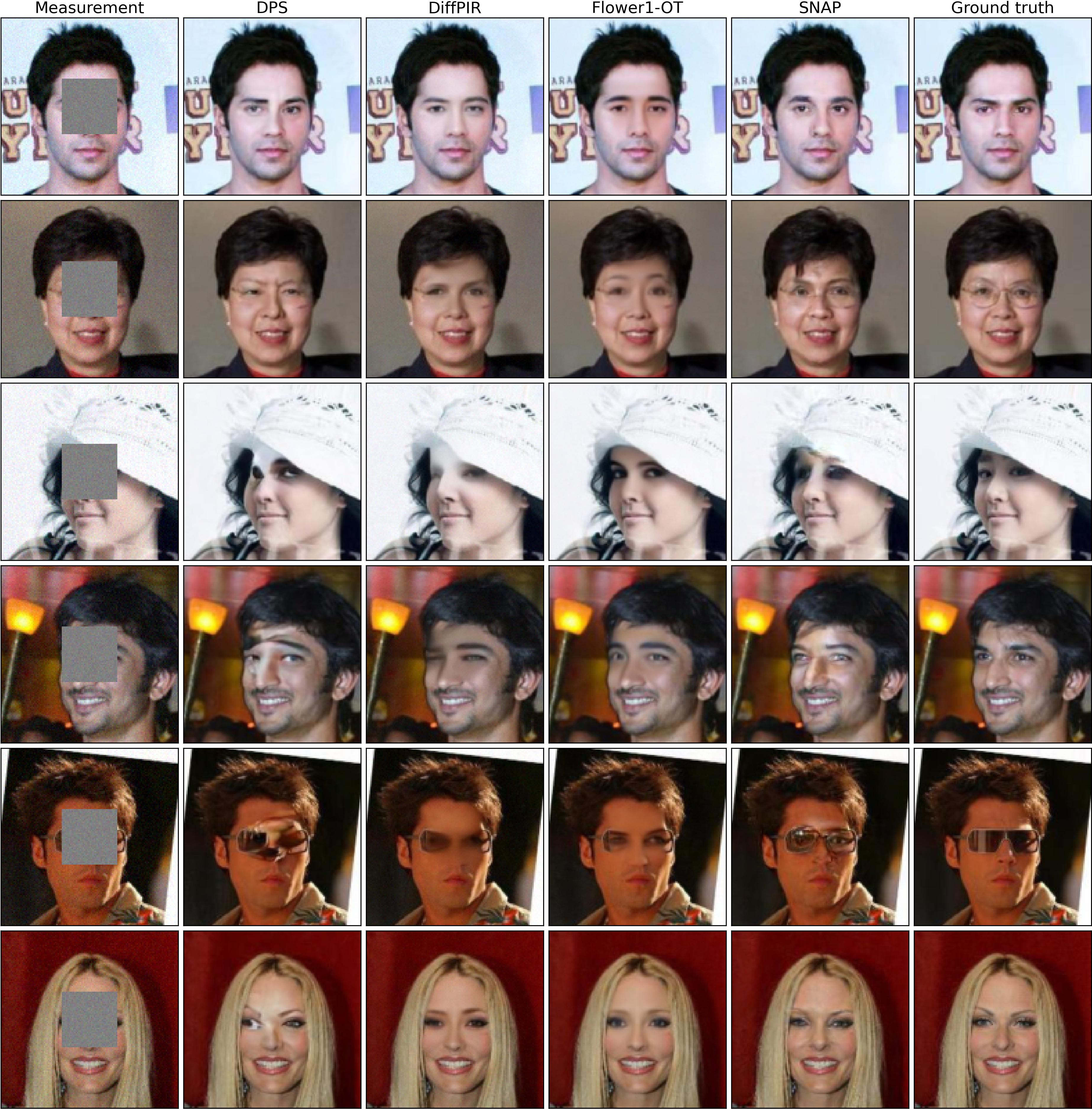}
\caption{Box inpainting on CelebA.}
\label{fig:panel_celeba_box}
\end{figure}

Figure~\ref{fig:panel_afhq_box} shows box inpainting on AFHQ-Cat.

\begin{figure}[ht]
\centering
\includegraphics[width=0.8\textwidth]{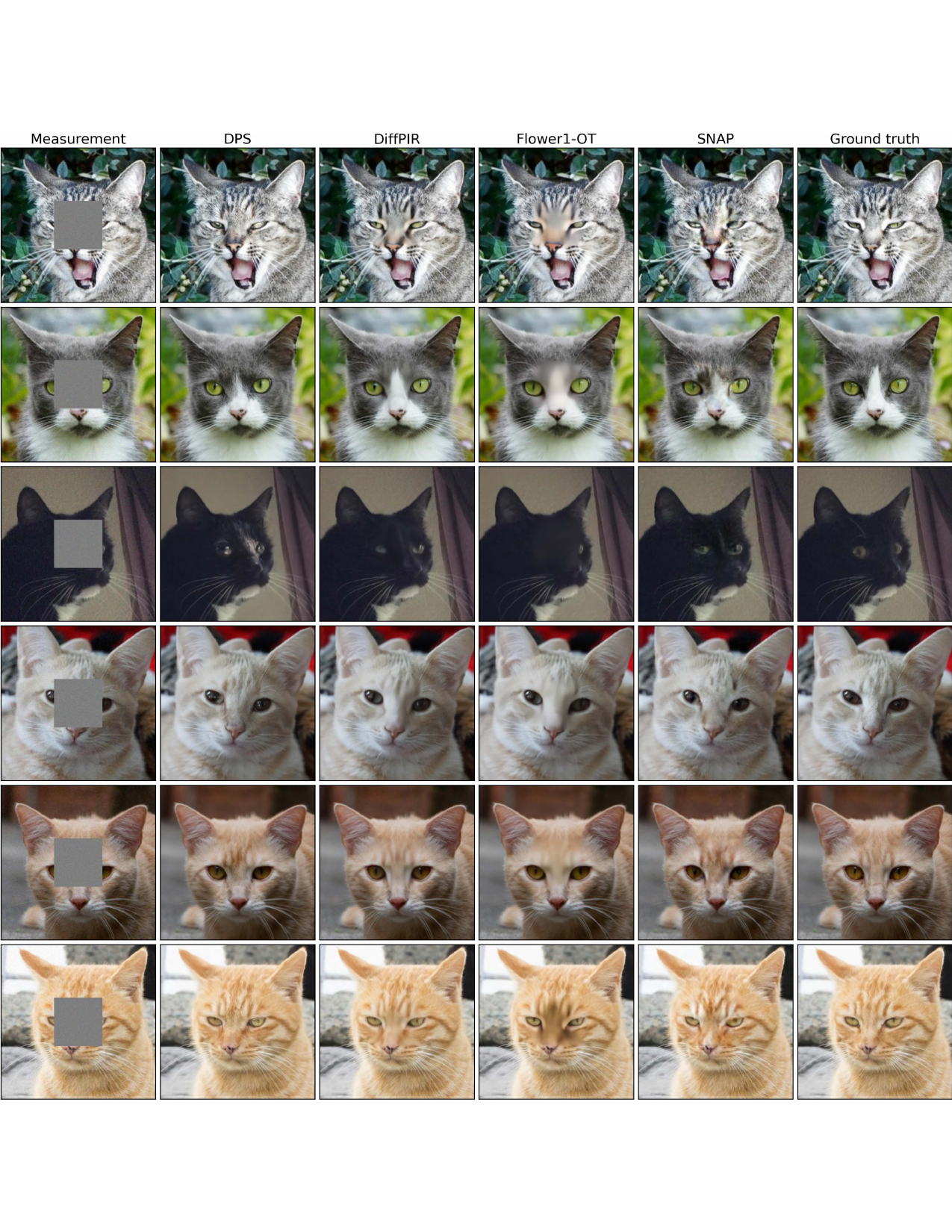}
\caption{Box inpainting on AFHQ-Cat.}
\label{fig:panel_afhq_box}
\end{figure}

Figure~\ref{fig:panel_mri} shows multi-coil fastMRI at $\times8$ acceleration and $30$\,dB input SNR.

\begin{figure}[ht]
\centering
\includegraphics[width=\textwidth]{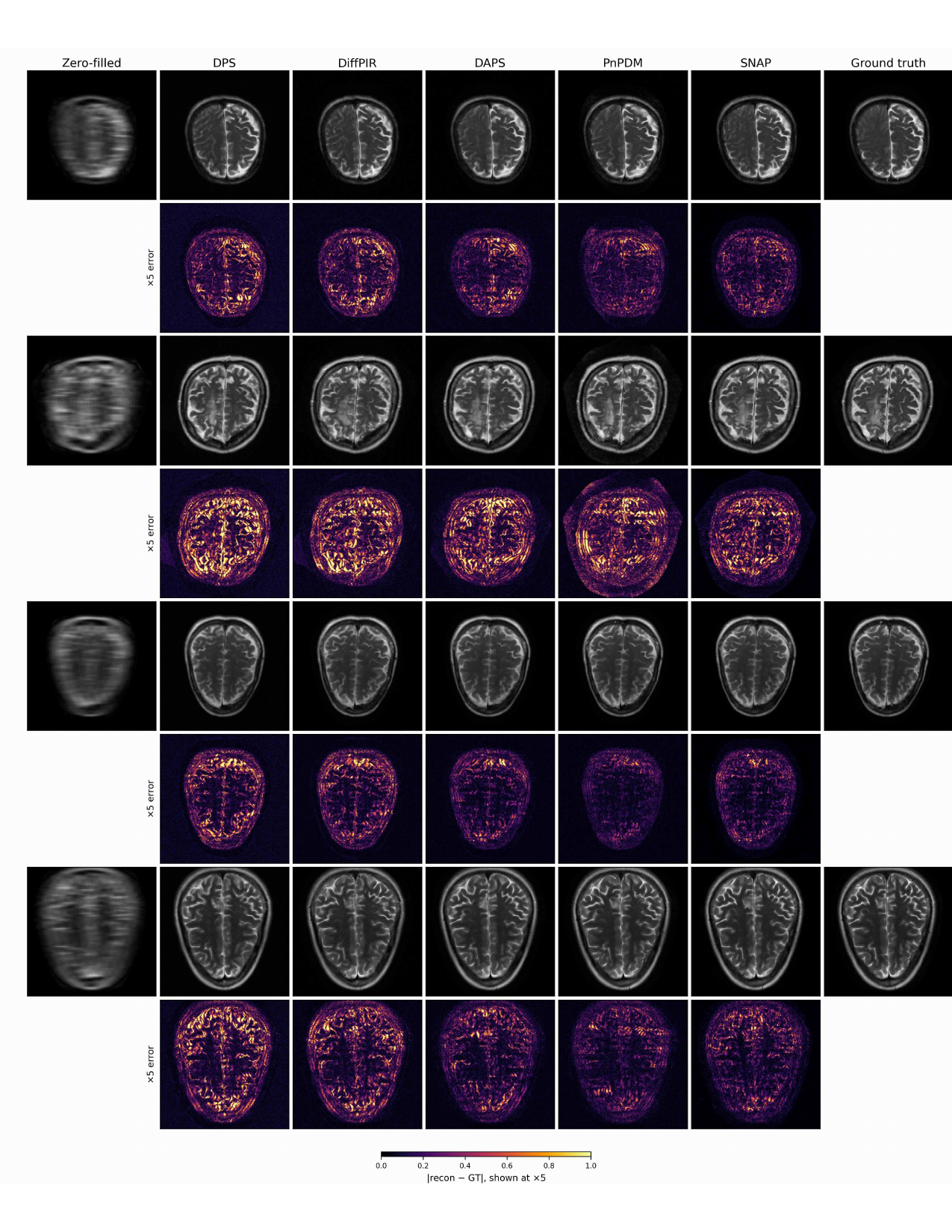}
\caption{fastMRI brain, multi-coil, $\times8$ acceleration at $30$\,dB input SNR.}
\label{fig:panel_mri}
\end{figure}

Figure~\ref{fig:panel_celeba_blur} shows Gaussian deblurring on CelebA.

\begin{figure}[ht]
\centering
\includegraphics[width=0.8\textwidth]{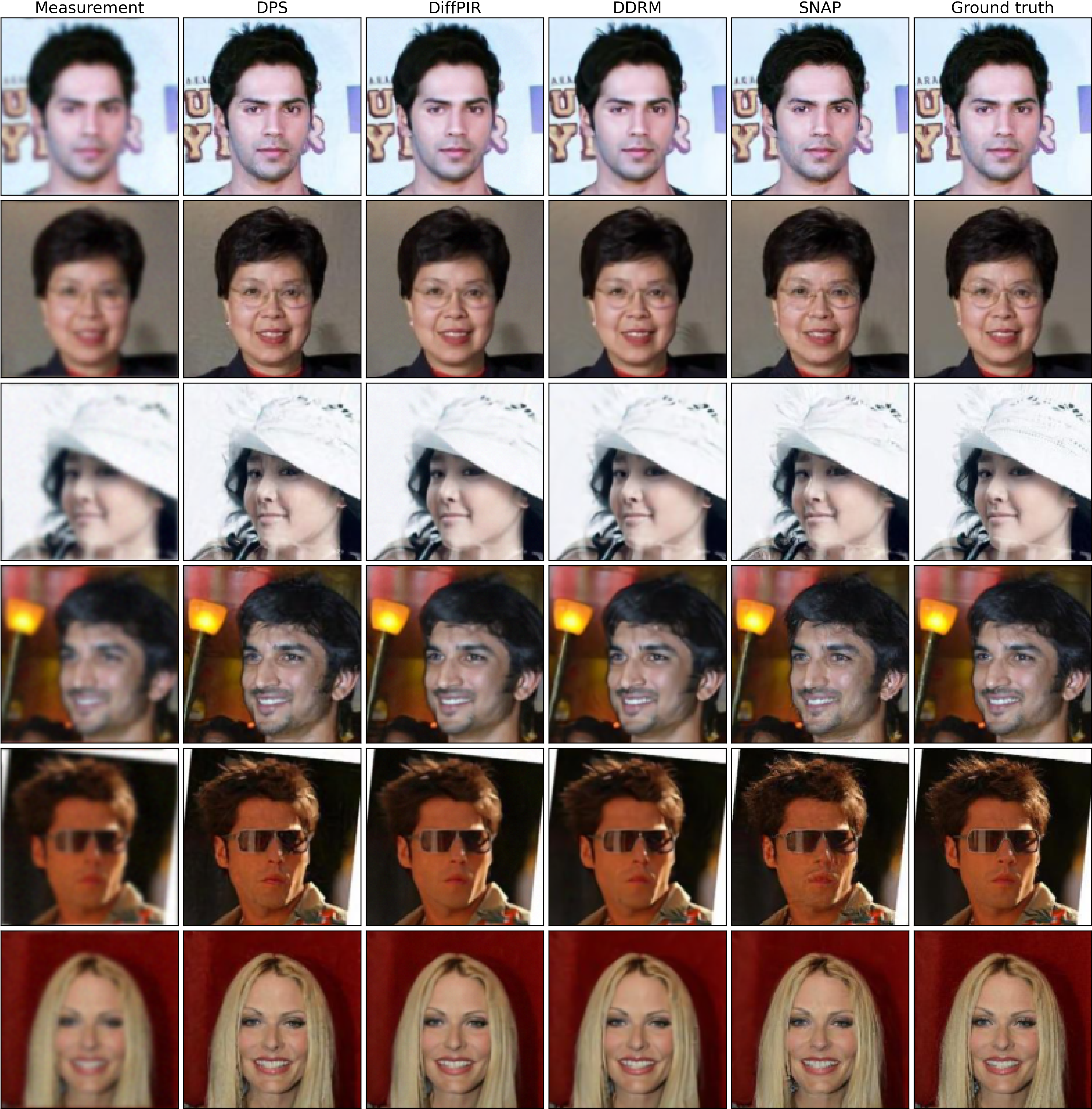}
\caption{Gaussian deblurring on CelebA.}
\label{fig:panel_celeba_blur}
\end{figure}

Figure~\ref{fig:panel_draws} shows several \methodabbrev draws for box inpainting.

\begin{figure}[ht]
\centering
\includegraphics[width=\textwidth]{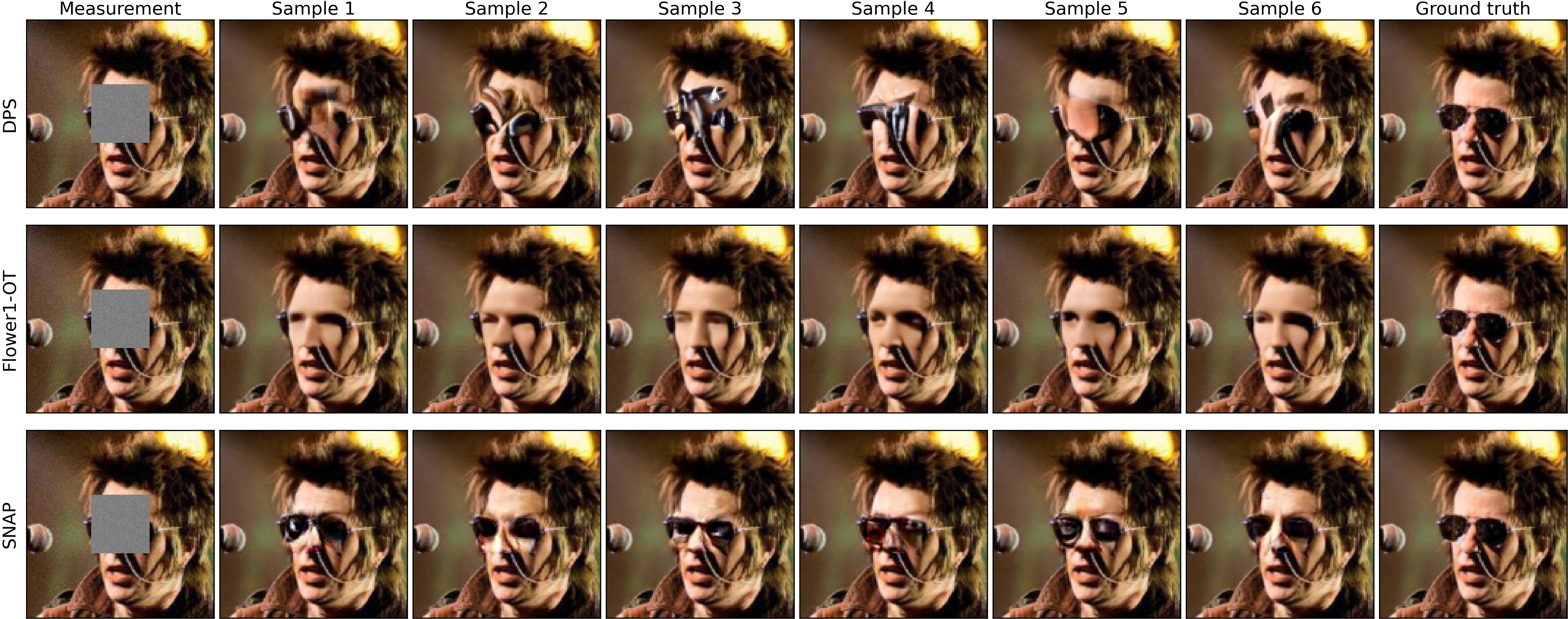}
\caption{Independent \methodabbrev draws for box inpainting.}
\label{fig:panel_draws}
\end{figure}

Figure~\ref{fig:panel_draws3} shows further \methodabbrev draws for box inpainting.

\begin{figure}[ht]
\centering
\includegraphics[width=\textwidth]{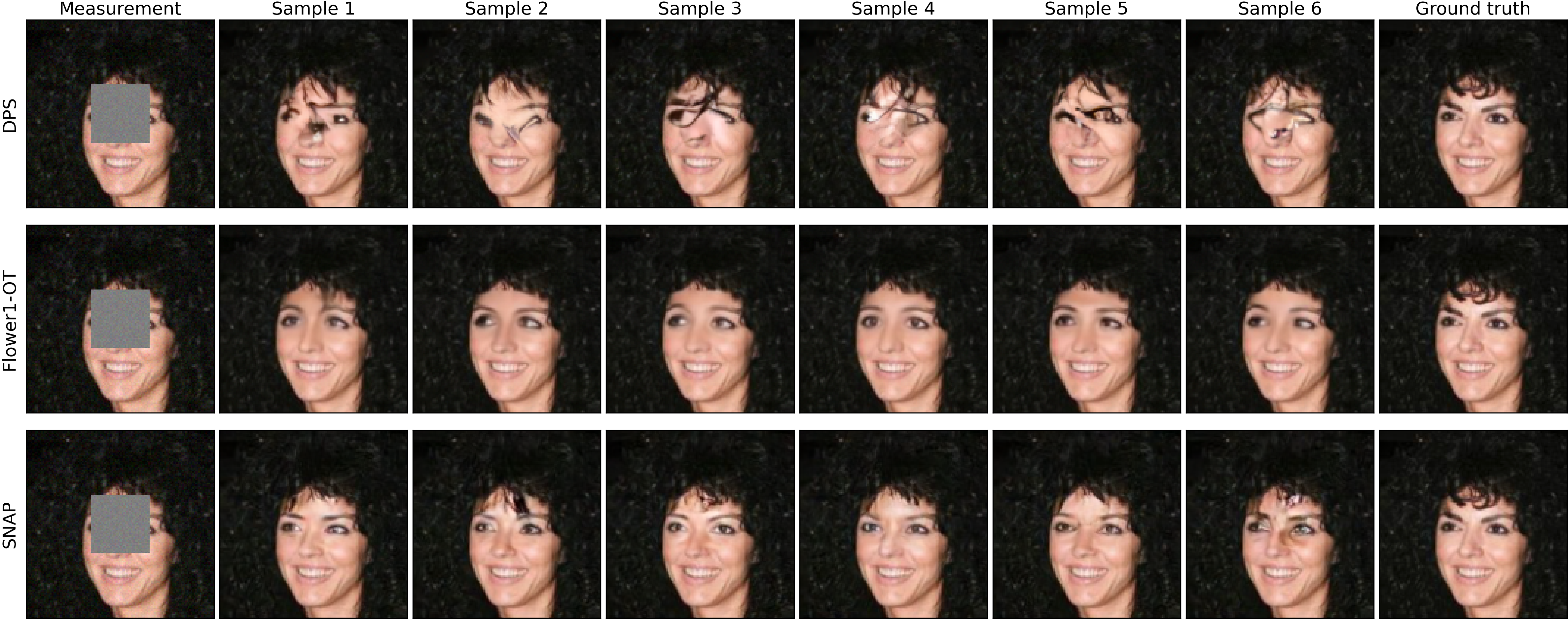}
\caption{Further independent \methodabbrev draws for box inpainting.}
\label{fig:panel_draws3}
\end{figure}

Figure~\ref{fig:panel_draws2} shows further \methodabbrev draws for box inpainting.

\begin{figure}[ht]
\centering
\includegraphics[width=\textwidth]{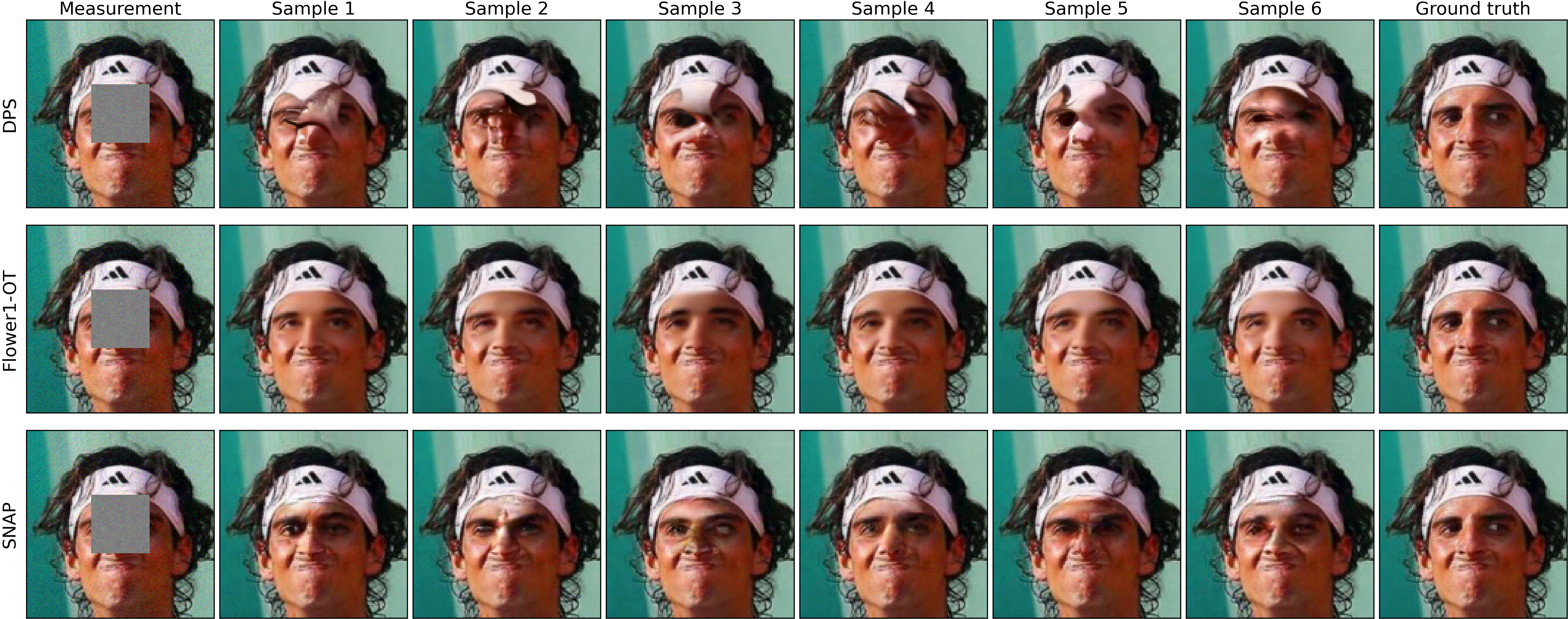}
\caption{Further independent \methodabbrev draws for box inpainting.}
\label{fig:panel_draws2}
\end{figure}

\subsection{Ablations}
\paragraph{Noise misspecification.}\label{app:noise}
Since $\sn$ enters the source in closed form, a wrong value at test time changes how the source allocates uncertainty rather than breaking the method: by \eqref{eq:source-spectrum}, over-estimating $\sn$ loosens the source toward the working prior, while under-estimating it shrinks the source toward the least-squares anchor.
Table~\ref{tab:ablmisspec} tests the first case: the model and its source keep $\sn=0.05$, while the test measurements carry noise between $0.01$ and $0.05$, an over-estimate of up to a factor of $5$.
Reconstruction degrades gracefully. PSNR still rises as the measurements get cleaner, from $32.90$ to $34.29$\,dB, and LPIPS stays between $0.017$ and $0.020$, although the gain over the input shrinks from $5.09$ to $4.33$\,dB as the mismatch grows.
Over this range the method therefore does not require $\sn$ to be known exactly; under-estimation is not tested here.

\begin{table}[t]
\centering\small
\caption{CelebA Gaussian deblurring  trained at $\sn=0.05$ and evaluated on $y$ with different measurement noise level. ${\dagger}$ marks the matched case, where the test noise equals the training noise.}
\label{tab:ablmisspec}
\begin{tabular}{lcccccc}
\toprule
Measurement noise $\sn$& PSNR$\uparrow$ & SSIM$\uparrow$ & LPIPS$\downarrow$ & input PSNR  & Gain \\
\midrule
$0.01$            &34.29 & 0.940 & 0.020 & 29.96 &	+4.33\\
$0.02$            &34.15 & 0.939 & 0.019 & 29.62 &	+4.53\\
$0.03$            &33.89 & 0.936 & 0.018 & 29.12 &	+4.77\\
$0.04$            &33.50 & 0.930 & 0.017 & 28.52 &  +4.98\\
$0.05^{\dagger}$  &32.90 & 0.919 & 0.018 & 27.81 &	+5.09\\
\bottomrule
\end{tabular}
\end{table}

\paragraph{Working-prior scale $\tau$.}
Proposition~\ref{prop:invariance} states that $\tau$ shapes the source and the transport but not the target, so it is a design parameter rather than a modelling choice.
Table~\ref{tab:abltau} reports CelebA Gaussian deblurring for $\tau\in\{0.15,0.5,1.0\}$.
Distortion is indistinguishable between $\tau=0.15$ and $\tau=0.5$ at every $M$ and falls by about $0.4$\,dB at $\tau=1.0$, while LPIPS is best at $\tau=0.5$.
The relative spread of the draws is $5.49\%$, $5.48\%$ and $5.72\%$ for the three settings, essentially unchanged, as the invariance predicts.
The method therefore does not require $\tau$ to be tuned precisely, and we use the per-operator values listed in App.~\ref{app:train_imp}.

\begin{table}[t]
\centering\small
\caption{\textbf{Working-prior scale $\tau$.} CelebA Gaussian deblurring, $100$ test measurements, input PSNR $27.24$\,dB. $M$ is the number of averaged draws}
\label{tab:abltau}
\setlength{\tabcolsep}{5pt}
\renewcommand{\arraystretch}{0.95}
\begin{tabular}{lccccccccc}
\toprule
& \multicolumn{3}{c}{$\tau=0.15$} & \multicolumn{3}{c}{$\tau=0.5$} & \multicolumn{3}{c}{$\tau=1.0$} \\
\cmidrule(lr){2-4}\cmidrule(lr){5-7}\cmidrule(lr){8-10}
$M$ & PSNR$\uparrow$ & SSIM$\uparrow$ & LPIPS$\downarrow$ & PSNR$\uparrow$ & SSIM$\uparrow$ & LPIPS$\downarrow$ & PSNR$\uparrow$ & SSIM$\uparrow$ & LPIPS$\downarrow$ \\
\midrule
$1$   & \textbf{32.941} & \textbf{0.9198} & 0.0190 & 32.922 & 0.9187 & \textbf{0.0186} & 32.512 & 0.9060 & 0.0244 \\
$4$   & \textbf{34.966} & \textbf{0.9463} & 0.0209 & 34.948 & 0.9457 & \textbf{0.0181} & 34.588 & 0.9385 & 0.0197 \\
$16$  & \textbf{35.688} & \textbf{0.9537} & 0.0275 & 35.663 & 0.9532 & \textbf{0.0234} & 35.323 & 0.9475 & 0.0249 \\
$100$ & \textbf{35.913} & \textbf{0.9558} & 0.0300 & 35.885 & 0.9553 & \textbf{0.0256} & 35.552 & 0.9500 & 0.0273 \\
\bottomrule
\end{tabular}
\end{table}

\paragraph{Sampling ratio $(r,t)$.}
The fraction $p_{\mathrm{ratio}}\defn\Pr[r{=}t]$ of training samples drawn with $r=t$ controls how often the objective reduces to plain conditional flow matching.
Table~\ref{tab:ablratio} reports CelebA Gaussian deblurring at $\tau=0.15$ for $p_{\mathrm{ratio}}\in\{0.25,0.5,0.75\}$.
Setting $p_{\mathrm{ratio}}=0.25$ is worse at every $M$, by about $0.3$\,dB PSNR at a single draw, while $0.5$ and $0.75$ are indistinguishable within the precision reported.
We therefore use $p_{\mathrm{ratio}}=0.5$ throughout.
The trend in $M$ is the same for all three settings, with PSNR and SSIM improving as draws are averaged and LPIPS degrading, as the average moves toward the posterior mean.

\begin{table}[t]
\centering\small
\caption{\textbf{$(r,t)$ sampling ratio.} CelebA Gaussian deblurring, $\tau=0.15$, $100$ test measurements, input PSNR $27.24$\,dB. $M$ is the number of averaged draws.}
\label{tab:ablratio}
\setlength{\tabcolsep}{5pt}
\renewcommand{\arraystretch}{0.95}
\begin{tabular}{lccccccccc}
\toprule
& \multicolumn{3}{c}{$p_{\mathrm{ratio}}=0.25$} & \multicolumn{3}{c}{$p_{\mathrm{ratio}}=0.5$} & \multicolumn{3}{c}{$p_{\mathrm{ratio}}=0.75$} \\
\cmidrule(lr){2-4}\cmidrule(lr){5-7}\cmidrule(lr){8-10}
$M$ & PSNR$\uparrow$ & SSIM$\uparrow$ & LPIPS$\downarrow$ & PSNR$\uparrow$ & SSIM$\uparrow$ & LPIPS$\downarrow$ & PSNR$\uparrow$ & SSIM$\uparrow$ & LPIPS$\downarrow$ \\
\midrule
$1$   & 32.567 & 0.9130 & 0.0216 & 32.941 & 0.9198 & 0.0190 & 32.976 & 0.9210 & 0.0189 \\
$4$   & 34.679 & 0.9429 & 0.0197 & 34.966 & 0.9463 & 0.0209 & 34.976 & 0.9465 & 0.0206 \\
$16$  & 35.438 & 0.9513 & 0.0266 & 35.688 & 0.9537 & 0.0275 & 35.682 & 0.9536 & 0.0268 \\
$100$ & 35.676 & 0.9537 & 0.0296 & 35.913 & 0.9558 & 0.0300 & 35.902 & 0.9557 & 0.0292\\
\bottomrule
\end{tabular}
\end{table}

\paragraph{Few-step composition.}
Composing $\ubm_{r,t}$ over a uniform partition of $[0,1]$ gives $k$-step sampling with no retraining, and Table~\ref{tab:ablsteps} reports $k\in\{1,2,4,8\}$ on CelebA Gaussian deblurring.
More steps are not monotonically better.
All metrics peak at $k=2$ and decline afterwards, and $k=8$ is worse than a single step for $M\ge16$.
The model is trained to jump from the source to the data, so discretizing the trajectory evaluates $\ubm^{\thetabm}$ at intermediate $(r,t)$ pairs at which its error compounds across steps rather than cancelling.
The gain from $k=2$ is small, so we report $k=1$ throughout.

\begin{table}[t]
\centering\footnotesize
\setlength{\abovecaptionskip}{4pt}
\setlength{\belowcaptionskip}{2pt}
\caption{\textbf{Trajectory steps $k$.} CelebA Gaussian deblurring, $\tau=0.15$, $100$ test measurements, input PSNR $27.24$\,dB. $M$ is the number of averaged draws. \textbf{Best} in each row and metric is in bold.}
\label{tab:ablsteps}
\setlength{\tabcolsep}{3pt}
\renewcommand{\arraystretch}{0.9}
\resizebox{\textwidth}{!}{%
\begin{tabular}{lcccccccccccc}
\toprule
& \multicolumn{3}{c}{$k=1$} & \multicolumn{3}{c}{$k=2$} & \multicolumn{3}{c}{$k=4$} & \multicolumn{3}{c}{$k=8$} \\
\cmidrule(lr){2-4}\cmidrule(lr){5-7}\cmidrule(lr){8-10}\cmidrule(lr){11-13}
$M$ & PSNR$\uparrow$ & SSIM$\uparrow$ & LPIPS$\downarrow$ & PSNR$\uparrow$ & SSIM$\uparrow$ & LPIPS$\downarrow$ & PSNR$\uparrow$ & SSIM$\uparrow$ & LPIPS$\downarrow$ & PSNR$\uparrow$ & SSIM$\uparrow$ & LPIPS$\downarrow$ \\
\midrule
$1$   & 32.941 & 0.9198 & 0.0190 & \textbf{33.084} & \textbf{0.9211} & \textbf{0.0172} & 33.037 & 0.9205 & \textbf{0.0172} & 32.992 & 0.9200 & 0.0173 \\
$4$   & 34.966 & 0.9463 & 0.0209 & \textbf{35.044} & \textbf{0.9469} & \textbf{0.0206} & 35.001 & 0.9465 & \textbf{0.0206} & 34.942 & 0.9458 & 0.0210 \\
$16$  & 35.688 & 0.9537 & \textbf{0.0275} & \textbf{35.729} & \textbf{0.9540} & \textbf{0.0275} & 35.688 & 0.9536 & 0.0276 & 35.621 & 0.9529 & 0.0281 \\
$100$ & 35.913 & 0.9558 & \textbf{0.0300} & \textbf{35.943} & \textbf{0.9561} & 0.0302 & 35.902 & 0.9557 & 0.0303 & 35.833 & 0.9550 & 0.0308 \\
\bottomrule
\end{tabular}}
\end{table}

\paragraph{Source distribution.}\label{app:ablsource}
Table~\ref{tab:ablsource} isolates the source quantitatively; Figures~\ref{fig:source_draws} and~\ref{fig:source_vs_m} show the same comparison sample by sample and as a function of the averaging budget $M$.
All three settings share the network, the conditioning, and the training schedule, differing only in the distribution the one-step map starts from: our measurement-adapted posterior source at $\tau=0.15$ versus isotropic Gaussian sources at $\tau=0.15$ and $\tau=1.0$.

\begin{figure}[ht]
\centering
\includegraphics[width=\textwidth]{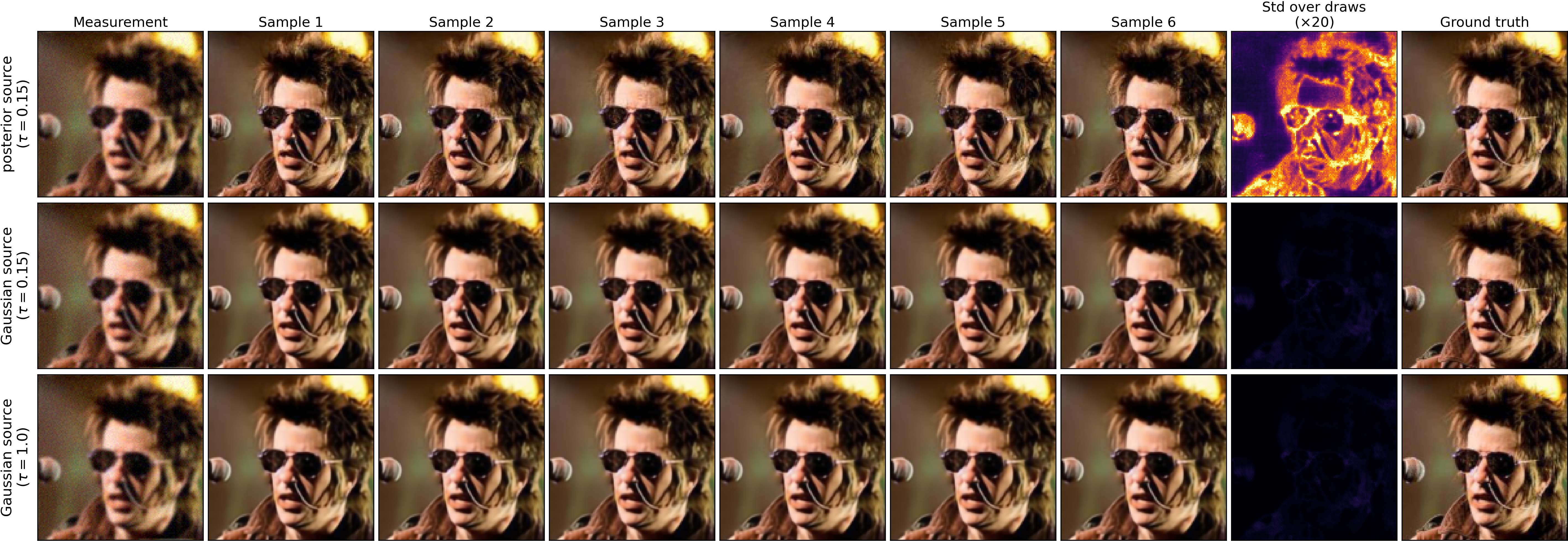}
\caption{\textbf{Source distribution, qualitative.} Six independent draws from a fixed CelebA Gaussian-deblurring measurement under the three sources, with the per-pixel standard deviation over draws on the same $\times 20$ scale in every row. The posterior source varies where the measurement is uninformative; both Gaussian sources collapse onto a single reconstruction.}
\label{fig:source_draws}
\end{figure}

Figure~\ref{fig:source_draws} shows six draws per source for one measurement.
Under the proposed posterior source the draws differ visibly in the hair, the skin texture and the region occluded by the sunglasses, and the standard deviation over draws concentrates on exactly the edges and high-frequency structures that the blur has suppressed, leaving the smooth background, which the measurement already determines, near zero.
Under either isotropic Gaussian source the six draws are visually indistinguishable and the standard-deviation map is essentially black at the same $\times 20$ gain: the map has become a point estimator that happens to take a random input.
Raising the Gaussian scale from $\tau=0.15$ to $\tau=1.0$ does not restore the spread, so the shape of the source is what dictates the spread rather than its size.
Figure~\ref{fig:source_vs_m} makes the consequence quantitative.
\begin{figure}[ht]
\centering
\includegraphics[width=0.75\textwidth]{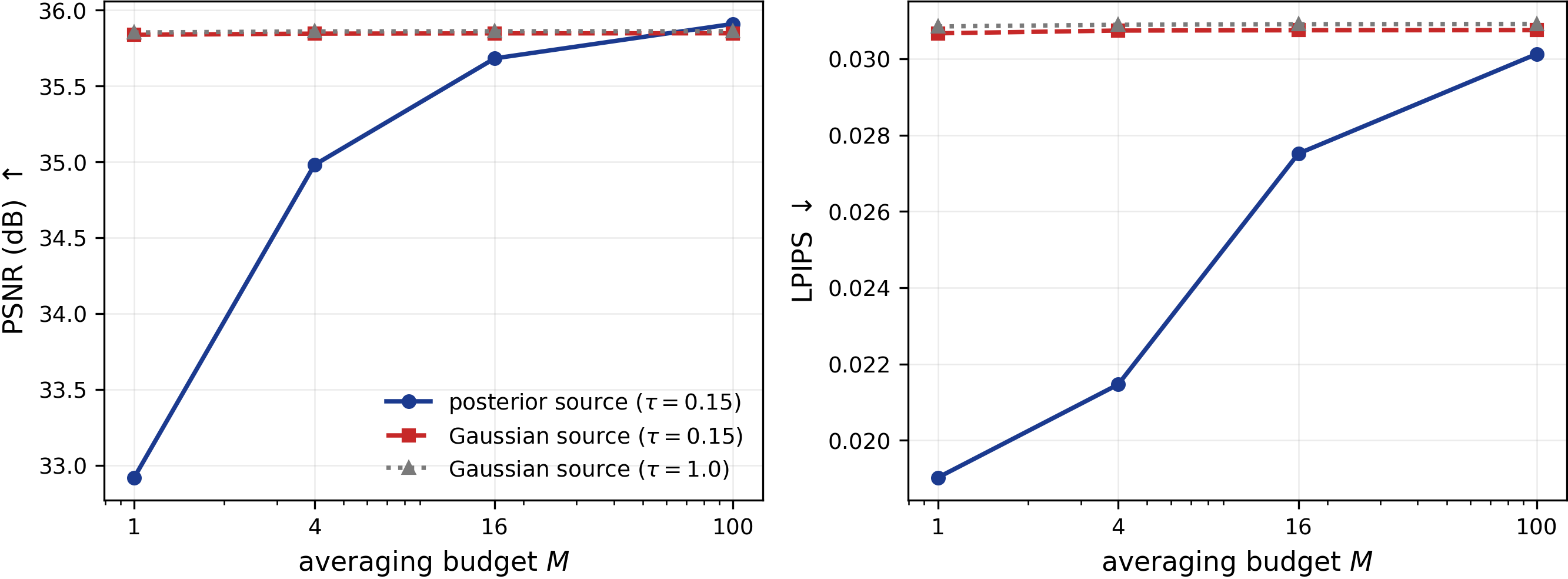}
\caption{\textbf{Source distribution, averaging budget.} PSNR (left, higher is better) and LPIPS (right, lower is better) against the number of averaged draws $M$ on CelebA Gaussian deblurring. Only the posterior source responds to $M$; the Gaussian sources are flat because their draws carry almost no variance to average away.}
\label{fig:source_vs_m}
\end{figure}

\subsection{Comparison with VFM}
\label{app:vfm}
VFM \cite{mammadov2026variational} also places the measurement in the source, but learns it: an adapter $q_\phi(\zbm\given\ybm)$ is trained jointly with an unconditional flow map.
Its results are reported in the latent space of a pretrained autoencoder, whereas \methodabbrev operates in pixel space, so the published numbers are not directly comparable.
We therefore compare the two constructions on the 2D benchmark of \cite{mammadov2026variational}. The prior is a $4{\times}4$ checkerboard on $[-2,2]^2$ with $20$k training samples, observed through $\ybm=\bm{a}^\top\xbm+\varepsilon$ with $\varepsilon\sim\Ncal(0,\sigma^2)$ and $\sigma=0.1$.
We use two forward operators that differ only in the direction of $\bm{a}$, an aligned operator $\bm{a}=(1,0)$, which is the one used in the original paper, and a rotated operator $\bm{a}=(\cos\tfrac{\pi}{5},\sin\tfrac{\pi}{5})$.
All methods share one backbone, a $6$-layer width-$512$ SiLU MLP in $\xbm_1$-prediction form with random-Fourier embeddings of $r$, $t$ and $\sigma$, and one budget, AdamW with learning rate $2{\times}10^{-4}$, batch $2048$, $50$k steps and EMA $0.999$.
The only thing that varies is where the measurement enters the generative process.  The learned-adapter baseline follows VFM.
The flow map never sees $\ybm$, and all measurement information is carried by a diagonal Gaussian adapter $q_\phi(\zbm\given\ybm)=\Ncal(\bm\mu_\phi(\ybm),\diag\bm\sigma_\phi^2(\ybm))$ trained jointly with the flow map under the variational objective, with an observation term, a KL term and an EMA copy of the weights for stability.
We also report a frozen-$\thetabm$ ablation in which the pretrained flow map is held fixed and only $q_\phi$ is trained.
\methodabbrev instead keeps the flow map conditioned on $\ybm$ and draws its source in closed form by \eqref{eq:rto}, with no adapter and no auxiliary losses.

The source covariance $\Wbm=\Ibm-\bm{a}\bm{a}^\top/(\norm{\bm{a}}_2^2+\lambda)$ is diagonal in the basis $\{\bm{a},\bm{a}^\perp\}$, and in the coordinate basis only when the two coincide.
For $\bm{a}=(1,0)$ they coincide, so a diagonal adapter is exactly expressive and the aligned operator is a parity check.
Rotating $\bm{a}$ breaks the coincidence while holding the prior, the noise level, the architecture and the training budget fixed.
The resulting gap can be quantified before any training: the KL divergence from $\Ncal(\myv,\tau^2\Wbm)$ to its best diagonal approximation is $0$ nats for the aligned operator at every $\tau$, and $1.576$ nats for the rotated  at $\tau=1$.

\begin{figure}[t]
\centering
\includegraphics[width=0.85\textwidth]{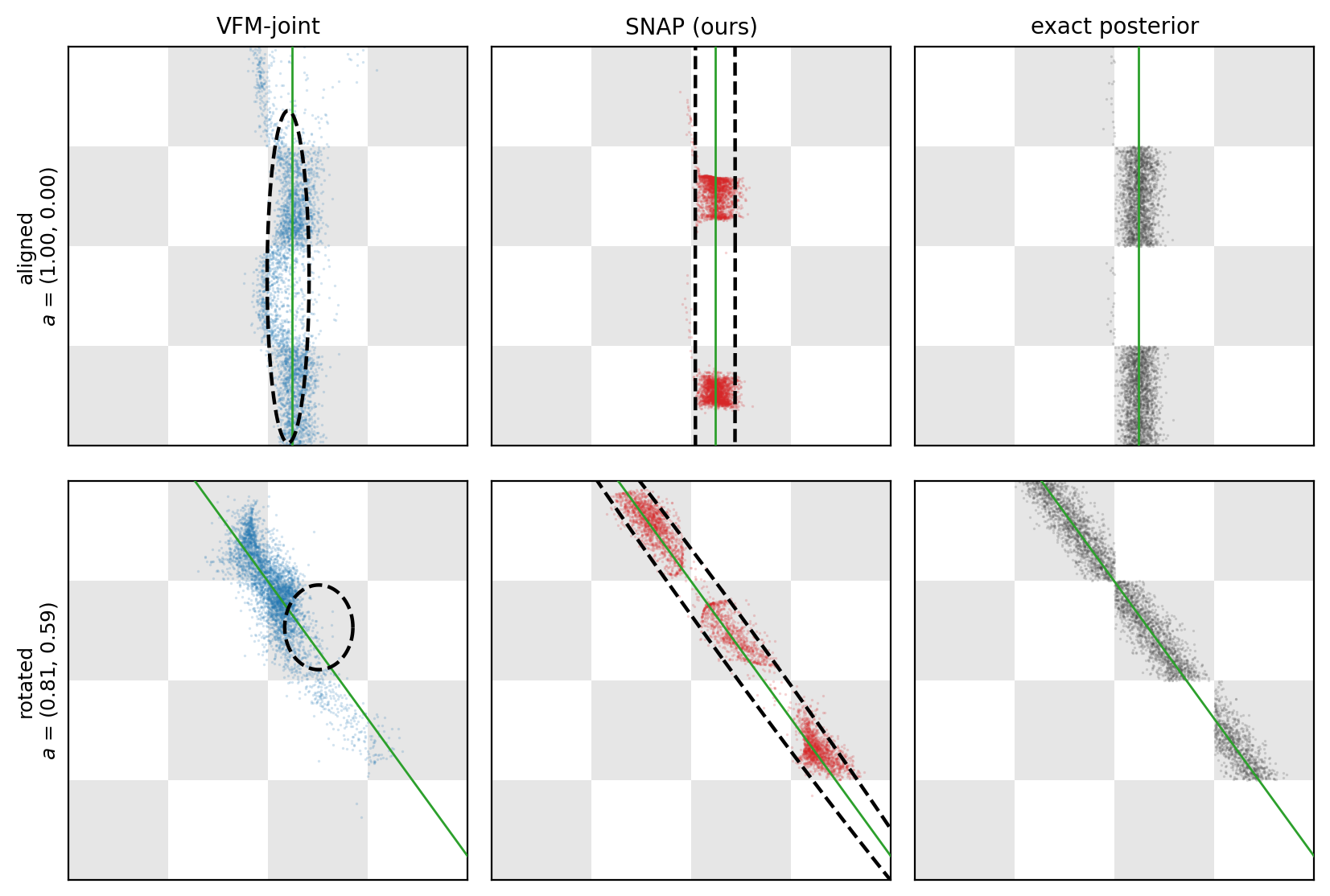}
\caption{One-step ($K{=}1$) posterior samples on the $4{\times}4$ checkerboard. Rows are the two forward operators and columns are the learned adapter of VFM, the \methodabbrev source, and the exact rejection-sampled posterior. The green line is the measurement constraint $\bm{a}^\top\xbm=\ybm$, and the dashed ellipse on each method panel is that method's $2\sigma$ source covariance, the learned diagonal $q_\phi(\zbm\given\ybm)$ for VFM and the closed-form $\tau^2\Wbm$ for \methodabbrev. In the top row the measurement direction is axis-aligned, both sources align with the constraint, and both recover the posterior. In the bottom row the constraint tilts. The \methodabbrev source tilts with it, whereas the diagonal adapter cannot and degenerates to a near-isotropic blob, so its samples concentrate on one mode and spill off the checkerboard support.}
\label{fig:vfm2d}
\end{figure}

\begin{table}[t]
\centering\small
\caption{Posterior MMD$^2$ ($\downarrow$) on the two-dimensional checkerboard benchmark at $K{=}1$, mean $\pm$ standard deviation over three seeds, each at its validation-selected setting. The aligned operator is a parity check, where a diagonal adapter is exactly expressive, and the rotated operator differs only in the direction of $\bm{a}$.}
\label{tab:vfm2d}
\begin{tabular}{lcc}
\toprule
Method & Aligned (parity) & Rotated (discriminating) \\
\midrule
\methodabbrev & $0.0027\pm0.0003$ & $0.0171\pm0.0012$ \\
VFM (joint) & $0.0109\pm0.0007$ & $0.0737\pm0.0199$ \\
VFM (frozen $\thetabm$) & $0.0102\pm0.0003$ & $0.1900\pm0.0015$ \\
\bottomrule
\end{tabular}
\end{table}

On the aligned operator, where the diagonal adapter can represent the exact source, all three methods are close, with \methodabbrev already the most accurate (Table~\ref{tab:vfm2d}).
On the rotated operator the two constructions separate.
The jointly trained adapter degrades by roughly a factor of seven and the frozen variant by more than an order of magnitude, while \methodabbrev changes far less, since its source rotates with the operator by construction.
The gaps are significant under a Welch test against the jointly trained adapter, with $p=0.0005$ on the aligned operator and $p=0.038$ on the rotated one.
Figure~\ref{fig:vfm2d} shows the mechanism, and the effect is also visible in the support of the samples, with a support accuracy of $0.97\pm0.004$ for \methodabbrev against $0.81\pm0.03$ for the jointly trained adapter on the rotated operator.

\subsection{Comparison with NullFlow}
\label{app:nullflow}
NullFlow is defined only for noiseless measurements, so it cannot enter the
benchmarks of Sec.~\ref{sec:exp}. We instead sweep the test noise on CelebA box
inpainting from $\sigma_n=0$, where NullFlow is exact, to the $\sigma_n=0.05$ of
our main experiments (Table~\ref{tab:nullflow}). NullFlow is trained noiselessly
with an LPIPS and feature loss, and SNaP at $\sigma_n=0.05$ with the standard
MeanFlow loss, so LPIPS favors NullFlow.

\begin{table}[t]
\centering\small
\caption{NullFlow vs.\ SNaP on CelebA box inpainting ($M{=}1$) as test noise grows.}
\label{tab:nullflow}
\begin{tabular}{l ccc c ccc}
\toprule
& \multicolumn{3}{c}{NullFlow} & & \multicolumn{3}{c}{\methodabbrev} \\
\cmidrule{2-4}\cmidrule{6-8}
Test $\sn$ & PSNR$\uparrow$ & SSIM$\uparrow$ & LPIPS$\downarrow$ & & PSNR$\uparrow$ & SSIM$\uparrow$ & LPIPS$\downarrow$ \\
\midrule
$0.00$ & $33.50$ & $0.977$ & $0.012$ & & $31.37$ & $0.947$ & $0.025$ \\
$0.01$ & $33.27$ & $0.970$ & $0.012$ & & $31.35$ & $0.947$ & $0.024$ \\
$0.02$ & $32.65$ & $0.952$ & $0.016$ & & $31.30$ & $0.946$ & $0.023$ \\
$0.05$ & $30.11$ & $0.855$ & $0.055$ & & $30.74$ & $0.934$ & $0.021$ \\
\bottomrule
\end{tabular}
\end{table}

Table~\ref{tab:nullflow} reports both methods on CelebA box inpainting, using a single draw and $\lambda$ frozen at each model's training value.
At $\sn=0$, NullFlow is the stronger method, which is expected, since its source is exact in the noiseless limit and every state satisfies $\Abm\xbm=\ybm$.
Its accuracy falls quickly as noise is added, and by $\sn=0.05$ its SSIM has dropped from $0.977$ to $0.855$ and its LPIPS has more than quadrupled.
This is the failure mode identified in Sec.~\ref{sec:background}, where the least-squares anchor carries $\pinv{\Abm}\nbm$ that a null-space flow cannot remove.
\methodabbrev changes little over the same range and is the better method at $\sn=0.05$ on all three metrics.
The two methods therefore occupy different regimes, with NullFlow suited to noiseless measurements and \methodabbrev to noisy ones.

\subsection{Irreducible regression error: \methodabbrev{} vs.\ isotropic sources}
\label{app:irreducible}
 
Proposition~\ref{prop:invariance} fixes the target of the exact one-step map, but not how hard that map is to learn.
We therefore compare sources through the regression problem that training solves.
At $r=t$ the MeanFlow target \eqref{eq:mftarget} reduces to the conditional velocity $\xbm_1-\xbm_0$, which the network regresses on $(\zbm_t,\ybm)$ with $\zbm_t=(1-t)\xbm_0+t\xbm_1$.
The error left by the best predictor,
\begin{equation}
\mathcal{E}_t(\ybm)\defn\E\!\left[\tr\Cov(\xbm_1-\xbm_0\given\zbm_t,\ybm)\given\ybm\right],
\label{eq:irreducible-error}
\end{equation}
cannot be removed by any network, so a source with a smaller $\mathcal{E}_t$ poses an easier regression problem at time $t$.
 
\begin{lemma}
\label{lem:irreducible}
Let $\sn>0$, fix $\ybm$ and $t\in(0,1)$, and let $\xbm_1\sim p(\cdot\given\ybm)$ have finite second moments.
Let $\xbm_0\sim q_\tau(\cdot\given\ybm)$, and let $\widetilde\xbm_0=\widetilde{\bm K}\ybm+\widetilde{\bm\xi}$ be a source of the form \eqref{eq:source-family} with $\widetilde{\bm\xi}\sim\Ncal(\zerobm,\widetilde{\bm\Sigma})$ and $\widetilde{\bm\Sigma}\succeq\tau^2\Wbm$, each conditionally independent of $\xbm_1$ given $\ybm$.
Let $\mathcal{E}_t(\ybm)$ and $\widetilde{\mathcal{E}}_t(\ybm)$ be the errors \eqref{eq:irreducible-error} for the interpolants $\zbm_t=(1-t)\xbm_0+t\xbm_1$ and $\widetilde\zbm_t=(1-t)\widetilde\xbm_0+t\xbm_1$.
Then $\mathcal{E}_t(\ybm)\le\widetilde{\mathcal{E}}_t(\ybm)$.
In particular, this holds for the isotropic source $\Ncal(\zerobm,\sigma^2\Ibm)$ for every $\sigma\ge\tau$, since $\tau^2\Wbm\preceq\tau^2\Ibm\preceq\sigma^2\Ibm$.
\end{lemma}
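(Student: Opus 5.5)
The plan is to rewrite the irreducible error \eqref{eq:irreducible-error} as a minimum-mean-squared-error problem for $\xbm_1$ given a Gaussian-corrupted observation. Then the comparison follows by showing that the isotropic or enlarged source gives a \emph{degraded} version of the same observation.

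\textbf{Reduction to an MMSE.} Throughout we condition on $\ybm$, which is held fixed, so $\bm K\ybm$ and $\widetilde{\bm K}\ybm$ are deterministic vectors. On the straight-line path, $\zbm_t=(1-t)\xbm_0+t\xbm_1$, so
\[
\xbm_1-\xbm_0=\frac{\xbm_1-\zbm_t}{1-t}.
\]
Given $\zbm_t$, the second term is known, so $\Cov(\xbm_1-\xbm_0\given\zbm_t,\ybm)=(1-t)^{-2}\Cov(\xbm_1\given\zbm_t,\ybm)$. Hence
\[
\mathcal{E}_t(\ybm)=(1-t)^{-2}\,\E\!\left[\tr\Cov(\xbm_1\given\zbm_t,\ybm)\given\ybm\right],
\]
and the same identity holds for $\widetilde{\mathcal E}_t$ with $\widetilde\zbm_t$.

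By Proposition~\ref{prop:optimal-source} and Lemma~\ref{lem:rto}, $\xbm_0=\myv+\bm\xi$ with $\bm\xi\sim\Ncal(\zerobm,\tau^2\Wbm)$ independent of $\xbm_1$ given $\ybm$. Therefore $\zbm_t=t\xbm_1+(1-t)\myv+(1-t)\bm\xi$. Subtracting the known constant $(1-t)\myv$ is a bijection of the observation and does not change the conditional covariance. So $\mathcal E_t$ is, up to the factor $(1-t)^{-2}$, the MMSE for estimating $\xbm_1$ from $\bm o\defn t\xbm_1+(1-t)\bm\xi$. Likewise, $\widetilde{\mathcal E}_t$ is the MMSE from $\widetilde{\bm o}\defn t\xbm_1+(1-t)\widetilde{\bm\xi}$.

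\textbf{Coupling and data processing.} Since $\widetilde{\bm\Sigma}\succeq\tau^2\Wbm$, take $\bm\eta\sim\Ncal(\zerobm,\widetilde{\bm\Sigma}-\tau^2\Wbm)$ independent of $(\xbm_1,\bm\xi)$. Then $\bm\xi+\bm\eta\stackrel{d}{=}\widetilde{\bm\xi}$, jointly with $\xbm_1$. The MMSE depends only on the joint law of $(\xbm_1,\widetilde{\bm o})$, so we may take $\widetilde{\bm o}=\bm o+(1-t)\bm\eta$. This is a measurable function of $(\bm o,\bm\eta)$.

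By the law of total variance, conditioning on a finer $\sigma$-algebra can only lower the expected trace of the conditional covariance. This gives
\[
\E\,\tr\Cov(\xbm_1\given\widetilde{\bm o})\ \ge\ \E\,\tr\Cov(\xbm_1\given\bm o,\bm\eta)=\E\,\tr\Cov(\xbm_1\given\bm o).
\]
The final equality holds because $\bm\eta$ is independent of $(\xbm_1,\bm o)$. Multiplying by $(1-t)^{-2}$ yields $\mathcal E_t(\ybm)\le\widetilde{\mathcal E}_t(\ybm)$. Finite second moments of $\xbm_1$ guarantee that all these quantities are finite.

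\textbf{Isotropic case and the main obstacle.} For the isotropic case, take $\widetilde{\bm K}=\zerobm$ and $\widetilde{\bm\Sigma}=\sigma^2\Ibm$. Since $w_i\in(0,1]$ by \eqref{eq:source-spectrum}, we have $\Wbm\preceq\Ibm$, and so $\tau^2\Wbm\preceq\sigma^2\Ibm$ whenever $\sigma\ge\tau$.

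The only delicate point is justifying that the deterministic centers $\bm K\ybm$ and $\widetilde{\bm K}\ybm$ can be discarded, even though they differ between the two sources. This works only because $\ybm$ is conditioned on throughout, so each center is a known translation of its own observation and affects neither MMSE. The conditional independence of the source noise from $\xbm_1$ given $\ybm$ is exactly what the coupling step needs.
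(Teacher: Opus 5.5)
Your proof is correct and follows essentially the same route as the paper's. Both arguments remove the deterministic centers by translation and reduce $\mathcal{E}_t$ to $(1-t)^{-2}$ times the MMSE of $\xbm_1$ from a Gaussian-corrupted observation. Both then couple $\widetilde{\bm\xi}=\bm\xi+\bm\eta$ with an independent $\bm\eta\sim\Ncal(\zerobm,\widetilde{\bm\Sigma}-\tau^2\Wbm)$, so that $\widetilde{\bm o}$ is a degraded copy of $\bm o$. The only difference is in the final step: you use monotonicity of the MMSE under a finer conditioning $\sigma$-algebra together with the independence of $\bm\eta$, whereas the paper uses the Markov property, the tower rule and a Pythagorean identity, which also yields the gap explicitly as $(1-t)^{-2}\,\E\|\bm\mu_t-\widetilde{\bm\mu}_t\|_2^2$.
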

 
The lemma requires nothing of the posterior beyond finite second moments, and the center of the source does not enter it: given $\ybm$, the center only shifts $\zbm_t$ by a constant, which conditioning removes.
The center is what Proposition~\ref{prop:optimal-source} chooses to shorten the displacement; the lemma shows that the covariance $\tau^2\Wbm$ is what lowers the regression error.
Both isotropic sources of Table~\ref{tab:ablsource} satisfy its assumption, since $\tau=0.15\le1$ there.
 
\begin{proof}
Given $\ybm$, write $\xbm_0=\myv+\bm\eta$ and $\widetilde\xbm_0=\bm c+\widetilde{\bm\xi}$ with $\bm c\defn\widetilde{\bm K}\ybm$, where $\bm\eta\sim\Ncal(\zerobm,\tau^2\Wbm)$ and $\widetilde{\bm\xi}$ are independent of $\xbm_1$.
Since $\myv$ and $\bm c$ are constants given $\ybm$, the maps $\zbm\mapsto(\zbm-(1-t)\myv)/t$ and $\zbm\mapsto(\zbm-(1-t)\bm c)/t$ are invertible, so conditioning on $\zbm_t$ and on $\widetilde\zbm_t$ is the same as conditioning on
\begin{equation*}
\bar\zbm_t\defn\xbm_1+\tfrac{1-t}{t}\,\bm\eta
\qquad\text{and}\qquad
\bar{\widetilde\zbm}_t\defn\xbm_1+\tfrac{1-t}{t}\,\widetilde{\bm\xi},
\end{equation*}
respectively.
Each is $\xbm_1$ observed through additive Gaussian noise, and the two differ only in that noise.
 
Since $\widetilde{\bm\Sigma}-\tau^2\Wbm\succeq\zerobm$, let $\bm\zeta\sim\Ncal(\zerobm,\widetilde{\bm\Sigma}-\tau^2\Wbm)$ be independent of $(\xbm_1,\bm\eta)$ given $\ybm$.
Then $\bm\eta+\bm\zeta\sim\Ncal(\zerobm,\widetilde{\bm\Sigma})$, and because $\widetilde{\mathcal{E}}_t(\ybm)$ depends only on the joint law of $\xbm_1$ and $\widetilde\zbm_t$ given $\ybm$, we may take $\widetilde{\bm\xi}=\bm\eta+\bm\zeta$.
This gives $\bar{\widetilde\zbm}_t=\bar\zbm_t+\tfrac{1-t}{t}\,\bm\zeta$, so $\xbm_1\to\bar\zbm_t\to\bar{\widetilde\zbm}_t$ is a Markov chain given $\ybm$.
 
Let $\bm\mu_t\defn\E[\xbm_1\given\bar\zbm_t,\ybm]$ and $\widetilde{\bm\mu}_t\defn\E[\xbm_1\given\bar{\widetilde\zbm}_t,\ybm]$.
The Markov property gives $\E[\xbm_1\given\bar\zbm_t,\bar{\widetilde\zbm}_t,\ybm]=\bm\mu_t$, and the tower rule gives $\widetilde{\bm\mu}_t=\E[\bm\mu_t\given\bar{\widetilde\zbm}_t,\ybm]$.
Hence $\xbm_1-\bm\mu_t$ is orthogonal to $\bm\mu_t-\widetilde{\bm\mu}_t$, and
\begin{equation}
\E\!\left[\norm{\xbm_1-\widetilde{\bm\mu}_t}_2^2\given\ybm\right]
=
\E\!\left[\norm{\xbm_1-\bm\mu_t}_2^2\given\ybm\right]
+
\E\!\left[\norm{\bm\mu_t-\widetilde{\bm\mu}_t}_2^2\given\ybm\right].
\label{eq:irr-pythagoras}
\end{equation}
Finally, $\xbm_1-\xbm_0=(\xbm_1-\zbm_t)/(1-t)$ and $\zbm_t$ is fixed once it is conditioned on, so $\Cov(\xbm_1-\xbm_0\given\zbm_t,\ybm)=(1-t)^{-2}\Cov(\xbm_1\given\zbm_t,\ybm)$ and $\mathcal{E}_t(\ybm)=(1-t)^{-2}\,\E[\norm{\xbm_1-\bm\mu_t}_2^2\given\ybm]$; likewise $\widetilde{\mathcal{E}}_t(\ybm)=(1-t)^{-2}\,\E[\norm{\xbm_1-\widetilde{\bm\mu}_t}_2^2\given\ybm]$.
Dividing \eqref{eq:irr-pythagoras} by $(1-t)^2$ gives
\begin{equation*}
\widetilde{\mathcal{E}}_t(\ybm)-\mathcal{E}_t(\ybm)
=
\frac{1}{(1-t)^2}\,\E\!\left[\norm{\bm\mu_t-\widetilde{\bm\mu}_t}_2^2\given\ybm\right]
\;\ge\;0 .
\end{equation*}
\end{proof}

\paragraph{Displacement.}
The same source also shortens the path relative to the isotropic source.
Let $\widetilde\xbm_0\sim\Ncal(\zerobm,\tau^2\Ibm)$ be independent of $(\xbm_1,\nbm)$, and let $E_i\defn\E[(\vbm_i^\top\xbm_1)^2]$ for a right singular vector $\vbm_i$ of $\Abm$.
For any $\xbm_1$ with finite second moments,
\begin{equation}
\E\big[(\vbm_i^\top(\xbm_1-\widetilde\xbm_0))^2\big]-\E\big[(\vbm_i^\top(\xbm_1-\xbm_0))^2\big]
=(1-w_i^2)E_i+\tau^2(1-w_i)^2\;\ge\;0,
\label{eq:displacement-gap}
\end{equation}
with equality if and only if $s_i=0$.
Indeed, writing $a_i\defn\vbm_i^\top\xbm_1$ and $n_i\defn\ubm_i^\top\nbm$ (with $n_i\defn0$ if $s_i=0$), the Tikhonov center gives $\vbm_i^\top(\xbm_1-\xbm_0)=w_ia_i-\frac{s_i}{s_i^2+\lambda}\,n_i-\vbm_i^\top\bm\eta$, a sum of uncorrelated terms with second moments $w_i^2E_i$, $\tau^2w_i(1-w_i)$ and $\tau^2w_i$, whereas the isotropic source gives $E_i+\tau^2$.
%% ---- end optional ----
 
\paragraph{Evaluation.}
Figure~\ref{fig:irreducible} evaluates $\mathcal{E}_t$ for $q_\tau(\cdot\given\ybm)$ and $\Ncal(\zerobm,\tau^2\Ibm)$ in two settings where it is available exactly.
On the two-dimensional mixture of App.~\ref{app:toy} at $y=2.5$, the conditional covariance is computed by quadrature against the closed-form posterior.
The two sources coincide in the null direction there, so the gap comes entirely from the observed direction.
On CelebA Gaussian deblurring, gap is evaluated under a stationary Gaussian prior with the empirical power spectrum, for which $\Abm$, $\Wbm$ and the posterior covariance are simultaneously diagonal in the Fourier basis.
There the gap is present at every $t$ and peaks at $56\%$, since the graded Fourier weights $w_i$ leave the source something to exploit in every direction.

\begin{figure}[ht]
\centering
\includegraphics[width=0.75\textwidth]{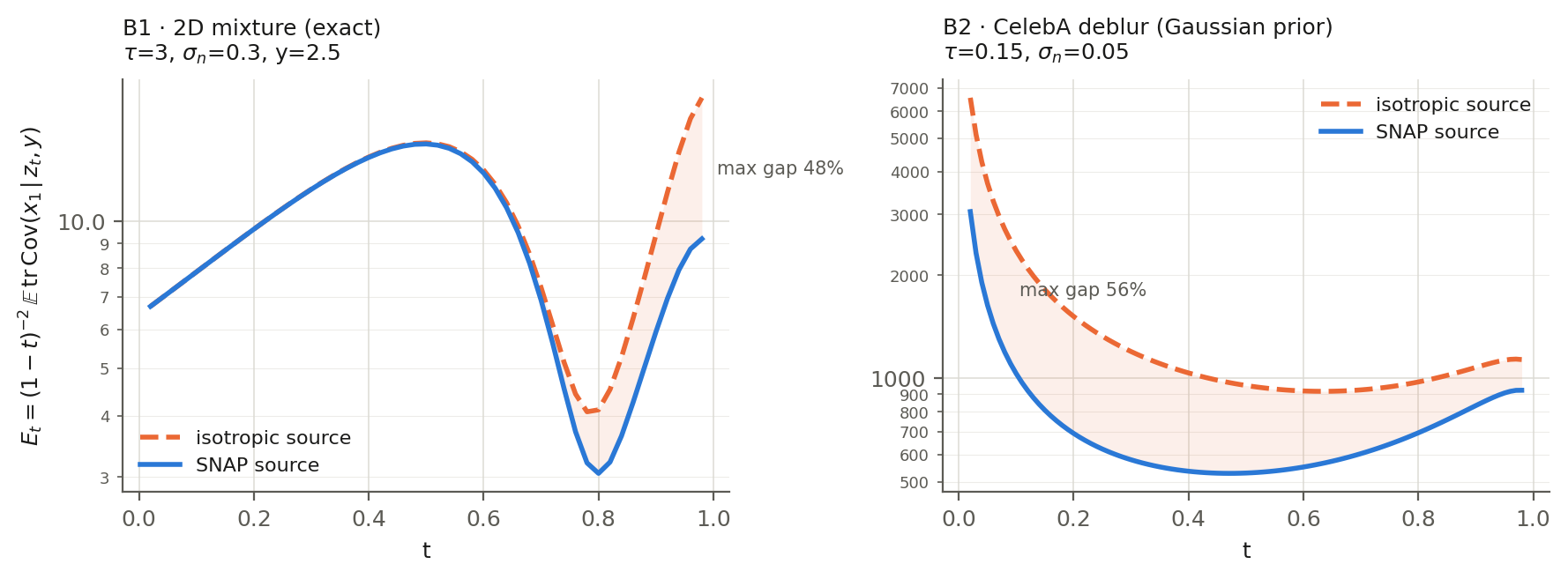}
\caption{\textbf{Irreducible velocity-regression error} $\mathcal{E}_t$ of \eqref{eq:irreducible-error} under the \methodabbrev source $q_\tau(\cdot\given\ybm)$ and the isotropic source $\Ncal(\zerobm,\tau^2\Ibm)$. \textbf{Left:} two-dimensional mixture of App.~\ref{app:toy} ($\tau=3$, $\sn=0.3$, $y=2.5$), exact. \textbf{Right:} CelebA Gaussian deblurring ($\tau=0.15$, $\sn=0.05$) under a stationary Gaussian prior. Lemma~\ref{lem:irreducible} guarantees the ordering; the shaded region is the gap.}
\label{fig:irreducible}
\end{figure}
 
\paragraph{Scope.}
Three limits apply.
First, the lemma concerns the $r=t$ part of the objective. For $r<t$ the target \eqref{eq:mftarget} also contains the network's own derivatives, so the lemma does not transfer verbatim to the interior of the $(r,t)$ range; a fraction $p_{\mathrm{ratio}}$ of training pairs have $r=t$ (Table~\ref{tab:ablratio}).
Second, like the displacement in \eqref{eq:displacement-decomposition}, $\mathcal{E}_t$ decreases as the source covariance shrinks, so it cannot be used to choose $\bm\Sigma$, and the lemma only compares $q_\tau$ with sources that are at least as spread.
A narrower source can have a smaller $\mathcal{E}_t$: on the mixture of App.~\ref{app:toy}, where $\tau=3$, the source $\Ncal(\zerobm,\Ibm)$ has a smaller $\mathcal{E}_t$ than $q_\tau$ at every $t$, because it is narrower in the null direction.
Third, the lemma compares regression problems, not trained samplers; Table~\ref{tab:ablsource} and App.~\ref{app:toy} compare the samplers directly, with everything but the source held fixed.

App.~\ref{app:irreducible} shows that, for any posterior with finite second moments, the velocity-regression error that no network can remove is smaller under $q_\tau(\cdot\given\ybm)$ than under any isotropic source $\Ncal(\zerobm,\sigma^2\Ibm)$ with $\sigma\ge\tau$ (Lemma~\ref{lem:irreducible}); Table~\ref{tab:ablsource} and App.~\ref{app:toy} measure the effect on the learned map.

% \begin{table}[ht]
% \centering\footnotesize
% \setlength{\tabcolsep}{5pt}
% \renewcommand{\arraystretch}{0.95}
% \caption{\textbf{Transport reduction, Proposition~\ref{prop:transport}.} Left- and right-hand sides of \eqref{eq:transport-total} on CelebA, over $200$ test images.}
% \label{tab:propcheck}
% \begin{tabular}{lccccccc}
% \toprule
% Operator & $\tau$ & $\sn$ & $\lambda$ & $\tr\Wbm/n$ & LHS & RHS & rel.\ diff. \\
% \midrule
% Gaussian deblurring & $0.15$ & $0.05$ & $0.111$ & $0.817$ & $18242.90$ & $18242.62$ & $1.5{\times}10^{-5}$ \\
% Random inpainting   & $0.10$ & $0.01$ & $0.010$ & $0.703$ & $5681.13$  & $5680.71$  & $7.4{\times}10^{-5}$ \\
% Box inpainting      & $0.30$ & $0.05$ & $0.028$ & $0.122$ & $21108.12$ & $21107.12$ & $4.8{\times}10^{-5}$ \\
% Super-resolution    & $0.10$ & $0.05$ & $0.250$ & $0.800$ & $4514.06$  & $4515.29$  & $2.7{\times}10^{-4}$ \\
% \bottomrule
% \end{tabular}
% \end{table}

\subsection{Verification on a Gaussian-mixture inverse problem}
\label{app:toy}

We verify the construction on a problem where the posterior is available in
closed form. With a four-component Gaussian-mixture prior on $\R^{2}$ and
observation of the first coordinate ($\Abm=\ebm_0^{\!\top}$, $\sn=0.3$), the
posterior is itself a Gaussian mixture, and for $y\approx2.5$ it is bimodal over
$\xbm[1]\in\{\pm2.5\}$ with the MMSE estimate falling between the modes. The
setting therefore separates posterior sampling from posterior-mean estimation:
a point estimator lands in the empty valley between the modes, and a
mode-collapsed sampler reports the component variance $\varsigma^{2}=0.12$ in
the null direction rather than the posterior variance $6.36$.

\paragraph{Setup.} The clean data $\xbm_1\in\R^{2}$ is drawn from a
four-component equal-weight Gaussian mixture with isotropic component covariance
$\varsigma^{2}\Ibm$ ($\varsigma=0.35$) and means at $(\pm a,\pm a)$, $a=2.5$.
The forward operator observes only the first coordinate, so $m=1$, $n=2$ and
$\nullsp(\Abm)=\mathrm{span}(\ebm_1)$. The working-prior scale is $\tau=3$.

\paragraph{Closed-form posterior.} For each mixture component $k$,
\begin{equation}
\widetilde{\Sigmabm}_k=\big(\Sigmabm_k^{-1}+\Abm^{\!\top}\Abm/\sn^{2}\big)^{-1},
\quad
\widetilde{\boldsymbol{\mu}}_k=\widetilde{\Sigmabm}_k
\big(\Sigmabm_k^{-1}\boldsymbol{\mu}_k+\Abm^{\!\top}\ybm/\sn^{2}\big),
\end{equation}
and 
\begin{equation}
\widetilde{\pi}_k\propto\pi_k\,
\Ncal\big(\ybm;\Abm\boldsymbol{\mu}_k,\Abm\Sigmabm_k\Abm^{\!\top}+\sn^{2}\big), \qquad p(\xbm_1\given\ybm)=\sum_k\widetilde{\pi}_k
\Ncal(\widetilde{\boldsymbol{\mu}}_k,\widetilde{\Sigmabm}_k).
\end{equation}
All quantities labeled ``posterior'' below are computed from this expression.

\paragraph{Network and optimization.} An $\xbm$-prediction MLP (width $256$, depth $4$, SiLU) with random-Fourier embeddings (dimension $64$) for $r$, $t$ and $\sn$, conditioned on $\ybm$, with
$\utheta{r}{t}=\big(\mathrm{net}(\zbm,r,t,\ybm,\sn)-\zbm\big)/(1-r)$. We use the objective
\eqref{eq:spring-loss} with $p=1$ and $c=10^{-3}$; $t$ is logit-normal ($\mu=0$, $\sigma=1$) and $r=t$ with probability $0.75$, else $r\sim\Ucal(0,t)$. AdamW, learning rate $2\times10^{-4}$, batch size $4096$,
$20{,}000$ steps, EMA decay $0.999$. One-step sampling is
$\xbmhat_1=\mathrm{net}(\xbm_0,r{=}0,t{=}1,\ybm,\sn)$; few-step sampling composes $\ubm_{r,t}$ over a uniform partition of $[0,1]$.

\paragraph{The source is exact.} Drawing $2\times10^{5}$ samples from the perturb-and-solve construction in 
\eqref{eq:rto} at $y=2.5$ and comparing against the closed form reproduces
$\Ncal(\myv,\tau^{2}\Wbm)$ to Monte-Carlo accuracy:
$\norm{\Delta\myv}=3.8\times10^{-3}$ and
$\norm{\Delta\mathrm{cov}}_F=4.8\times10^{-3}$ (Table~\ref{tab:rto}). The
anisotropy predicted by \eqref{eq:source-spectrum} is visible directly: the observed
direction is pinned at variance $0.089$ while the null direction carries the
full working-prior variance $\tau^{2}=9$.

\paragraph{Posterior recovery.} 
Figure~\ref{fig:joint} shows the source, the
closed-form posterior, and $2\times10^{4}$ one-step samples for three
in-distribution measurements. The flow transports the vertically-spread source
onto both posterior modes at $\xbm[1]=\pm2.5$ while pinning $\xbm[0]\approx y$,
and the MMSE estimate falls in the empty valley between them. In the null
direction the one-step sampler produces variance $6.66$ against the closed-form
$6.36$; a mode-collapsed sampler would report $\varsigma^{2}=0.12$.
Table~\ref{tab:posterior} reports mean, per-direction variance and
MMD$^{2}$ at three measurements.

\paragraph{Per-direction calibration.} Projecting onto the right singular
vectors of $\Abm$ (here $\ebm_0$ with $s=1$ and $\ebm_1$ with $s=0$), the flow
assigns almost all of the null direction's freedom ($6.66$ against posterior
$6.36$, source $9.0$) and contracts the well-measured direction from the source
value $0.089$ toward the posterior $0.053$, reaching $0.079$
(Table~\ref{tab:calib}).

\paragraph{Gaussian ground truth.} With a single Gaussian prior matched to the
working prior ($\Sigmabm=\tau^{2}\Ibm$, $\tau=\varsigma=0.35$), the source
\emph{is} the posterior and the optimal map is the identity. Retraining
reproduces the source to $\norm{\Delta\myv}=5.6\times10^{-4}$ and matches the
Gaussian posterior with mean errors $\le0.05$ and
MMD$^{2}\le3\times10^{-3}$ (Table~\ref{tab:gauss}).

\begin{table}[ht]
\centering\small
\caption{Posterior match at three in-distribution measurements.}
\label{tab:posterior}
\begin{tabular}{cccccc}
\toprule
$y$ & mean (post) & mean (\methodabbrev) & $\norm{\Delta\text{mean}}$
    & $\Var$ (observed, null) post $\to$ \methodabbrev & $\mathrm{MMD}^{2}$ \\
\midrule
$2.5$  & $(2.5,0)$   & $(2.474,0.065)$   & $0.069$ & $(0.052,6.37)\!\to\!(0.079,6.66)$ & $4.0{\times}10^{-3}$ \\
$2.0$  & $(2.212,0)$ & $(2.045,0.051)$   & $0.174$ & $(0.052,6.37)\!\to\!(0.084,6.64)$ & $1.1{\times}10^{-2}$ \\
$-2.5$ & $(-2.5,0)$  & $(-2.444,0.108)$  & $0.121$ & $(0.052,6.37)\!\to\!(0.081,6.59)$ & $2.6{\times}10^{-3}$ \\
\bottomrule
\end{tabular}
\end{table}

\begin{table}[ht]
\centering\small
\caption{Single-Gaussian ground-truth check. The map reduces to the Wiener estimate, with the posterior mean recovered to within $0.05$ and the posterior variances over-estimated by $37$--$39\%$.}
\label{tab:gauss}
\begin{tabular}{cccccc}
\toprule
$y$ & mean (post) & mean (\methodabbrev) & $\norm{\Delta\text{mean}}$
    & $\Var$ (observed, null) post $\to$ \methodabbrev & $\mathrm{MMD}^{2}$ \\
\midrule
$0$ & $(0,0)$     & $(-0.040,0.026)$ & $0.048$ & $(0.052,0.123)\!\to\!(0.071,0.171)$ & $2.7{\times}10^{-3}$ \\
$1$ & $(0.576,0)$ & $(0.553,0.012)$  & $0.026$ & $(0.052,0.123)\!\to\!(0.069,0.160)$ & $1.7{\times}10^{-3}$ \\
\bottomrule
\end{tabular}
\end{table}

\begin{table}[t]
\centering\small
\caption{The square-root-free sampler \eqref{eq:rto} reproduces
$\Ncal(\myv,\tau^{2}\Wbm)$ to Monte-Carlo accuracy at $y=2.5$.}
\label{tab:rto}
\begin{tabular}{lccc}
\toprule
 & $\myv$ & $\tau^{2}\Wbm$ (diag) & error \\
\midrule
closed form   & $[2.4752,\ 0]$      & $[0.0891,\ 9.000]$ & --- \\
RTO \eqref{eq:rto} & $[2.4752,\ 0.0038]$ & $[0.0889,\ 8.995]$ &
  $\norm{\Delta\myv}=3.8{\times}10^{-3}$, $\norm{\Delta\mathrm{cov}}_F=4.8{\times}10^{-3}$ \\
\bottomrule
\end{tabular}
\end{table}

\begin{table}[t]
\centering\small
\caption{Per-direction variance at $y=2.5$ (\eqref{eq:source-spectrum}). The
source over-disperses both directions relative to the posterior; the trained
flow moves both toward it, almost exactly in the null direction and partially in
the observed one.}
\label{tab:calib}
\begin{tabular}{lccc}
\toprule
direction & $\Var$ posterior & $\Var$ one-step \methodabbrev & $\Var$ source \\
\midrule
$\ebm_0$ (observed, $s=1$) & $0.0518$ & $0.0791$ & $0.0891$ \\
$\ebm_1$ (null, $s=0$)     & $6.372$  & $6.656$  & $9.000$ \\
\bottomrule
\end{tabular}
\end{table}

\begin{figure*}[t]
    \centering
    \includegraphics[width=0.75\textwidth]{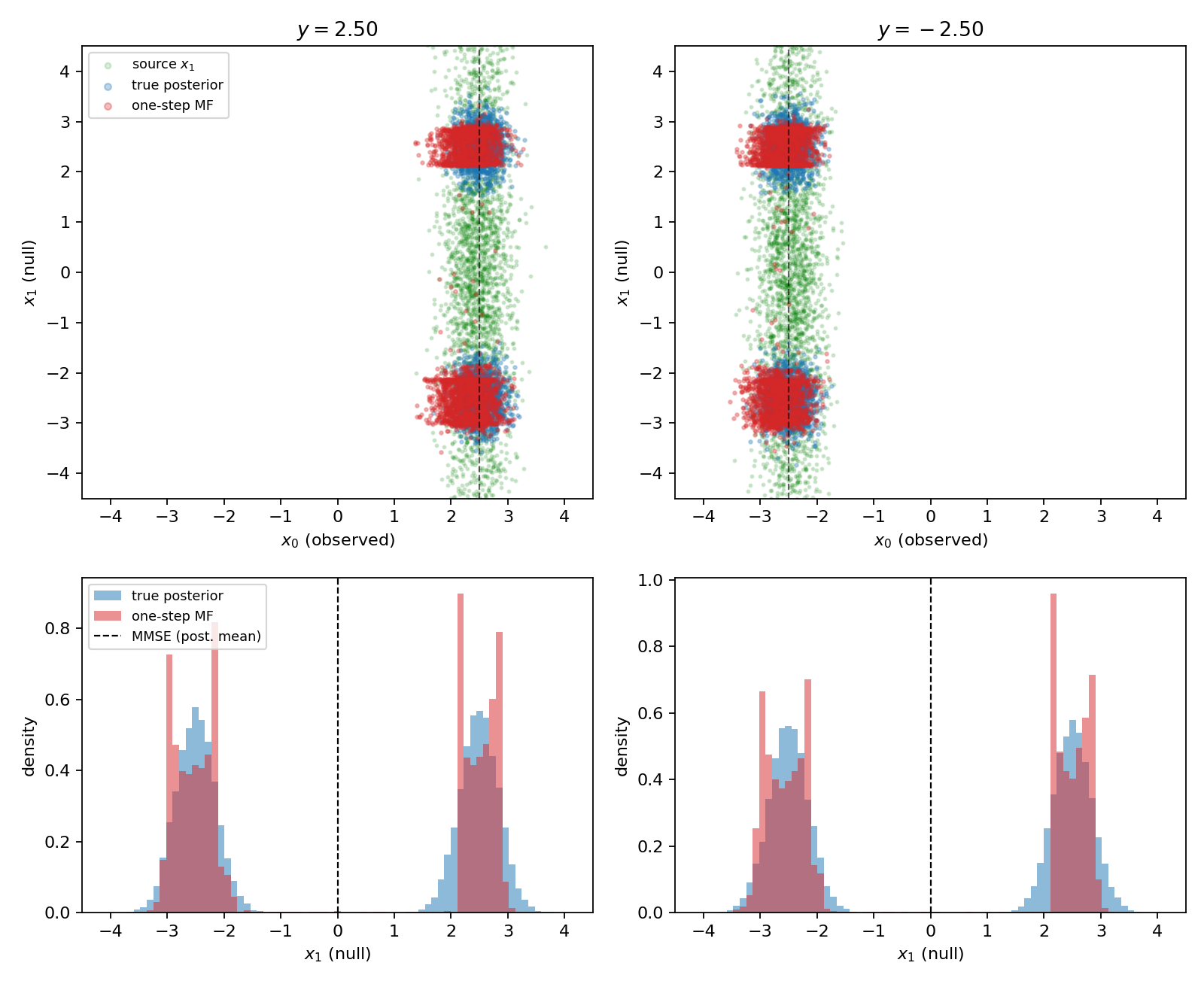}
    \caption{
    Posterior recovery for observations $y=2.5$ (left) and $y=-2.5$ (right).
    The top row compares samples from the source distribution (green), the
    closed-form posterior (blue), and the one-step model (red). Dashed vertical
    lines indicate the observed values. The bottom row shows the corresponding
    marginals along the unobserved coordinate $x_1$; the dashed line marks the
    posterior mean. The one-step model captures both posterior modes, although
    it exhibits some within-mode distortion relative to the exact posterior.
    }
    \label{fig:joint}
\end{figure*}

\section{Implementation Details}
\label{app:implement_details}

\subsection{Datasets and evaluation protocol}
\label{app:datasets}

\paragraph{CelebA.} Official aligned images with the canonical partition of $162{,}770/19{,}867/19{,}962$ for the train/validation/test images, center-cropped to $178{\times}178$ and resized to $128{\times}128$. We report results on the first $100$ images of the test set. 

\paragraph{AFHQ-Cat.} $5{,}153/500/100$ train/validation/test images, resized to $256{\times}256$. AFHQ-Cat has no official validation split. We followed~\cite{martin2025pnpflow} for train/val/test split.

\paragraph{fastMRI brain.} The AXT2 subset, complex multi-coil $k$-space at $320{\times}320$. We split at the volume (patient) level into $615/77/77$ train/val/test volumes, giving $4{,}305/539/539$ slices after discarding the first four and last five slices of every volume, which contain little or no anatomy. Coil handling is full multi-coil SENSE, not RSS or an emulated single-coil reduction. The ground truth is the ESPIRiT coil-combined complex image $\xbm_1 = \sum_i \bar{\Sbm}_i\,\mathrm{crop}(\Fbm^H \kbm_i)$ on a $320{\times}320$
grid, retained as two real channels (real/imaginary) and normalized per slice by $\max|\xbm_1|$; phase is modeled throughout and magnitude is taken only when computing PSNR/SSIM. The forward operator is $\Abm = \Mbm\Fbm\Sbm$ with the same per-slice ESPIRiT maps at train and test time, zero-padded to $20$ coils.
Because the coils couple, the measurement-space Gram $\Abm\Abm^H$ is not diagonal and the posterior-source solve admits no closed form, unlike the inpainting, super-resolution, and deblurring operators, where $\Abm^H\Abm$ is diagonal in the pixel or Fourier basis and the solve costs $O(n)$ or a single FFT pair. We therefore solve it with conjugate gradient in the algebraically equivalent image-space form
$\Abm^H(\Abm\Abm^H+\lambda\Ibm)^{-1} = (\Abm^H\Abm+\lambda\Ibm)^{-1}\Abm^H$, whose spectrum lies in $[\lambda,\,{\approx}1]$ and whose right-hand side already
lies in $\range(\Abm^H)$; the measurement-space form is rank-deficient by the coil count and, at the same iteration budget, returns an anchor worse than the plain adjoint.

\subsection{Solutions of the source construction}
\label{app:sourcesolve}

The source \eqref{eq:rto} requires one application of
$\Kbm = \Abm^{\!\top}(\Abm\Abm^{\!\top}+\lambda\Ibm)^{-1}$ per measurement, with
$\lambda=\sn^{2}/\tau^{2}$. Writing $\Abm=\Ubm\Sigmabm\Vbm^{\!\top}$, the
covariance is $\tau^{2}\Wbm$ with $\Wbm=\Vbm\diag(w_i)\Vbm^{\!\top}$ and
$w_i=\lambda/(s_i^{2}+\lambda)$. We summarise the cases used in this work.

\textbf{Image deblurring.}
For deblurring $\Abm=\Hbm$ is a spatially invariant blur. Under circular boundary conditions $\Hbm=\Fbm^{\!H}\bm \Lambda\Fbm$ with $\bm\Lambda$ the Fourier coefficients $\hat h$ of the kernel, so
$\Hbm\Hbm^{\!H}=\Fbm^{\!H}\abs{\bm\Lambda}^{2}\Fbm$ and
\begin{equation}
\myv = \Fbm^{\!H}\!\left[\frac{\overline{\bm\Lambda}\,\Fbm\ybm}
{\abs{\bm\Lambda}^{2}+\lambda}\right],
\qquad w_i = \frac{\lambda}{\abs{\hat h_i}^{2}+\lambda},
\label{eq:sourcedeblur}
\end{equation}
elementwise in the Fourier domain, at the cost of two FFTs. Because
$\abs{\hat h_i}>0$ for every $i$, no $w_i$ reaches $1$ and no direction is unmeasured; the spectrum instead decays smoothly.

\textbf{Image inpainting.}
For inpainting $\Abm=\Mbm$ is a row-selection matrix that keeps the observed coordinates, so $\Mbm\Mbm^{\!\top}=\Ibm_m$ and the solve is elementwise:
\begin{equation}
(\myv)_i =
\begin{cases}
\dfrac{y_i}{1+\lambda}, & i\in\Omega,\\[6pt]
0, & i\notin\Omega,
\end{cases}
\qquad
w_i =
\begin{cases}
\dfrac{\lambda}{1+\lambda}, & i\in\Omega,\\[6pt]
1, & i\notin\Omega,
\end{cases}
\label{eq:sourceinpaint}
\end{equation}
where $\Omega$ is the set of observed pixels. Unobserved coordinates carry the full working-prior variance $\tau^{2}$, while observed coordinates are shrunk toward the prior mean by the factor $1/(1+\lambda)$ and retain variance $\tau^{2}\lambda/(1+\lambda)$. Box and random inpainting share this
structure and differ only in the geometry of $\Omega$.

\textbf{Image super-resolution.}
We use box-decimated super-resolution by implementing strided subsampling with no anti-alias prefilter,
$\Abm\xbm=\xbm[\ldots,::s,::s]$, so $\Abm\Abm^{\!\top}=\Ibm$, $\Abm^{\!\top}$ is
zero-filled upsampling, and \eqref{eq:sourceinpaint} applies with $\Omega$ the sampled grid locations. 

\textbf{Multi-coil MRI.}
With $\Abm=\Mbm\Fbm\Sbm$ the sensitivity maps $\Sbm$ are not unitary, so
$\Abm\Abm^{\!\top}$ is not diagonal and the solve with $\Abm\Abm^{\!\top}+\lambda\Ibm$ has no closed form. We solve it with conjugate gradients, warm-started at the zero-filled adjoint. Measured iteration counts and cost are reported in Table~\ref{tab:cost}.

\begin{table}[t]
\centering
\small
\caption{Cost of the source construction \eqref{eq:rto}. Larger $\lambda$
(noisier data) improves conditioning and accelerates CG.}
\label{tab:cost}
\begin{tabular}{llll}
\toprule
Problem & $\Abm\Abm^{\!\top}$ structure & Cost & Notes \\
\midrule
Inpainting (random)       & $\Ibm_m$             & free            & $\Wbm$ diagonal \\
Inpainting (box)          & $\Ibm_m$             & free            & identical structure \\
MRI, single-coil          & $\Ibm_m$             & free            & diagonal in $k$-space \\
SR, average-pool          & $d^{-2}\Ibm$         & free            & disjoint rows of norm $1/d$ \\
Deblurring, Gaussian      & diag.\ in Fourier    & 2 FFTs          & $\Abm\Abm^{\!\top}=\abs{\hat h}^{2}$ \\
Deblurring, uniform       & diag.\ in Fourier    & 2 FFTs          & $\hat h$ has exact zeros \\
SR, bicubic               & block-Fourier        & $O(n\log n)$    & polyphase folding identity \\
MRI, multi-coil           & not diagonal         & CG, $10$--$20$ it.\ typical  & standard SENSE \\
\bottomrule
\end{tabular}
\end{table}

\textbf{General linear operators.}
For a general $\Abm$ with no exploitable structure, \eqref{eq:rto} is solved by
conjugate gradients. All operators used in this paper fall under the cases
above. The solve is required once per draw and does not depend on $t$, $r$, or the network.
% , and for a fixed operator the structures above reduce it to a diagonal inverse or a short conjugate-gradient run.
For a fixed operator, the solve can be reduced to a diagonal inverse for diagonal (and Fourier diagonal) forward models or use a short conjugate gradient run for most others.

\subsection{Training and implementation details}
\label{app:train_imp}

\paragraph{Architecture.}
The backbone is a DDPM-style residual U-Net (\texttt{GroupNorm}$_{32}$, Swish,
variance-scaled initialization, zero-initialized output and residual-branch
convolutions) with 
% two additional 
three total
scalar conditioning inputs. The time $t$ is embedded by a sinusoidal positional encoding followed by a two-layer MLP into a
$4\,\mathrm{ch}$-dimensional vector; the second MeanFlow time $r$ and the noise
level $\sn$ each get their own identical embedding, and the three are
\emph{summed} before entering the body, where each residual block adds a
per-channel affine projection of the sum to its first activation. The $\sn$
embedding takes $10\,\sn$ as input so that the small noise levels used here
occupy a useful part of the sinusoidal range. Conditioning on $\ybm$ is by
\emph{concatenation} rather than FiLM, with input
$[\,\zbm_t,\ \mathrm{anchor}]$ of $2C$ channels, where
the anchor is  the adjoint $\Abm^{\!\top}\ybm$.

Configurations: CelebA $128{\times}128$ uses $\mathrm{ch}=64$,
$\mathrm{ch\_mult}=(1,2,2,2)$, two residual blocks per level and attention at
$16^2$ ($9.1$M parameters); AFHQ-Cat $256{\times}256$ uses $\mathrm{ch}=64$,
$\mathrm{ch\_mult}=(1,2,4,4,8,8)$ down to an $8{\times}8$ bottleneck, two
residual blocks, attention at $32^2$ and $16^2$ and dropout $0.1$ ($110.5$M);
fastMRI brain $320{\times}320$ uses $\mathrm{ch}=32$,
$\mathrm{ch\_mult}=(1,2,4,8,8)$, four residual blocks and attention at $40^2$
($40.7$M).
Attention is a hand-written block of $1{\times}1$ convolutional $Q,K,V$
projections, a batched matrix product scaled by $C^{-1/2}$, and a
\texttt{float32} softmax, since \texttt{torch.func.jvp} has no forward-mode rule for the fused scaled-dot-product kernels. The same restriction rules out \texttt{torch.utils.checkpoint}, so activation memory is a hard ceiling and width is added only at the two deepest levels.

\begin{algorithm}[t]
\caption{ \methodabbrev training step.}
\label{alg:train}
\begin{algorithmic}[1]
\State $\xbm_1\sim p$, \quad $\ybm \gets \Abm\xbm_1+\nbm$
   \Comment{training pair, \eqref{eq:fwd}}
\State $t,r \gets \textsc{SampleTimes}()$
   \Comment{$t$ logit-normal; $r=t$ with probability $p_{\mathrm{ratio}}$}
\State $\xbm_0 \gets \textsc{SampleSource}(\ybm,\sn)$
   \Comment{perturb-and-solve, \eqref{eq:rto}}
\State $\zbm_r \gets (1-r)\xbm_0 + r\,\xbm_1$; \quad $\vbm \gets \xbm_1-\xbm_0$
\State $\ubm,\ \dd_r\ubm \gets \textsc{Jvp}\big(\ubm^{\thetabm};\,(\zbm_r,r,t),\,(\vbm,1,0)\big)$
   \Comment{one forward-mode pass}
\State $\ubm_{\mathrm{tgt}} \gets \vbm + (t-r)\,\sg{\dd_r\ubm}$
   \Comment{MeanFlow target, \eqref{eq:mftarget}}
\State $\ell \gets \norm{\ubm-\ubm_{\mathrm{tgt}}}_2^{2}/n$
\State $\Lcal \gets \sg{\omega(\ell)}\,\ell$
   \Comment{weighted objective, \eqref{eq:spring-loss}}
\State $\thetabm \gets \textsc{AdamW}\big(\thetabm,\nabla_{\thetabm}\Lcal\big)$
   \Comment{optimizer step}
\end{algorithmic}
\end{algorithm}

\paragraph{Training.}
Given a clean image $\xbm_1$ we form $\ybm=\Abm\xbm_1+\nbm$, draw the source
$\xbm_0\sim\Ncal(\myv,\tau^2\Wbm)$ by \eqref{eq:rto}, and set
$\zbm_t=(1-t)\xbm_0+t\xbm_1$ and $\vbm=\xbm_1-\xbm_0$. Times are drawn as
$t=\sigma(\mu+s\,\epsilon)$ with $\epsilon\sim\Ncal(0,1)$, $\mu=0$ and $s=1$
(logit-normal on $(0,1)$); with probability $p_{\mathrm{ratio}}=0.5$ we set
$r=t$, and otherwise $r\sim\Ucal(0,t)$. A further $10\%$ of the $r\neq t$ samples, or $5\%$ of the batch, is redrawn at the corner $r\sim\Ucal(0,0.1)$, $t\sim\Ucal(0.9,1)$, because the base schedule reaches the slice at which one-step sampling is scored with probability only $7.3\times10^{-4}$.

The network is an $\xbm_1$-predictor, so
$\ubm(\zbm_r,r,1)=\big(f_{\thetabm}(\zbm_r,r,1)-\zbm_r\big)/(1-r)$. The total
derivative is one forward-mode pass, with the differentiated arguments
$(\zbm,r,t)$ carrying tangent $\vbm$ on $\zbm$, $1$ on $r$ and $0$ on $t$,
while the anchor, the coverage map and $\sn$ are closed over and carry zero
tangent. The regression target is
$\ubm_{\mathrm{tgt}}=\vbm+(t-r)\,\sg{\dd\ubm/\dd r}$ and the loss is \eqref{eq:spring-loss} with $\bm{\Delta}=\ubm^{\thetabm}-\ubm_{\mathrm{tgt}}$, $p=1$ and $c=10^{-3}$. The stop-gradient is on the
weight only, and a second stop-gradient is applied to the derivative term. On CelebA and AFHQ, $\ell$ is first divided by its all-reduced running mean (decay
$0.99$) so that $c$ is dimensionless and the loss shape is stable across $\tau$
and across training; the MRI runs use the unnormalized form. No endpoint
$\norm{\hat\xbm_1-\xbm_1}_2^2$ term is used, since its population minimizer is
the posterior mean and would invalidate every calibration metric.

Optimization is AdamW with weight decay
$10^{-4}$, cosine annealing to $10^{-6}$ over the full epoch budget, no warm-up
and gradient-norm clipping at $1.0$. The learning rate is $2\times10^{-4}$ on
CelebA and AFHQ and $1\times10^{-4}$ on MRI. Everything is trained in
\texttt{float32}, since forward-mode AD is not autocast-safe. The effective
batch is $32$ throughout (CelebA $8\times4$ GPUs; AFHQ $2\times4$ accumulation
steps $\times4$ GPUs; MRI $4\times4$ GPUs). Budgets are $200$ epochs ($50{,}000$ steps) on CelebA, $120$ epochs
($15{,}000$ steps) for AFHQ super-resolution and $80$ epochs ($10{,}000$ steps)
for the remaining AFHQ operators, and $80$ epochs ($43{,}040$ steps) for brain
MRI, where an epoch is a fixed-size draw with replacement rather than a pass
over the data. EMA is implemented but disabled in all reported runs, having
never beaten the raw weights.

\paragraph{Choice of $\tau$.}
We set $\tau^2$ to the mean posterior variance left along the directions the measurement leaves open, estimated offline on $256$ training images as $\tau^2=\langle\bm{d},\Wbm\bm{d}\rangle/\operatorname{tr}\Wbm$, where $\bm{d}=\xbm-\hat\xbm(\ybm)$ is the residual of the linear-MMSE (Wiener) estimate under the empirical image power spectrum.
Since $\Wbm$ depends on $\tau$, we solve this by fixed-point iteration.
The linear estimate is suboptimal, so its residual upper-bounds the posterior variance and $\tau$ errs toward over-dispersion.
This gives $\tau=0.1$ for $\times2$ super-resolution and random inpainting on CelebA, $0.15$ for deblurring and $0.3$ for box inpainting; $\tau=0.1$ for $\times4$ super-resolution, random inpainting and deblurring on AFHQ and $0.5$ for box inpainting; and $\tau=0.25$ for brain MRI.
The ablation in App.~\ref{app:ablations} shows that reconstruction quality changes little across this range. Dataset splits and
preprocessing are given in App.~\ref{app:datasets}.

\paragraph{Source solves.}
The draw \eqref{eq:rto} needs only $\Abm$, $\Abm^{\!\top}$, and one solve with
$(\Abm\Abm^{\!\top}+\lambda\Ibm)^{-1}$, where $\lambda=\sn^2/\tau^2$ is floored at
$10^{-8}$, and for the operators used here that solve is a diagonal inverse, except for multi-coil MRI.
% is available in closed form, except for multi-coil MRI. 

Circular Gaussian deblurring is Fourier-diagonal, with $\Abm\Abm^{\!\top}=|H(\omega)|^2$
and one division per mode. 
Box-decimation super-resolution has
$\Abm\Abm^{\!\top}=\Ibm$ on the low-resolution grid, so the solve is the scalar
division by $1+\lambda$. 
Inpainting has $\Abm\Abm^{\!\top}=\Mbm$ for a $0/1$
mask, giving a pixel-diagonal division by $\Mbm+\lambda$. 
% All three are $O(n)$ or a single FFT pair and add nothing measurable to a training step.
Super-resolution and inpainting are $\mathcal{O}(n)$ and Gaussian deblurring is $\mathcal{O}(n \log n)$ from the FFT pair; none add any measurable time to a training step.

Multi-coil MRI is the only iterative case, and we solve it in image space using
$\Abm^{\!\top}(\Abm\Abm^{\!\top}+\lambda\Ibm)^{-1}=(\Abm^{\!\top}\Abm+\lambda\Ibm)^{-1}\Abm^{\!\top}$,
running conjugate gradient on $\Abm^{\!\top}\Abm+\lambda\Ibm$, whose spectrum
lies in $[\lambda,\approx1]$ and whose right-hand side already lies in
$\mathrm{range}(\Abm^{\!\top})$, with a cap of $200$ iterations and relative-residual tolerance $10^{-6}$, which is typically reached in $10$--$20$ iterations and per-sample reductions so that batched slices do not
share a step size. The measurement-space route is implemented but not used,
since $\Abm\Abm^{\!\top}$ acts on $k$-space while factoring through the image
and is therefore rank-deficient by the coil count on top of the mask; a short CG
there leaves the anchor over-scaled at $6.5$\,dB, worse than the plain adjoint
at $24.4$\,dB, against $26.0$\,dB for the image-space form at equal cost.
Requesting the anchor $\myv$ inside the sampler splits the single solve into two
by linearity, which is free for the diagonal operators and doubles the CG only
for MRI.

\paragraph{Cost.}
All models were trained on four A6000 GPUs with manual data-parallel gradient
all-reduce. Measured end-to-end wall-clock: AFHQ-Cat super-resolution
$6$\,h\,$07$\,m for $120$ epochs; AFHQ random inpainting, box inpainting and
deblurring $4$\,h\,$04$\,m, $4$\,h\,$00$\,m and $4$\,h\,$02$\,m for $80$ epochs
each; CelebA box inpainting $3$\,h\,$29$\,m for $200$ epochs; and brain MRI at
$\times8$ acceleration $19$\,h\,$05$\,m for $80$ epochs. Throughput on the
$256$-pixel AFHQ configuration is $6.6$ images/s/GPU at $26.1$\,GB peak memory
with batch size $2$; the $110.5$M model costs $18\%$ more memory and $21\%$ less
throughput than a $28.0$M baseline, because memory is activation- rather than
parameter-dominated and the added width sits at the $16^2$ and $8^2$ levels. 
At inference an $M$-sample posterior costs $M$ network evaluations and $M$ source draws.
Each draw is $O(n)$ for super-resolution and inpainting, $O(n \log n)$ for Gaussian deblurring, and a CG solve for multi-coil MRI.

\paragraph{Baseline budgets.}
The NFE column of each table counts network evaluations per reported output.
\methodabbrev and the MSE regressor use one evaluation per draw, so an $M$-draw row costs $M$ evaluations.
On natural images PnP-GS uses $23$, DDRM $20$, DiffPIR $100$, OT-ODE $180$ and DPS $1000$; on fastMRI, DiffPIR uses $100$ and CSGM, DPS, DAPS and PnP-DM use $1000$.
The subscript on PnP-Flow1, PnP-Flow5, Flower1-OT and Flower5-OT is $N_{\mathrm{Avg}}$, the number of network evaluations averaged at each time step, so a run of $N$ steps costs $N\cdot N_{\mathrm{Avg}}$ evaluations; we use the values tabulated by \cite{pourya2026flower}.
On CelebA both methods use $N=100$ for every operator, so Tab.~\ref{tab:repro_celeba} reports the five-evaluation variants at $500$ evaluations, and the single-evaluation variant Flower1-OT of Tab.~\ref{tab:fid_celeba} at $100$.
On AFHQ-Cat the step counts depend on the operator: PnP-Flow uses $N=500$ for deblurring and super-resolution, $200$ for random inpainting and $100$ for box inpainting, while Flower uses $N=100$, $500$, $200$ and $100$ on those same four operators.
The budgets behind the NFE range quoted in Tab.~\ref{tab:repro_afhq} are therefore $2500$, $2500$, $1000$ and $500$ evaluations for PnP-Flow5, and $500$, $2500$, $1000$ and $500$ for Flower5-OT.
Every \methodabbrev entry in Tab.~\ref{tab:fid_celeba} is a single draw ($M{=}1$); averaged draws are reported only in the distortion tables and in the ablations of App.~\ref{app:ablations}.

\subsection{Metric details}
\label{app:metrics}

For natural images, PSNR is computed on $[0,1]$ with data range $1$, SSIM on the clamped $[0,1]$ image, and LPIPS with an AlexNet backbone on the clamped $[-1,1]$ image. Images are normalized to $[-1,1]$ for training and mapped back to $[0,1]$ for scoring.
For MRI, PSNR and SSIM are computed on magnitude images, per slice, with the data range set to the maximum intensity of the ground-truth slice.
We do not report LPIPS on MRI: its backbone is trained on RGB natural images and its perceptual judgments do not transfer to grayscale magnitude reconstructions.

\paragraph{Calibration ratio.}
We check whether the spread of a sampler's draws has the right size for a sampler of $p(\xbm\mid\ybm)$.
Let $p_{\mathrm{s}}(\xbm\mid\ybm)$ be the distribution of the draws and $\bm{\mu}_{\mathrm{s}}(\ybm)$ its mean, and let $(\xbm_1,\ybm)$ be a ground-truth image and its measurement.
Define
\begin{align*}
B &\defn \E_{\xbm_1,\ybm}\big[\norm{\bm{\mu}_{\mathrm{s}}(\ybm)-\xbm_1}_2^2\big],
&
V &\defn \E_{\ybm}\big[\operatorname{tr}\operatorname{Cov}_{\mathrm{s}}(\xbm\mid\ybm)\big].
\end{align*}
If $p_{\mathrm{s}}=p$, then $\bm{\mu}_{\mathrm{s}}(\ybm)=\E[\xbm\mid\ybm]$ and
\begin{equation*}
B=\E_{\ybm}\Big[\E_{\xbm_1\mid\ybm}\big[\norm{\E[\xbm\mid\ybm]-\xbm_1}_2^2\big]\Big]=\E_{\ybm}\big[\operatorname{tr}\operatorname{Cov}(\xbm\mid\ybm)\big]=V.
\end{equation*}
Hence $V/B=1$ is a necessary condition for $p_{\mathrm{s}}=p$, and we report the calibration ratio
\begin{equation}
\Delta_{\mathrm{cal}}\defn\frac{V}{B}.
\label{eq:calib-ratio}
\end{equation}
Values below $1$ indicate draws with too little spread, collapsed toward a point estimate, and values above $1$ indicate over-dispersed draws.

\textbf{Estimation.}
For each test pair $(\xbm_1^{(i)},\ybm^{(i)})$, $i=1,\dots,N$, we draw $\sbm_{(1)}(\ybm^{(i)}),\dots,\sbm_{(M)}(\ybm^{(i)})$ with mean $\bar\sbm(\ybm^{(i)})$ and use
\begin{align*}
\hat V &\defn \frac{1}{N(M-1)}\sum_{i=1}^N\sum_{j=1}^M\norm{\sbm_{(j)}(\ybm^{(i)})-\bar\sbm(\ybm^{(i)})}_2^2,
\\
\hat B &\defn \frac{1}{N}\sum_{i=1}^N\norm{\bar\sbm(\ybm^{(i)})-\xbm_1^{(i)}}_2^2-\frac{\hat V}{M},
\end{align*}
which are unbiased for $V$ and $B$; the correction $\hat V/M$ removes the variance that remains in the mean of $M$ draws.
We use $M=100$.
$\Delta_{\mathrm{cal}}$ is related to the spread-skill ratio used in ensemble forecasting, which compares the standard deviation of the ensemble with the root-mean-squared error of its mean and therefore corresponds to $\sqrt{V/B}$.

\paragraph{FID.}
FID is computed with InceptionV3 pool3 features on 2048 single-draw reconstructions, quantised to uint8, against reference statistics from 2048 ground-truth images.

\end{document}